\newcommand{\corr}{(\Letter)}
\definecolor{darkred}{rgb}{0.7,0,0}
\newtheorem{assumption}{Assumption}
\newtheorem{result}{Result}
\begin{document}

\title{Active Preference Optimization for Sample Efficient RLHF}

\titlerunning{Active Preference Optimization for Sample Efficient RLHF}

\author{Nirjhar Das\inst{1} \corr \and Souradip Chakraborty\inst{2} \and Aldo Pacchiano\inst{3} \and Sayak Ray Chowdhury\inst{4}}

\authorrunning{N. Das et al.}

\institute{
Indian Institute of Science, Bangalore, India \email{nirjhardas@iisc.ac.in} \and
University of Maryland, College Park, USA \email{schakra3@umd.edu} \and
Boston University, USA \email{pacchian@bu.edu} \and
Indian Institute of Technology Kanpur, India \email{sayakrc@cse.iitk.ac.in}
}

\maketitle              
\begin{abstract}
  Large Language Models (LLMs) aligned using Reinforcement Learning from Human Feedback (RLHF) have shown remarkable generation abilities in numerous tasks. However, collecting high-quality human preferences creates costly bottlenecks in practical deployments, and hence, training data are often budgeted. In these scenarios, it is crucial to collect training data (e.g., contexts, a pair of generations for each context, and a preference indicating which generation is better) carefully, yet most of the existing methods sample contexts uniformly at random from a given collection. Given this, under the Bradley-Terry-Luce preference model and with a small budget of training data, we show that uniform sampling of contexts could lead to a policy (i.e., an aligned model) that suffers a constant sub-optimality gap from the optimal policy. This highlights the need for an adaptive context sampling strategy for effective alignment under a small sample budget. To address this, we reformulate RLHF within the contextual preference bandit framework, treating generations as actions, and give a nearly complete characterization of the sub-optimality gap in terms of both lower and upper bounds. First, when the action set is a $d$-dimensional hypercube and the number of samples is $T$, we show an $\Omega(d/\sqrt{T})$ lower bound. Next, we propose an algorithm, \emph{Active Preference Optimization} (\texttt{APO}), that iteratively collects preferences for the most uncertain contexts. We show that the sub-optimality gap of the policy learned via \texttt{APO} matches the lower bound up to a log factor and a non-linearity constant. Finally, we perform experiments on practical datasets to validate \texttt{APO}'s efficacy over existing methods, establishing it as a sample-efficient and cost-effective solution for LLM alignment.
\end{abstract}

\section{Introduction}
\label{section:introduction}
Reinforcement Learning from Human Feedback (RLHF) has proven highly effective in aligning Large Language Models (LLMs) with human preferences \cite{christiano2017deep,ouyang2022training,glaese2022improving}. This approach involves collecting extensive data, each comprising a context (e.g., a movie review), a pair of generations (e.g., completions of the review), and a preference indicating which generation is better than the other. First, a reward model is trained to classify preferred generations, and subsequently, a language model policy is trained using RL (e.g., Proximal Policy Optimization~\cite{schulman2017proximal}) to output high-reward generations while minimizing divergence from a reference policy. Given the practical success, recent theoretical advances have been made in training reward models as well as aligning policies from pairwise comparisons~\cite{pacchiano2021dueling,chen2022human,zhu2023principled}. In these settings, the learner doesn't have any control over selecting the contexts, and the aim is to minimize cumulative loss or regret due to not knowing the ground-truth reward or the optimal policy in advance. However, in the case of aligning LLMs using RLHF, the learner has control over both the contexts and actions, i.e., the contexts and generations for which preference data needs to be collected, yet most of the existing RLHF algorithms pick contexts uniformly at random from a given pool~\cite{stiennon2020learning,ouyang2022training}. This is followed by first generating a pair of responses for each sampled prompt based on a supervised fine-tuned (SFT) policy and then sending all the pairs to a human labeler to collect preferences. 

The success of RLHF hinges on the quality of human preferences. This could create costly bottlenecks in practical deployments since high-quality preferences are expensive to collect as this demands a certain level of expertise from labelers. Hence, there is often a budget on the number of contexts and associated generation pairs that could be sent to expert labelers for comparison. While uniform sampling of contexts as a simple approach has been proven effective for aligning LLMs so far, one is bound to ask whether this is a good enough strategy, especially given a fixed budget for labeling. Or do we need potentially more involved sampling strategies to deliver better model alignment under budget constraints? Such algorithms need to be sample efficient as high-quality samples are expensive to obtain, while they should not compromise on the performance of the aligned policy. This work takes a step in developing theory and algorithms for RLHF under a small sample budget.  

\subsection{Overview of Main Results}

We first show that the naive way of collecting preferences by choosing contexts uniformly at random can lead to wastage of samples under the Bradley-Terry-Luce (BTL) preference model characterized by a finite dimensional parameter~\cite{bradley1952rank,luce2012individual}. 
\begin{result}[Constant sub-optimality under uniform context sampling] There exists an instance of the alignment problem for which an algorithm that (i) collects preferences by sampling contexts uniformly at random, (ii) learns a reward model by maximizing likelihood of the preferred generations, and (iii) trains a greedy policy w.r.t. the learnt reward model suffers an $\Omega(1)$ sub-optimality gap\footnote{Measured by the maximum difference between latent rewards of the optimal policy and the trained policy over the context set.} with high probability when the sample budget is smaller than number of contexts.
\end{result}
To the best of our knowledge, this is the first provable negative result for the alignment performance of RLHF algorithms that sample contexts uniformly under a small sample budget. This necessitates designing of RLHF algorithms that adaptively sample contexts, with an aim to improve alignment of the learned policy with human preferences. Our next result benchmarks the alignment performance of any RLHF algorithm under the BTL preference model when there is no restriction on how contexts can be sampled. 

\begin{result}[Lower bound on sub-optimality for any sampling strategy]
For any RLHF algorithm there exists an instance of the alignment problem for which the policy that the algorithm outputs after collecting $T$ samples (contexts, generations and preferences) would suffer a sub-optimality gap $\Omega\big(d/\sqrt{T}\big)$, where $d$ is the dimension of the parameter characterizing the BTL model.
\end{result}
This is the first theoretical lower bound on the alignment performance of RLHF algorithms, which has been crucially missing in the literature. This result effectively eliminates the possibility of achieving better than sub-linear convergence under a finite sample budget. Next, we propose an algorithm -- \emph{Active Preference Optimization} (\texttt{APO}) --
that achieves this sub-optimality gap by iteratively sampling the most uncertain contexts and collecting preferences for their generation pairs.
\begin{result}[Upper bound on sub-optimality for \texttt{APO}]
The sub-optimality gap of the policy learned via \texttt{APO} after collecting $T$ samples scales as $\tilde{O}\big(d\sqrt{\kappa/T}\big)$ with high probability, where $d$ is the parameter dimension and $\kappa$ is a problem-dependent nonlinearity constant.
\end{result}

Next, we generalize our result from parameterized BTL model to non-parametric preference models with function approximation. We propose an analogue of \texttt{APO} albeit with non-trivial modifications, namely $\texttt{APO-Gen}$, that achieves a similar sub-optimality gap (see Subsection~\ref{subsection:general-function-approximation} and Appendix~\ref{section:general-function-approximation}).
This is the first known upper bound on the alignment performance of an active context selection strategy under generic preference models, which recovers the result for the parameterized BTL model as a special case.
\begin{result}[Upper bound on sub-optimality under general preferences]
The sub-optimality gap of the policy learned via \texttt{APO-Gen} after collecting $T$ samples scales as $\tilde{O}\big(\sqrt{d_\cE\log(\cN T)/T}\big)$ with high probability, where $\cN$ and $d_\cE$ measure the complexity of the underlying preference model.
\end{result}

\textbf{Empirical evidence.} For practical purposes, we propose a batch version of \texttt{APO} to make it computationally more efficient (see Section~\ref{sec:exps}). We experiment with GPT-2 on IMDb sentiment dataset~\cite{imdb_dataset} and demonstrate significant improvement in LLM alignment over uniform context sampling and prior baselines~\cite{mehta2023sample,muldrew2024active}. We show similar improvement in the performance of the aligned policy on Anthropic-HH dataset~\cite{anthropic_dataset} with Gemma-2b. Our work contributes towards a sample-efficient and practical solution to preference data collection for RLHF.

\subsection{Comparison with Prior Work}
Active learning in the context of Preference-based Reinforcement Learning (PbRL)~\cite{pacchiano2021dueling,chen2022human}, which is used as a theoretical framework for RLHF, has received some attention recently. In PbRL literature, the problem of learning the reward function by actively querying the human labeller has been considered in~\cite{Sadigh2017ActivePL,carvalho2024deep,pmlr-v139-lee21i,ji2024reinforcement,Zhan2023HowTQ}. The work~\cite{pmlr-v139-lee21i} provides variance and entropy-based heuristics for learning the optimal policy without providing any provable guarantee. On the other hand, in~\cite{Sadigh2017ActivePL}, the authors design an algorithm for learning the reward function by actively synthesizing trajectories via a volume removal scheme over the distribution of the unknown parameter. They show that under certain strong assumptions, their algorithm makes progress in reducing the uncertainty over the parameter distribution. Similarly,~\cite{carvalho2024deep} also aims at actively learning the reward function using model epistemic uncertainty as well as the entropy estimate of acquiring a data point but does not provide any provable guarantee. Hence, both of these are orthogonal to our work since they consider learning the reward function, whereas we focus on learning a policy - while learning a good reward function is sufficient for learning a good policy, it is not at all necessary.

In~\cite{Zhan2023HowTQ}, the authors propose a pure exploration strategy for the PbRL problem that finds an $\varepsilon$-optimal policy with $O(\kappa^2 d^2/\varepsilon^2)$ queries to the human labeler.

Instead of modeling RLHF as a PbRL problem, we model it as a contextual dueling bandit problem, where contexts model prompts and actions model generations of an LLM. This necessitates a strategy for not only actively selecting actions but also selecting contexts actively. This is a major point of departure from most active-learning based PbRL works and from pure exploration in dueling bandits literature~\cite{even-dar2006action,pmlr-v238-maiti24a}. It is unclear apriori what should be the optimal strategy to select the context, towards which we show that a design-matrix-based exploration bonus is sufficient (see Section~\ref{section:logistic-bandit}). Moreover, a direct comparison shows that our result (Theorem~\ref{theorem:logistic-case-regret-bound}) is tighter in terms of $\kappa$. In~\cite{ji2024reinforcement}, the authors consider an active learning-based approach to regret minimization in the contextual dueling bandits. Again,~\cite{ji2024reinforcement} does not address the question of context selection but rather allows contexts to be adversarially presented to the learner, who only chooses action given the context and whether to observe the labeler feedback.

Existing works closest to ours are~\cite{muldrew2024active,mehta2023sample}, which also investigate the problem of actively selecting prompts and generations for RLHF in LLMs. \cite{muldrew2024active} proposes an algorithm that actively selects contexts using a heuristic based on generation uncertainty, but they do not give any theoretical guarantee for the proposed method. \cite{mehta2023sample} proves a sub-optimality gap bound for an active context selection strategy that goes down sub-linearly with the number of samples. However, they assume a strong restrictive condition on the preference model, which \emph{doesn't hold in general} for the BTL model. We remove this restrictive assumption and provide an improved guarantee (see Remark~\ref{remark:comparison-with-mehta} for a detailed comparison).

\section{Problem Setup}
\label{sec:preliminaries}
We have a set of contexts $\cX$ and a set of possible actions per context $\cA$. To learn using preference feedback, the agent selects $x \in \cX$ and $a,a' \in \cA$ to present to a human labeller, who then reveals a binary preference $y$ that takes value $1$ if $a$ wins over $a'$ and $0$ otherwise. We assume that given $(x,a,a')$, $y$ is sampled from the Bradley-Terry-Luce (BTL) preference model~\cite{bradley1952rank,luce2012individual} with $r^*$ as the latent (unknown) reward function, i.e.,
\begin{align*}
\mathbb{P}\left[y\!=\!1 |x, a, a';r^*\right] = \frac{\exp(r^*(x,a))}{\exp(r^*(x,a)) + \exp(r^*(x,a'))}~,
\end{align*}
The goal of the agent is to first learn $r^*$ over 
$T$ rounds of sequential interaction with the labeller, collecting dataset $\cD=(x_s,a_s,a'_s,y_s)_{s=1}^{T}$, and then employ the learned reward to train a policy $\pi:\cX \to \cA$, which will eventually fetch high latent rewards $r^*(x,\pi(x))$.

In this work, we consider linear latent rewards $r^*(x,a) = \phi(x,a)^\top\theta^*$, where $\theta^* \in \RR^d$ is the unknown reward parameter, and $\phi: \cX \times \cA \to \RR^d$ is some known and fixed feature map. For instance, such a $\phi$ can be constructed by
removing the last layer of a pre-trained language model, and in that case, $\theta^*$ corresponds to the weights of the last layer. With this model, for any $\theta \in \RR^d$, one can equivalently write the probability of sampling $y_s = 1$ given $(x_s, a_s, a'_s)$ as 
\begin{align*}
\mathbb{P}\left[ y_s\!=\!1 |x_s, a_s, a'_s;\theta\right] = \sigma\big((\phi(x_s,a_s) \!-\! \phi(x_s,a'_s))^\top \theta \big) = \sigma(z_s^\top \theta)~,
\end{align*}
where $\sigma(w) \!=\! \frac{1}{1+ e^{-w}}$ is the sigmoid function and $z_s = \phi(x_s,a_s)- \phi(x_s,a'_s)$ is the feature differenc of actions $a_s$ and $a'_s$ for context $x_s$.

With this, the latent reward parameter $\theta^*$ is typically estimated by minimizing the binary cross entropy loss (log-loss) \cite{ouyang2022training}, which is equivalent to \emph{maximum likelihood estimation} (MLE). Specifically, At round $t$, the MLE of $\theta^*$ is computed as $\widehat{\theta}_t = \argmin\nolimits_{\theta \in \Theta} \cL_t(\theta)$ using the preference dataset $\{(x_s,a_s,a'_s, y_s)\}_{s=1}^{t-1}$, where log-loss $\cL_t(\theta)$ is given by
\begin{align}\label{eq:log-loss}
\cL_t(\theta) =-\sum\nolimits_{s=1}^{t-1} y_s \log(\sigma(z_s^\top \theta)) + (1-y_s)\log(1-\sigma(z_s^\top\theta)).
\end{align}
The above optimization problem is convex if we let the constraint set $\Theta \subset \mathbb{R}^d$ to be convex, and hence can be solved using standard algorithms~\cite{hazan2016introduction}.

\textbf{Performance Measure.} Our goal is to learn a policy over the collected data $\cD$, which has high rewards or, equivalently, low sub-optimality. Formally, the sub-optimality gap of a policy $\pi_T$ trained on the dataset $\cD$ is defined as
\begin{equation}
\label{eq:simple-regret-definition}
   \!\! R(T) \!=\! \max\nolimits_{x \in \cX} \max\nolimits_{a \in \cA} \left\lbrace r^*(x,a) \!-\! r^*(x,\pi_T(x))\right\rbrace.
\end{equation}
Here, our policy $\pi_T$ competes with the \emph{Condorcet} winner for a given context -- an action that fetches a higher reward than all other actions. The sub-optimality gap is the worst possible difference in rewards over the set of contexts. Prior work \cite{mehta2023sample} competes against the \emph{Borda} winner -- an action that fetches a higher reward on average than 
another randomly chosen action -- a weaker competitor (the \emph{Condorcet} winner is also the \emph{Borda} winner but not the other way around).  
\begin{remark}
To the best of our knowledge, common practical implementations of the RLHF pipeline use the following method: (i) remove the top layer of the LLM and convert it into an encoder, (ii) append a new linear layer on top of it, and (iii) output the logit score. Hence, the linear reward assumption is not restrictive in the sense that we only train the linear layer, keeping the encoder fixed. In practice, however, one needs to train the encoder, and hence, a more general function class needs to be considered. To this end, we generalize this setup to preference models with bounded class complexities, removing the need for explicit linear reward models (see Section~\ref{section:logistic-bandit} and Appendix~\ref{section:general-function-approximation} for details).   
\end{remark}
\section{Lower Bounds}
\label{sec:lower-bound-instance}

We first illustrate the pitfall of a learner who samples contexts uniformly. We characterize such a learner in preference-based learning/RLHF and then show that such a learner can suffer a constant sub-optimality gap under budget constraints.

\begin{definition}[Uniform Learner]
\label{def:uniform-learner}
 Say an algorithm \texttt{Alg} samples $T$ contexts uniformly at random from a set $\cX$ and for each context $x_t$, picks two actions $a_t,a'_t$ from a set $\cA$. For each chosen triplet $(x_t,a_t,a'_t)$, \texttt{Alg} queries the preference model parameterized by $\theta^*$ and observes a stochastic preference $y_t \!\in\! \lbrace 0,1\rbrace$ between the actions. \texttt{Alg} then solves an MLE on this data to obtain $\widehat{\theta}$, and learns a greedy policy with respect to $\widehat{\theta}$. We call such an algorithm \texttt{Alg} a Uniform Learner.
\end{definition}

\begin{theorem}[Lower bound for uniform context sampling]
\label{theorem:lower-bound}
     There exists a problem instance $(\cX, \cA, \theta^*)$ for which the policy learnt by a Uniform Learner \texttt{Alg} under the budget $T \ll \lvert \cX \rvert$ suffers $\Omega(1)$ sub-optimality gap with high probability.
\end{theorem}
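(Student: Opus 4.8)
The plan is to build an instance in which each context carries its own private feature direction, so that querying one context reveals nothing about any other, and then exploit that a budget $T \ll |\cX|$ forces most contexts to go unqueried. Concretely, I would take $d = |\cX|$, a shared two-action set $\cA = \{a^+, a^-\}$, and the feature map $\phi(x_i, a^+) = e_i$, $\phi(x_i, a^-) = -e_i$, where $e_1, \ldots, e_d$ is the standard basis. For a sign pattern $\xi \in \{-1,+1\}^d$ I set $\theta^* = \Delta\,\xi$ for a constant $\Delta$; then the Condorcet winner at $x_i$ is $a^+$ iff $\xi_i = +1$, and choosing the wrong action costs exactly $2\Delta = \Omega(1)$ in latent reward. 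Since $R(T)$ is a maximum over contexts, it suffices to show that the learner misranks the two actions of at least one context with high probability.

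The core is an information-theoretic (two-point / Assouad-style) argument, carried out with $\xi$ drawn uniformly from $\{-1,+1\}^d$. Because each feature difference $z_s$ is supported on a single coordinate $e_{i(s)}$, the preference likelihood $\sigma(z_s^\top\theta^*) = \sigma(2\Delta\,\xi_{i(s)})$ for a queried context $x_{i(s)}$ depends on $\theta^*$ only through $\xi_{i(s)}$. Consequently the entire collected dataset $\cD$ depends on $\xi$ only through the coordinates lying in the (random) set $S$ of queried contexts, and $|S| \le T$ since at most $T$ contexts are ever touched. I would then condition on $S$, on $\cD$, and on the learner's internal randomness: given all of this the greedy choice $\pi_T(x_i)$ at every unqueried $x_i$ is fixed, while the signs $\{\xi_i\}_{i \notin S}$ remain i.i.d.\ uniform. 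Hence at each unqueried context the learner is wrong independently with probability exactly $1/2$, so it is correct on all $\ge d - T$ of them with probability at most $2^{-(d-T)}$. This yields $R(T) \ge 2\Delta = \Omega(1)$ with probability at least $1 - 2^{-(d-T)}$ over both $\xi$ and $\cD$; averaging over $\xi$ then extracts a single hard sign pattern $\xi^\star$, and since $T \ll |\cX| = d$ the failure probability $2^{-(d-T)}$ is negligible, giving the claimed high-probability bound for the fixed instance $\theta^* = \Delta\,\xi^\star$.

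The step I expect to be the main obstacle is the passage from a per-context error probability of $1/2$ to a high-probability guarantee: it hinges on choosing the conditioning correctly so that the unqueried signs stay \emph{fresh} and mutually independent after the data are revealed, which in turn relies on the single-coordinate support of the $z_s$ to fully decouple the coordinates of $\theta^*$. A pleasant consequence of this design is that one never has to analyze the MLE or its tie-breaking --- the bound holds for \emph{any} (randomized) policy that is a measurable function of the observed preferences, of which MLE-plus-greedy is a special case. The only bookkeeping to watch is the normalization: the per-action rewards are $\pm\Delta = \Theta(1)$ while $\|\theta^*\|_2 = \Delta\sqrt{d}$ grows with $d$; if one insists on a dimension-free bound on $\|\theta^*\|$, I would instead concentrate the hardness in a few private directions and argue that a uniform learner fails to query them with probability tending to $1$ whenever $T \ll |\cX|$, at the cost of reintroducing a mild dependence on the MLE's default behaviour on unqueried coordinates.
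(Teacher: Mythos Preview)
Your proposal is correct and proves the theorem as stated, but it follows a genuinely different route from the paper. The paper builds a \emph{two-dimensional} instance ($d=2$): contexts are split into a large ``good'' set all sharing the feature difference $z_g=(1,0)^\top$ and a single ``bad'' context with $z_b=(-1/2,\sqrt{3}/2)^\top$, and $\theta^*$ is taken along the angle bisector (scaled so that $\sigma(z_g^\top\theta^*)=1-1/N$). Under $T\ll N$ the uniform learner with high probability sees only good contexts with unanimous preferences $y_t=1$; the paper then \emph{explicitly computes} the norm-constrained MLE to be $\hat\theta=(\alpha,0)^\top$ and checks that $z_b^\top\hat\theta<0$, so greedy misranks the bad context. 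Your construction instead blows the dimension up to $d=|\cX|$ with one-hot feature differences and runs an Assouad/two-point argument: unqueried coordinates of $\theta^*$ stay fresh, so \emph{any} decision rule errs on at least one context with probability $1-2^{-(d-T)}$.

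What each approach buys: the paper's construction shows the failure already in fixed dimension $d=2$---the hardness is geometric, not a high-dimensional artifact---but it is tied specifically to the MLE-plus-greedy learner and to the event that all observed labels agree. Your argument is dimension-profligate but more robust and conceptually cleaner: it applies to any measurable policy built from uniformly sampled data, sidestepping the MLE analysis and any tie-breaking questions entirely. Your closing caveat---that enforcing a dimension-free bound on $\|\theta^*\|$ would force you to concentrate the hardness in a few directions and reason about the MLE's default on unqueried ones---is essentially the regime in which the paper's $d=2$ construction operates.
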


\begin{proof}[Sketch]
    We show the result for $d=2$ for simplicity. The main idea is to divide the set of contexts into two groups---\textit{good} and \textit{bad}. Further, every context has only two actions, $a$ and $a'$. The \textit{good} group has a large number of contexts, so the uniform learner mostly samples contexts from this group. For all context $x$ in the \textit{good} group, the feature difference $\phi(x,a)-\phi(x,a') = z_g$ is the same. For \textit{bad} contexts, this feature difference is $z_b$, such that $\langle z_b, z_g \rangle < 0$. Finally, $\theta^*$ is taken as the angle-bisector of $z_g$ and $z_b$. 
     
    From this construction (Fig.~\ref{fig:lower-bound-instance} in Appendix~\ref{appendix:uniform-lower-bound}), we have that $a$ gets a higher reward for every context. Then, we show that the uniform learner only samples context-actions corresponding to $z_g$ when the sample budget $T$ is much smaller than the number of contexts. Under this scenario, the MLE estimate of the uniform learner correctly classifies the reward of $a$ to be higher than that of $a'$ for all the \textit{good} contexts, but it wrongly classifies the rewards for the \textit{bad} contexts. Thus, for \textit{bad} contexts, the uniform learner suffers a constant suboptimality gap. Details are in Appendix~\ref{appendix:uniform-lower-bound}.\hfill$\qed$
\end{proof}

Theorem~\ref{theorem:lower-bound} effectively shows that the uniform learner cannot make efficient use of the sampling budget because its performance may not increase with an increasing budget. Now, it is essential to characterize the limits of learning in this setting. To this end, we prove a lower bound on the sub-optimality gap of any algorithm with no restriction on how contexts and action pairs can be sampled. Note that standard regret lower bounds for dueling~\cite{saha2021optimal} and logistic bandits~\cite{abeille2021instance} are not applicable here because we bound the sub-optimality gap and not regret.
\begin{theorem}[Lower Bound for any sampling strategy]
\label{theorem:active-learning-lower-bound}
Let $\cX$ be a finite set of contexts, $\Theta = \{-\frac{1}{\sqrt{T}},\frac{1}{\sqrt{T}}\}^d $, $\cA = \{-\frac{1}{2}, \frac{1}{2}\}^d$. Then, for any algorithm, there exists a parameter $\theta^* \!\in\! \Theta$ such that sub-optimality gap of a policy learnt by the algorithm after collecting $T$ samples satisfies\footnote{Expectation is over the randomness of $(x_1,a_1,a'_1,y_1,\ldots,y_T)$ under hypothesis $\theta^*$.}
\begin{align*}
\mathbb{E}_{\theta^*}\left[R(T) \right] \ge \Omega\left({d}/{\sqrt{T}}\right)~.
\end{align*}
\end{theorem}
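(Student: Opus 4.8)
The plan is to prove this by an Assouad-type information argument over the $2^d$ hypotheses that $\Theta$ carves out. First I would fix the feature map to be the identity on actions, $\phi(x,a)=a$ (context-independent), so that $r^*(x,a)=a^\top\theta^*$ for every context; the finiteness of $\cX$ then plays no essential role, and it suffices to argue about a single context. Index the hypotheses by $\epsilon\in\{-1,+1\}^d$ through $\theta_\epsilon=\tfrac{1}{\sqrt T}\epsilon$, and let $P_\epsilon$ denote the law of the entire interaction $(x_1,a_1,a'_1,y_1,\dots,y_T)$ together with the learner's output under $\theta^*=\theta_\epsilon$. Since the reward is separable across coordinates, the optimal action is $\tfrac12\epsilon$ with value $\tfrac{d}{2\sqrt T}$, and for any played action $a$ one checks
\[
r^*(x,\tfrac12\epsilon)-r^*(x,a)=\tfrac{1}{\sqrt T}\sum_{j=1}^d \mathds{1}\big[\mathrm{sign}(a_j)\neq\epsilon_j\big].
\]
Hence, writing $\hat\epsilon_j=\mathrm{sign}(\pi_T(x)_j)$, we obtain the coordinatewise lower bound $R(T)\ge \tfrac{1}{\sqrt T}\,\mathrm{Ham}(\hat\epsilon,\epsilon)$, which is exactly the Hamming-separated structure that Assouad's lemma consumes, with per-coordinate separation $\tau=1/\sqrt T$.

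The core of the argument is bounding the information any algorithm can extract about a single coordinate. Flipping the $j$-th sign of $\epsilon$ changes the logit of query $s$ by $z_{s,j}\cdot\tfrac{2}{\sqrt T}$, where $z_s=a_s-a'_s\in\{-1,0,1\}^d$. Using the divergence-decomposition lemma for adaptive protocols (the chain rule of KL across the interaction, in which the learner's sampling rule is common to both hypotheses and cancels), together with the uniform bound $\mathrm{KL}(\mathrm{Ber}(\sigma(u))\,\|\,\mathrm{Ber}(\sigma(u+\delta)))\le C\delta^2$ valid for all $u$ and $|\delta|\le 1$ (a consequence of $\sigma'\le\tfrac14$ and the self-concordance of the logistic link), each flip contributes at most $C z_{s,j}^2/T$ per round. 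Summing over rounds and coordinates and invoking the deterministic bound $\|z_s\|^2=\sum_j z_{s,j}^2\le d$,
\[
\sum_{j=1}^d \mathrm{KL}(P_{\epsilon}\,\|\,P_{\epsilon^{(j)}})\;\le\;\frac{C}{T}\,\mathbb{E}_{\epsilon}\Big[\textstyle\sum_{s=1}^T\|z_s\|^2\Big]\;\le\;\frac{C}{T}\cdot Td\;=\;Cd,
\]
where $\epsilon^{(j)}$ is $\epsilon$ with its $j$-th sign flipped. The crucial point is that this total information budget is $O(d)$ \emph{regardless} of how adaptively the algorithm places its queries, precisely because $\|z_s\|^2\le d$ holds for every realizable query.

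Finally I would combine the two pieces through the averaged form of Assouad's lemma. Averaging the per-coordinate two-point bound over $\epsilon$ and applying the Bretagnolle--Huber inequality $1-\mathrm{TV}(P,Q)\ge\tfrac12 e^{-\mathrm{KL}(P\|Q)}$ gives
\[
\max_\epsilon \mathbb{E}_\epsilon[R(T)]\;\ge\;\frac{1}{\sqrt T}\cdot\frac12\sum_{j=1}^d\overline{\big(1-\mathrm{TV}(P_\epsilon,P_{\epsilon^{(j)}})\big)}\;\ge\;\frac{1}{4\sqrt T}\sum_{j=1}^d \overline{e^{-\mathrm{KL}(P_\epsilon\|P_{\epsilon^{(j)}})}},
\]
where the bar denotes the average over neighboring pairs. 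Since the averaged exponents satisfy $\tfrac1d\sum_j \overline{\mathrm{KL}_j}\le C$ by the previous display, two applications of Jensen's inequality for the convex map $x\mapsto e^{-x}$ yield $\sum_j \overline{e^{-\mathrm{KL}_j}}\ge d\,e^{-C}$, so the right-hand side is at least $\tfrac{d}{4\sqrt T}e^{-C}=\Omega(d/\sqrt T)$, as claimed.

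I expect the delicate part to be the information bound under adaptivity: making the divergence-decomposition step fully rigorous when the queries $z_s$ are produced by an arbitrary adaptive (possibly randomized) algorithm, and securing the uniform per-round estimate $\mathrm{KL}\le C z_{s,j}^2/T$ for the logistic link so that large logits—which arise when a query differs in many coordinates at once—do not inflate the extracted information. The deterministic inequality $\|z_s\|^2\le d$ is what ultimately tames the adaptivity, collapsing the total information to $O(d)$ and thereby matching the $\Omega(d/\sqrt T)$ target. A secondary subtlety is that one must use the averaged Bretagnolle--Huber form of Assouad rather than the plain minimum-over-neighbors version, since individual coordinates may carry $\Omega(1)$ information and a coordinatewise Pinsker bound could be vacuous.
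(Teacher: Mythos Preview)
Your proposal is correct and follows essentially the same Assouad-type argument as the paper: decompose the sub-optimality into per-coordinate sign errors, control $\mathrm{KL}(P_\epsilon\,\|\,P_{\epsilon^{(j)}})$ via the divergence-decomposition lemma and $\dot\sigma\le\tfrac14$, apply Bretagnolle--Huber, and average over $\Theta$ to extract a worst-case $\theta^*$. The one place where you take a longer route than necessary is the information-counting step. Because $|z_{t,j}|\le 1$ holds for every round and every coordinate, the paper observes that each single-coordinate divergence already satisfies
\[
\mathrm{KL}\big(P_\epsilon\,\|\,P_{\epsilon^{(j)}}\big)\;\le\;\tfrac{1}{8}\sum_{t=1}^T (2z_{t,j}/\sqrt{T})^2\;\le\;\tfrac{1}{2}
\]
uniformly in $j$ and $\epsilon$, not merely on average. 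Thus every coordinatewise Bretagnolle--Huber bound is already non-vacuous (it yields $\tfrac12 e^{-1/2}$ directly), so neither the averaged form of Assouad nor the Jensen step is required; your ``secondary subtlety'' about individual coordinates potentially carrying too much information does not arise in this instance. Your detour through the total budget $\sum_j \mathrm{KL}_j\le Cd$ together with Jensen is correct and would be the right tool in problems where per-coordinate KL is not uniformly bounded, but here the paper's simpler per-coordinate bound suffices.
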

\begin{proof}[Sketch]
    Without loss of generality, choose any $x \in \cX$. Let $\pi_T(x)$ denote the action chosen by the policy learnt using $T$ samples. Define the event $
        \cE_{\theta,i} =
        \{\text{sign}(\pi_{T,i}(x)) \neq \text{sign}(\theta_i)\}
    $, for all $i\in [d]$,
    where $\pi_{T,i}(x)$ and $\theta_i$ are the $i$-th coordinates of $\pi_T(x)$ and the parameter $\theta \in \Theta$, respectively. Note that under the event $\cE_{\theta,i}$, the algorithm suffers a sub-optimality of $\frac{1}{\sqrt{T}}$ for the $i$-th coordinate. We need to lower bound the probability of this event. To this end, let $\theta' \in \Theta$ be such that $\theta'_i = - \theta_i$ and $\theta'_j = \theta_j$ for $j \neq i$. Note that $\cE_{\theta,i}^c = \cE_{\theta',i}$. Therefore, from Lemma~\ref{lemma:bretagnolle-huber-inequality}, we have,
    $
    \PP_\theta[\cE_{\theta,i}] + \PP_{\theta'}[\cE_{\theta',i}] \geq \frac{1}{2} \exp(-D_{KL}(\PP_\theta, \PP_{\theta'}))
    $.

    Next, we need an upper bound on $D_{KL}(\PP_\theta, \PP_{\theta'})$. Using~\cite[Lemma 15.1]{lattimore2020bandit} and Taylor expansion of log-sigmoid, we obtain $D_{KL}(\PP_\theta, \PP_{\theta'}) \leq \frac{1}{8} \EE_\theta \left[ \sum\nolimits_{t=1}^T \langle z_t, \theta - \theta' \rangle^2 \right]$,
    where $z_t = a_t - a'_t$. Since $z_t \in [-1,1]^d$, and $\theta$, $\theta'$ are equal in every coordinate except the $i$-th one, we have,
    $D_{KL} (\PP_\theta, \PP_{\theta'}) \leq \frac{1}{8} \sum_{t=1}^T \EE_\theta[(2 z_{t,i} \theta_i)^2] \leq \frac{1}{2}
    $. Hence, $$\frac{1}{\lvert \Theta \rvert} \sum\nolimits_{\theta \in \Theta} \sum\nolimits_{i=1}^d \PP_\theta[\cE_{\theta,i}] \geq \frac{d}{4} \exp(-\frac{1}{2})~.$$ Therefore, there exists a $\theta^* \in \Theta$ such that $\sum_{i=1}^d \PP_{\theta^*}[\cE_{\theta^*,i}] \geq \frac{d}{4} \exp(-\frac{1}{2})$. Finally, it is easy to see that the expected sub-optimality gap is lower bounded by the expected gap for context $x$. Hence, we have the following chain of inequalities:
    \begin{align*}
        \EE_{\theta^*}[R(T)] &\geq \EE_{\theta^*}\left[\sum\nolimits_{i=1}^d \mathds{1}[\cE_{\theta^*,i}]\cdot 2\lvert \theta^*_i \rvert \right] = \frac{2}{\sqrt{T}} \sum\nolimits_{i=1}^d \PP_{\theta^*}[\cE_{\theta^*,i}] \geq \frac{d \exp(-1/2)}{2\sqrt{T}}
    \end{align*}
    which completes the proof. Details are in Appendix~\ref{appendix:active-lower-bound}.\hfill$\qed$
\end{proof}
To the best of our knowledge, Theorem~\ref{theorem:active-learning-lower-bound} gives the first lower bound for general active-learning algorithms, which was missing in prior work~\cite{mehta2023sample,muldrew2024active}. Now, the immediate question is whether one can design an algorithm that learns a policy whose sub-optimality gap matches this lower bound. In the next section, we present an algorithm \emph{Active Preference Optimization} \texttt{(APO)} that achieves this up to log factors and an instance-dependent non-linear factor.
\section{Our Approach: Active Preference Optimization}
\label{section:logistic-bandit}

\begin{algorithm}[tb]
\caption{\texttt{APO}: Active Preference Optimization}
\label{algo:act-con-sel-logB}
\begin{algorithmic}[1]
    \REQUIRE Context set $\cX$, action set $\cA$, feature map $\phi:\cX \times \cA \rightarrow \RR^d$, regularization $\lambda > 0$, and failure probability $\delta \in (0,1]$. Initialize $\widehat{\theta}_1 = 0$.
    \FOR{$t=1,\dots, T$}
        \STATE Choose the triplet $(x_t, a_t, a'_t)$ using~\eqref{eq:choose-a-a'} and~\eqref{eq:choose-x}. \alglinelabel{line:choice-of-triplet-apo-logistic}
        \STATE Observe preference feedback $y_{t} \!\sim\! \texttt{Ber}\big(\sigma(z_t^\top \theta^*) \big)$, where $z_t=\phi(x_t,a_t)-\phi(x_t,a'_t)$.
        \STATE Compute reward estimate $\widehat{\theta}_{t+1}$ that minimizes the constrained log-loss~\eqref{eq:log-loss}.
        \STATE Compute (scaled) design matrix $H_{t+1}(\widehat{\theta}_{t+1})$ via~\eqref{eq:H-t(theta)-definition}.
    \ENDFOR
    \STATE Compute final policy $\pi_T(x)$ using~\eqref{eq:definition-of-final-policy}.
\end{algorithmic}
\end{algorithm} 
At each round $t$, \texttt{APO} (Algorithm~\ref{algo:act-con-sel-logB}) proceeds by computing the MLE estimate $\widehat{\theta}_t$ based on the data obtained in the past $t-1$ steps~\eqref{eq:log-loss}. Based on $\widehat{\theta}_t$, our goal is to maximize exploration. To do  this, for a context $x  \in \cX$, we compute the uncertainty $b_t(x,a,a')$ for each action $(a,a')$ available for that context and choose the one which maximizes this, i.e., we choose the pair
\begin{equation}\label{eq:choose-a-a'}
(a_t(x),a'_t(x)) = \mathrm{argmax}_{(a,a') \in \cA \times \cA} b_t(x,a,a'),
\end{equation}
where $b_t(x,a,a') = \lVert \phi(x,a) - \phi(x,a') \rVert_{H_t^{-1}(\widehat{\theta}_t)}$ and
$H_t(\widehat\theta_t)$ is a matrix that describes a confidence ellipsoid around the unknown reward parameter $\theta^*$ after $t-1$ steps of data collection. For any $\theta \in \Theta$, this is defined as
\begin{equation}\label{eq:H-t(theta)-definition}
    H_t(\theta) \!=\!  \nabla^2 \cL_t(\theta) \!+\! \lambda \Ib_d\!=\!\sum\nolimits_{s=1}^{t-1} \dot{\sigma}(z_s^\top \theta) z_s z_s^\top \!+\! \lambda \Ib_d~,
    \end{equation}
where $z_s = \phi(x_s,a_s) - \phi(x_s,a'_s)$ is the feature difference for the triplet $(x_s,a_s,a'_s)$.
Intuitively, the confidence ellipsoid keeps shrinking along whichever direction (in $\RR^d$) we decide to explore. Thus, for a given context $x$, choosing the pair $(a_t(x),a'_t(x))$ maximally reduces the uncertainty among all other possible action duels.
However, our algorithm picks not only the action pair that maximizes uncertainty but also the context that increases it the most, i.e.,
\begin{align}
    \label{eq:choose-x}
    x_t = \argmax\nolimits_{x \in \cX} b_t(x, a_t(x), a'_t(x))~.
\end{align}
This is a crucial step in our approach that ensures that the uncertainty of the reward function over all contexts decreases at a fast rate, which in turn ensures a low sub-optimality gap. After $T$ rounds, we define $\theta_T = \frac{1}{T}\sum_{s=1}^{T} \widehat{\theta}_t$ as the average of all the past parameter estimates. Our final policy $\pi_T$ for any context $x \in \cX$ is to play the action that maximizes the reward parameterized by $\theta_T$, i.e.,
\begin{align}
\label{eq:definition-of-final-policy}
    \pi_T(x) =\argmax\nolimits_{a \in \cA(x)}\,\widehat r_T(x,a)=\argmax\nolimits_{a \in \cA(x)}\, \phi(x,a)^\top \theta_T~.
\end{align}
\subsection{Suboptimality Gap of \texttt{APO}}\label{sec:theory}

We make the following assumption, which is standard in the literature~\cite{zhu2023principled,chowdhury2024provably}.
\begin{assumption}[Boundedness]
\label{ass:bound}
(a) $\theta^*$ lies in the set $\Theta = \{\theta \in \RR^d | \inner{\mathbf{1}}{\theta} = 0, \norm{\theta} \le S \}$. (b) Features are bounded, i.e., $\norm{\phi(x,a)} \le 1$, $ \forall\ (x,a) \in \cX\times \cA$.
\end{assumption}
The condition $\inner{\mathbf{1}}{\theta} = 0$ ensures identifiability of $\theta^*$. Now, we define a key quantity that captures learning complexity under the BTL preference model:
\begin{align}\label{eq:kappa}
 \kappa = \max_{x\in \mathcal{X}} \max_{a,a' \in \mathcal{A}} \max_{\theta \in \Theta}\,\frac{1}{\dot{\sigma}(\phi(x,a)^\top \theta \!-\! \phi(x,a')^\top \theta)}~.   
\end{align}
This parameter $\kappa$ specifies difficulty in learning via the worst-case non-linearity in preference feedback. Note that we don't need the knowledge of $\kappa$ beforehand. Next, we present the guarantee that our algorithm enjoys.

\begin{theorem}[Sub-optimality gap of \texttt{APO}]
\label{theorem:logistic-case-regret-bound}
    Let $\delta \in (0,1]$. Under Assumption~\ref{ass:bound}, setting $\lambda \!=\! \frac{1}{4S^2(2+2S)^2}$ and $\gamma = O\left(S\sqrt{d\log\left({S T}/{d}\right) + \log\left({T}/{\delta}\right)}\right)$, the policy $\pi_T$ returned by \texttt{APO}, with probability at least $1 - \delta$, enjoys the suboptimality gap 
    \begin{align*}
        R(T) \leq O \left( \gamma \sqrt{S \log\Big(1+({T}/{\lambda \kappa d})\Big){\kappa d}/{T}}\right)~.
    \end{align*}
\end{theorem}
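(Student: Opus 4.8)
The plan is to run an online-to-batch style argument. Since the final policy $\pi_T$ is greedy with respect to the \emph{averaged} estimate $\theta_T = \frac1T\sum_{t=1}^T \widehat\theta_t$, I would bound the sub-optimality at every context by the \emph{time-average} of per-round prediction errors and then control that average by the cumulative exploration bonus collected by \texttt{APO}. Two ingredients drive everything: (i) a self-normalized concentration guarantee for the constrained MLE of the form $\lVert \widehat\theta_t - \theta^* \rVert_{H_t(\widehat\theta_t)} \le \gamma$ holding simultaneously for all $t \le T$ with probability $1-\delta$, matching the stated $\gamma$; and (ii) an elliptical-potential control of $\sum_{t=1}^T \lVert z_t \rVert_{H_t^{-1}(\widehat\theta_t)}$. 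I would establish both and then splice them through the greedy decomposition.

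For the decomposition, fix any context $x$, let $a^\star$ be its Condorcet winner and $\widehat a = \pi_T(x)$. Since $\widehat a$ maximizes $\phi(x,\cdot)^\top\theta_T$, we have $(\phi(x,a^\star)-\phi(x,\widehat a))^\top\theta_T \le 0$, so
\[
r^*(x,a^\star) - r^*(x,\widehat a) = (\phi(x,a^\star)-\phi(x,\widehat a))^\top\theta^* \le (\phi(x,a^\star)-\phi(x,\widehat a))^\top(\theta^*-\theta_T).
\]
Expanding $\theta_T$ as the average, applying Cauchy--Schwarz in the $H_t(\widehat\theta_t)$ geometry, and invoking the confidence bound gives $r^*(x,a^\star)-r^*(x,\widehat a) \le \frac{\gamma}{T}\sum_{t=1}^T b_t(x,a^\star,\widehat a)$. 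The crucial point is the selection rule: because \eqref{eq:choose-a-a'} and \eqref{eq:choose-x} pick $(x_t,a_t,a'_t)$ to maximize the bonus, $b_t(x,a^\star,\widehat a) \le b_t(x_t,a_t,a'_t) = \lVert z_t\rVert_{H_t^{-1}(\widehat\theta_t)}$ for \emph{every} $x$. Taking the max over $x$ yields $R(T) \le \frac{\gamma}{T}\sum_{t=1}^T \lVert z_t\rVert_{H_t^{-1}(\widehat\theta_t)}$, reducing the theorem to a cumulative-bonus bound.

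To bound the cumulative bonus I would strip the $\widehat\theta_t$-dependence of the design matrix using the definition \eqref{eq:kappa} of $\kappa$: since $\dot\sigma(z_s^\top\widehat\theta_t) \ge 1/\kappa$ for $\widehat\theta_t \in \Theta$, we get $H_t(\widehat\theta_t) \succeq \kappa^{-1} V_t$ with the \emph{constant-weight} matrix $V_t = \sum_{s<t} z_s z_s^\top + \lambda\kappa I_d$, hence $\lVert z_t\rVert_{H_t^{-1}(\widehat\theta_t)} \le \sqrt\kappa\,\lVert z_t\rVert_{V_t^{-1}}$. Cauchy--Schwarz over $t$ followed by the elliptical potential lemma (using $\lVert z_t\rVert\le 2$) then gives $\sum_t \lVert z_t\rVert_{V_t^{-1}} \le \sqrt{T}\sqrt{2d\log(1+T/(\lambda\kappa d))}$. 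Combining the three displays yields $R(T) \lesssim \gamma\sqrt{\kappa d\log(1+T/(\lambda\kappa d))/T}$, which matches the claim up to the residual $\sqrt S$ factor.

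The hard part will be ingredient (i): establishing the MLE confidence bound in the \emph{local} Hessian norm $H_t(\widehat\theta_t)$ rather than a fixed-design norm. This requires a self-normalized martingale concentration for the score $\nabla\cL_t(\theta^*)$ together with a self-concordance argument for the log-loss to transfer between $H_t(\theta^*)$, $H_t(\widehat\theta_t)$, and the Hessian along the MLE path; controlling how $\dot\sigma$ varies across the confidence set is exactly where the specific choice $\lambda = \tfrac{1}{4S^2(2+2S)^2}$ enters, and where the extra polynomial dependence on $S$ is incurred. A secondary subtlety is that the weights $\dot\sigma(z_s^\top\widehat\theta_t)$ in $H_t(\widehat\theta_t)$ are re-evaluated at the current estimate rather than being predictable, so the crude $\kappa$-relaxation to $V_t$ is what makes the elliptical potential lemma applicable; a tighter analysis retaining the local weights would be needed to shave the $\sqrt\kappa$ gap to the lower bound of Theorem~\ref{theorem:active-learning-lower-bound}.
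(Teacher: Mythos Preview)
Your proposal is correct and mirrors the paper's proof essentially step for step: the greedy decomposition via the averaged parameter $\theta_T$, Cauchy--Schwarz in the $H_t(\widehat\theta_t)$ geometry, the selection rule to pass to $\lVert z_t\rVert_{H_t^{-1}(\widehat\theta_t)}$, the $\kappa$-relaxation $H_t(\widehat\theta_t)\succeq \kappa^{-1}V_t$, and the elliptical potential lemma are exactly what the paper does. You also correctly isolate ingredient~(i) as the nontrivial step and identify that it hinges on a self-concordance transfer between $H_t(\theta^*)$ and $H_t(\widehat\theta_t)$; the paper handles this via a dedicated lemma (a pointwise lower bound on $\int_0^1(1-v)\dot\sigma(z+v(z'-z))\,dv$ in terms of $\dot\sigma(z')/(2+|z-z'|)^2$) combined with the confidence-set machinery of Lee et al., which is precisely where the choice $\lambda=\tfrac{1}{4S^2(2+2S)^2}$ and the extra $\sqrt{S}$ factor enter.
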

\textbf{Comparison with lower bound and dependence on $\kappa$.} Theorem~\ref{theorem:logistic-case-regret-bound} implies an $\widetilde O \big(d\sqrt{\kappa/T}\big)$ upper bound on the sub-optimality gap of \texttt{APO} policy. This matches the lower bound of Theorem~\ref{theorem:active-learning-lower-bound} in parameter dimension $d$ and number of samples $T$, implying optimal scaling w.r.t. these two terms (up to a log factor). There remains a gap in characterizing the optimal dependence on the non-linearity parameter $\kappa$, which, in the worst-case, can be exponential in the parameter norm $S$. In the logistic bandit literature, the state-of-the-art regret guarantee is (almost) $\kappa$-independent - the dependence is only in terms independent of $T$~\cite{sawarni2024generalized,lee2024improved}.  We believe the $\sqrt{\kappa}$ dependence is unavoidable in the RLHF setting as the sub-optimality gap is w.r.t. real-valued rewards $r^*(x,a)=\phi(x,a)^\top\theta^*$ instead of the sigmoid rewards $\sigma(\phi(x,a)^\top\theta^*)$ in logistic bandits. Given this, we conjecture that it could be possible to improve the lower bound of Theorem~\ref{theorem:active-learning-lower-bound} to $\Omega\big(d\sqrt{\kappa/T}\big)$. We keep this as an interesting future direction.

\begin{proof}[Sketch]
First, we quantify the error in estimating $\theta^*$ in the following lemma. This is obtained by using a novel inequality derived from the self-concordance property of the sigmoid function (i.e., $\lvert \Ddot{\sigma} \rvert \leq \dot{\sigma}$) and adapting the arguments from~\cite{lee2024improved}. Proof of this lemma is deferred to appendix~\ref{appendix:logistic}.
\begin{lemma}[Estimation error at round $t$] 
\label{lemma:confidence-set-logistic}
Let $\delta \in (0,1]$. Under the hypothesis of Theorem~\ref{theorem:logistic-case-regret-bound}, with probability $\geq 1 - \delta$, for some universal constant $C > 0$,
\begin{align*}
  \lVert \theta^* \!-\! \widehat{\theta}_t \rVert_{H_t(\widehat{\theta}_t)} \!\leq\! CS^{\nicefrac{3}{2}} \sqrt{d\log\left(\nicefrac{S t}{d}\right) + \log\left(\nicefrac{t}{\delta}\right)}~,  
\end{align*}
where $\widehat{\theta}_{t}$ is the estimated reward parameter that minimizes the constrained log-loss~\eqref{eq:log-loss} and $H_{t}(\widehat{\theta}_{t})$ is the (scaled) design matrix~\eqref{eq:H-t(theta)-definition} at round $t$.
\end{lemma}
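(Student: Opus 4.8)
The plan is to follow the now-standard ``MLE $\to$ self-normalized martingale'' route for logistic-type models, but to phrase the final deviation in the \emph{local} metric $H_t(\widehat\theta_t)$ so that no global non-linearity factor $\kappa$ appears in the confidence radius. First I would record the first-order optimality condition for the constrained minimizer $\widehat\theta_t$ of $\cL_t$ over the convex set $\Theta$, namely $\inner{\nabla\cL_t(\widehat\theta_t)}{\theta^* - \widehat\theta_t}\ge 0$, and combine it with $\nabla\cL_t(\theta) = \sum_{s=1}^{t-1}(\sigma(z_s^\top\theta) - y_s)z_s$. Introducing the martingale noise $\eta_s = y_s - \sigma(z_s^\top\theta^*)$ (conditionally zero-mean and bounded in $[-1,1]$) and applying the mean value theorem to $\theta\mapsto\sum_s\sigma(z_s^\top\theta)z_s$ along the segment $[\theta^*,\widehat\theta_t]$, I would obtain a relation of the form $\tilde H_t(\widehat\theta_t - \theta^*) = M_t$ up to a lower-order term from the $\lambda\Ib_d$ padding and the KKT slack of the constrained optimum, where $M_t = \sum_{s=1}^{t-1}\eta_s z_s$ and $\tilde H_t = \sum_{s=1}^{t-1}\alpha_s z_s z_s^\top + \lambda\Ib_d$ with $\alpha_s = \int_0^1\dot\sigma\big(z_s^\top(\theta^* + v(\widehat\theta_t - \theta^*))\big)\,dv$ the integrated (secant) curvature. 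Taking the $\tilde H_t$-weighted norm reduces the task to controlling $\norm{M_t}_{\tilde H_t^{-1}}$.

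The second step is the concentration bound on the self-normalized quantity $\norm{M_t}_{\tilde H_t^{-1}}$. The crucial point, adapted from \cite{lee2024improved}, is that the curvature weights $\alpha_s$ appearing in $\tilde H_t$ act as the correct conditional-variance normalization for the increments $\eta_s z_s$; a Bernstein-type self-normalized inequality, together with a covering argument over $\Theta$ to handle the data-dependence of $\tilde H_t$ through $\widehat\theta_t$, then yields a deviation of order $\sqrt{d\log(St/d) + \log(t/\delta)}$ with probability $\ge 1-\delta$ and \emph{no} $\kappa$ factor. This is precisely where expressing everything in the local metric pays off: the variance proxy $\dot\sigma(\cdot)$ inside $\tilde H_t$ cancels the non-linearity that would otherwise inflate the bound.

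The third step converts the guarantee from the secant Hessian $\tilde H_t$ to the estimator-evaluated Hessian $H_t(\widehat\theta_t)$ of~\eqref{eq:H-t(theta)-definition}, the matrix appearing in the statement. Here I would invoke the self-concordance of the sigmoid, $\lvert\ddot\sigma\rvert\le\dot\sigma$, in the form of the inequality $\dot\sigma(w')\ge\dot\sigma(w)e^{-\lvert w - w'\rvert}$ and its integrated analogue, to compare $\alpha_s = \int_0^1\dot\sigma(\cdots)dv$ with $\dot\sigma(z_s^\top\widehat\theta_t)$. Under Assumption~\ref{ass:bound} the exponents are controlled deterministically, since $\norm{\phi(x,a)}\le 1$ gives $\norm{z_s}\le 2$ and $\norm{\theta^*},\norm{\widehat\theta_t}\le S$ give $\lvert z_s^\top(\widehat\theta_t - \theta^*)\rvert\le 4S$; hence $\tilde H_t \succeq c(S)\,H_t(\widehat\theta_t)$ for an explicit $c(S)$, and the segment length of order $S$ is what turns the $\sqrt S$-scale of the martingale deviation into the stated $S^{3/2}$ prefactor. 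Choosing $\lambda = \tfrac{1}{4S^2(2+2S)^2}$ as in the theorem keeps the padding term of the same order, and assembling the three steps yields $\norm{\theta^* - \widehat\theta_t}_{H_t(\widehat\theta_t)}\le CS^{3/2}\sqrt{d\log(St/d)+\log(t/\delta)}$.

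I expect the main obstacle to be the concentration step: obtaining the dimension-optimal, $\kappa$-free rate requires a self-normalized Bernstein argument whose variance term is exactly the curvature-weighted matrix $\tilde H_t$, while simultaneously handling the fact that $\tilde H_t$ depends on the random estimate $\widehat\theta_t$, which necessitates a uniform/covering argument over $\Theta$ rather than a fixed-design martingale bound. A secondary delicate point is the bookkeeping of the self-concordance constants, so that the two Hessians are compared through an explicit, non-exponential $c(S)$ and the final dependence lands exactly at $S^{3/2}$ rather than something larger; the boundedness of $\Theta$ and of the features is what makes this possible by bounding all relevant exponents a priori.
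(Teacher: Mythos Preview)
Your proposal is sound in outline, but it takes a different route from the paper's proof and differs in two noteworthy ways.

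First, the paper works through a \emph{second-order} Taylor expansion of the loss $\cL_t(\widehat\theta_t)-\cL_t(\theta^*)$ rather than the first-order/gradient identity you propose. This naturally produces the $(1-v)$-weighted integrated Hessian $\tilde G_t(\theta^*,\widehat\theta_t)=\lambda\Ib_d+\sum_s\big[\int_0^1(1-v)\dot\sigma(z_s^\top\theta^*+v\,z_s^\top(\widehat\theta_t-\theta^*))\,dv\big]z_sz_s^\top$, and expresses $\|\widehat\theta_t-\theta^*\|_{\tilde G_t}^2$ as a sum of three pieces: the loss gap $\cL_t(\theta^*)-\cL_t(\widehat\theta_t)$, the cross term $\nabla\cL_t(\theta^*)^\top(\widehat\theta_t-\theta^*)$, and the regularization term $\lambda\|\widehat\theta_t-\theta^*\|^2$. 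Each piece is then controlled by black-boxing the confidence-set machinery of \cite{lee2024improved} (Theorem~1 and Lemma~6 there), so no fresh self-normalized Bernstein argument or covering over $\Theta$ is needed. By contrast, your gradient route produces the \emph{unweighted} integrated curvature $\alpha_s=\int_0^1\dot\sigma(\cdots)\,dv$ and leaves you to handle the data-dependence of $\tilde H_t$ directly; this is doable but is exactly the obstacle you flag, and the paper sidesteps it.

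Second, for the change of metric $\tilde G_t\to H_t(\widehat\theta_t)$ the paper proves a sharper self-concordance inequality tailored to the $(1-v)$-weighted integral: $\int_0^1(1-v)\dot\sigma(z+v(z'-z))\,dv\ge \dot\sigma(z')/\big(C(2+|z-z'|)^2\big)$, established by combining the exponential bound you mention with an elementary algebraic bound $(a-1)/a^3$ and taking the max. This yields $H_t(\widehat\theta_t)\preccurlyeq C(2+2S)^2\tilde G_t$, and it is this explicit $(2+2S)$ factor, together with the $\gamma_t(\delta)\sim S\sqrt{d\log+\log}$ coming from Lee et al., that produces the final $S^{3/2}$. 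Your sketch, using only the exponential inequality and the unweighted integral, would land at a polynomial $c(S)$ too, but the bookkeeping is slightly different (you incur two separate $\sqrt{S}$ comparisons, one to $H_t(\theta^*)$ for the Bernstein step and one to $H_t(\widehat\theta_t)$), and you should be aware that the ``variance'' appearing in the Bernstein bound is $\dot\sigma(z_s^\top\theta^*)$, not your $\alpha_s$; one more self-concordance comparison is needed to equate them up to an $S$-factor.
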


The proof of Theorem~\ref{theorem:logistic-case-regret-bound} proceeds by upper bounding the sub-optimality gap for every context with the error in parameter estimation times an arm-dependent quantity. Specifically, for context $x$, let $z_{T}(x) = \phi(x,a^*(x)) - \phi(x,\pi_T(x))$ denotes the fetaure difference for the triplet $(x,a^*(x),\pi_T(x))$, where $a^*(x)$ is the optimal action at context $x$. Then, from~\eqref{eq:simple-regret-definition}, the sub-optimality gap for context $x$ can be bounded as $z_T(x)^\top \theta^* \!\leq\! z_{T}(x)^\top \theta^* \!-\! z_{T}(x)^\top \theta_T \!= \!\frac{1}{T}\! \sum\nolimits_{t=1}^T \!z_{T}(x)^\top \!(\theta^*\!\! - \!\widehat{\theta}_t)$. Here, the first inequality is because $\phi(x,\pi_T(x))^\top \theta_T \geq \phi(x,a^*(x))^\top \theta_T$, which follows from definition of $\pi_T$, and so $z_{T}(x)^\top \theta_T \leq 0$. Then, by Cauchy-Schwarz inequality, we get $z_T(x)^\top \theta^*  \leq \frac{1}{T} \!\sum\nolimits_{t=1}^T \!\lVert z_{T}(x) \rVert_{H_t(\widehat{\theta}_t)^{-1}} \lVert \theta^* \!-\! \widehat{\theta}_t \rVert_{H_t(\widehat{\theta}_t)}$.

Now, we apply Lemma~\ref{lemma:confidence-set-logistic} to upper bound $\lVert \theta^* - \widehat{\theta}_t \rVert_{H_t(\widehat{\theta}_t)}$. Next, we note that $\lVert z_{T}(x) \rVert_{H_t(\widehat{\theta}_t)^{-1}} \leq \lVert z_t \rVert_{H_t(\widehat{\theta}_t)^{-1}}$ by the design of our algorithm. To bound this, consider the regularized sample covariance matrix of feature differences, defined as
$ V_t = \sum\nolimits_{s=1}^{t-1} z_s z_s^\top + \kappa \lambda \Ib_d$. Compare this with $H_t(\theta)$, which scales each rank-one component inside the sum by its variance given that the parameter is $\theta$ (see Eq.~\ref{eq:H-t(theta)-definition}). A key relation between these two matrices is that $H_t(\theta) \succcurlyeq V_t/\kappa$. Using this, we 
upper bound $\lVert z_t \rVert_{H_t(\widehat{\theta}_t)^{-1}}$ by $\sqrt{\kappa} \lVert z_t \rVert_{V_t^{-1}}$. Finally, applying Elliptic Potential Lemma (Lemma~\ref{lemma:elliptic-potential-lemma}) finishes the proof. Details are in appendix~\ref{appendix:logistic}.\hfill$\qed$
\end{proof}

\begin{remark}
\label{remark:comparison-with-mehta}
To the best of our knowledge,~\cite{mehta2023sample} is the only work similar to ours with theoretical guarantees. Therefore, we highlight in detail the major differences. First, the algorithm design is entirely different as we choose both the actions for any context by maximizing the exploration bonus $b_t(x,a,a')$, while \cite{mehta2023sample} chooses one action uniformly at random. This can be wasteful in practice as the choice of the second action is equally crucial in preference-based learning. Further, the context selection rule is entirely different. While we pick the context with the highest exploration bonus (Eq.~\ref{eq:choose-x}),~\cite{mehta2023sample} uses an uncertainty estimate calculated via upper and lower confidence bounds of the rewards. Moreover, their suboptimality gap scales linearly with $\kappa$, while our guarantee only scales as $\sqrt{\kappa}$.

Next,~\cite{mehta2023sample} competes against the \emph{Borda} winner, while we do so against the (stronger) \emph{Condorcet} winner. Moreover, ~\cite{mehta2023sample} assumes that the Borda function $g^*(x,a) \!=\! \EE_{a'\sim \texttt{Unif}(\cA)}[\sigma(r^*(x,a) \!-\! r^*(x,a'))]$ lie in the same function space as the reward function $r^*(x,a)= \inner{\theta^*}{\phi(x,a)}$. This assumption doesn't hold in general due to the non-linearity in $\sigma$ and hence restricts the preference probabilities. The assumption holds trivially if each $\phi(x,a)$ is a one-hot vector $\mathbf{e}_{x,a}$, but it pushes $\theta^*$ to $\lvert\cX\rvert\cdot\lvert\cA\rvert$ dimensions and blows up the suboptimality gap significantly. We remove this restriction and provide the guarantee in dimension $d \ll \lvert\cX\rvert\cdot\lvert\cA\rvert$, by crucially exploiting properties of the sigmoid function.
\end{remark}
  
\begin{remark}[Extension to Direct Preference Optimization (DPO)]
  Another popular alignment algorithm DPO~\cite{rafailov2023direct} does not train a reward model separately, rather it uses the log-probability $r_\theta(x,a)= \log\pi_\theta(a|x)-\log\pi_{\text{ref}}(a|x)$ as the reward, where $\theta \in \mathbb{R}^d$ parameterizes the policy to be learnt. For example, Softmax policies take the form $ \pi_\theta(a|x) \propto \exp(f_\theta(x,a))$, where $f_\theta$ is a differentiable function. Now if we assume $f_\theta$ to be linear, then $\pi_\theta$ becomes a log-linear policy, i.e., $\log \pi_\theta(a|x) \propto \langle\theta,\phi(x,a)\rangle$, which eventually makes $r_\theta$ a (shifted) linear function. Hence, one would be able to apply our proposed approach to learn the policy parameter $\theta$ directly from the preference dataset $\cD$.  
\end{remark}

\subsection{Generalization Beyond BTL Model}
\label{subsection:general-function-approximation}
In this section, we remove the assumption of the BTL preference model and assume access to a non-parametric function class 
\begin{align*}
 \cF \!=\! \{f:\cX \!\times\! \cA \! \times\! \cA\rightarrow [0,1]:  f(x,a,a') \!+\! f(x,a',a) \!=\! 1\}~,   
\end{align*}
where $f(x,a,a')$ denotes the probability that action $a$ wins over action $a'$ (denoted by $a \succcurlyeq a'$) given context $x$ when the preference function is $f$, i.e., $f(x,a,a') = \PP[a \succcurlyeq a'|x,f]$. 
Note that in this case, there is no latent reward model, and hence, this is a strict generalization of the BTL model. Now, we assume that there is a realizable $f^* \in \cF$ from which preferences are observed at each round $t$, i.e. $y_t \sim \texttt{Ber}(f^*(x_t,a_t,a'_t))$. Further, we assume a \textit{Condorcet} winner at each context $x \in \cX$ w.r.t. $f^*$, i.e. there is an action $a^*(x) \in \cA$ such that $f^*(x,a^*(x), a) \geq \nicefrac{1}{2}$ for all $a \in \cA$. Accordingly, the sub-optimality gap of policy $\pi_T$ is defined as
\begin{align*}
    R(T) = \max\nolimits_{x \in \cX} f^*(x, a^*(x), \pi_T(x)) - \nicefrac{1}{2}~.
\end{align*}
In this setting, we propose a generalization of \texttt{APO}, namely, \texttt{APO-Gen} (Algorithm~\ref{algo:act-con-sel-gen-func-approx} in Appendix~\ref{section:general-function-approximation}). Similar to \texttt{APO}, it selects a context and a pair of actions at each round by maximizing an uncertainty score that depends on the complexity of the function class $\cF$. However, unlike \texttt{APO}, (a) at each round, it prunes out sub-optimal actions for every context, and (b) after $T$ rounds, the final policy is to sample an action uniformly from the remaining near-optimal actions for each context.  \texttt{APO-Gen} enjoys the following guarantee (details and proof in Appendix~\ref{section:general-function-approximation}).

\begin{theorem}[Suboptimality Gap of \texttt{APO-Gen}]\label{thm:gen}
    Let $\delta \in (0,1)$, and $\cN(\cF)$ and $d_\cE(\cF)$ be the covering number and Eluder dimension of $\cF$ respectively. Then, with probability $ \geq 1-\delta$, the policy returned by \texttt{APO-Gen} enjoys sub-optimality gap
    \begin{align*}
      R(T) \leq \widetilde{O}\bigg(\sqrt{{\log(\cN(\cF)T/\delta) d_\cE(\cF)}/{T}}\bigg)~, 
    \end{align*}
\end{theorem}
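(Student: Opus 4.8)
The plan is to run a confidence-set-plus-Eluder-dimension argument, but adapted so that the guarantee controls the \emph{worst-case} suboptimality over contexts rather than cumulative regret; this worst-case reduction is exactly what the maximal-uncertainty context selection in \texttt{APO-Gen} buys. First I would construct, for each round $t$, a confidence set $\cF_t \subseteq \cF$ centered at the least-squares estimate $\widehat{f}_t$ fit to the Bernoulli preference labels $y_s \sim \texttt{Ber}(f^*(x_s,a_s,a'_s))$. Via a martingale concentration bound for the squared loss over an $\varepsilon$-net of $\cF$ of size $\cN(\cF)$, I would show that on an event of probability at least $1-\delta$ we have $f^* \in \cF_t$ simultaneously for all $t \le T$, while every $f \in \cF_t$ satisfies $\sum_{s<t}(f-f^*)^2(x_s,a_s,a'_s) \le \beta_t$ with $\beta_t = O(\log(\cN(\cF)T/\delta))$. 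I would arrange the sets to be nested, $\cF_{t+1}\subseteq \cF_t$, so the induced widths below are non-increasing in $t$.

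Define the width $w_t(x,a,a') = \sup_{f,f'\in\cF_t}|f(x,a,a')-f'(x,a,a')|$, and take the pruning rule to discard action $a$ at context $x$ once some $a'$ is \emph{certified} to beat it, i.e. $\min_{f\in\cF_t} f(x,a',a) > \tfrac12$. Under the good event, realizability together with the Condorcet property ($f^*(x,a,a^*(x)) \le \tfrac12$ for all $a$) guarantees that $a^*(x)$ is never pruned, so it survives at every context. Conversely, if $a$ survives at context $x$ then taking $a'=a^*(x)$ in the (negated) pruning rule gives $\min_{f\in\cF_t} f(x,a^*(x),a) \le \tfrac12$, whence, since $f^* \in \cF_t$, the true gap is sandwiched inside the current width: $f^*(x,a^*(x),a) - \tfrac12 \le \max_{f\in\cF_t} f(x,a^*(x),a) - \min_{f\in\cF_t} f(x,a^*(x),a) = w_t(x,a^*(x),a)$. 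Because the final policy samples uniformly from surviving actions, every realizable outcome is near-optimal, so $R(T) \le \max_x \max_{a \in \cA_T(x)} w_T(x,a^*(x),a)$.

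Since \texttt{APO-Gen} queries the globally maximal-width triplet each round, $W_t := \max_x \max_{a,a'} w_t(x,a,a')$ equals the width at the played triplet $(x_t,a_t,a'_t)$. A standard Eluder-dimension potential argument then bounds the cumulative width by $\sum_{t=1}^T W_t \le O(\sqrt{d_\cE(\cF)\,\beta_T\,T})$, using that a width exceeding any fixed threshold can occur only $O(d_\cE(\cF))$ times before the regression budget $\beta_t$ is exhausted. By nestedness the widths are non-increasing, so $W_T = \min_{t\le T} W_t \le \tfrac1T \sum_{t\le T} W_t \le O(\sqrt{d_\cE(\cF)\beta_T/T})$; combining with the suboptimality-to-width bound of the previous paragraph and substituting $\beta_T = O(\log(\cN(\cF)T/\delta))$ yields $R(T) \le \widetilde{O}(\sqrt{d_\cE(\cF)\log(\cN(\cF)T/\delta)/T})$.

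The main obstacle is the passage from the \emph{cumulative} width bound, which the Eluder machinery naturally delivers, to the \emph{pointwise} worst-case-over-contexts width $W_T$ that the suboptimality gap demands; this is precisely where both the max-width context selection and the monotonicity of the nested confidence sets are indispensable, and where the argument departs from ordinary dueling-bandit regret proofs. A secondary subtlety is establishing the self-bounding pruning step, that a surviving action's true gap is controlled by the present width, without any latent reward model, relying solely on the skew-symmetry $f(x,a,a')+f(x,a',a)=1$ and the Condorcet assumption.
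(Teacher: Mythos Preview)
Your proposal is correct and shares with the paper the same high-level skeleton: build least-squares confidence sets containing $f^*$ with radius $\beta_t=O(\log(\cN(\cF)T/\delta))$, use the pruning rule together with the Condorcet/skew-symmetry structure to show $a^*(x)$ always survives and that any surviving action has true gap at most the current width, then bound the cumulative played width via the Eluder potential lemma.

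The one genuine point of departure is the passage from the cumulative width bound to the worst-case suboptimality. You enforce nested confidence sets so that $W_t$ is monotone, and conclude $W_T\le\tfrac1T\sum_t W_t$. The paper does not need nestedness: it instead observes that for every $t$ the per-round inequality $f^*(x,a^*(x),\pi_T(x))-\tfrac12\le 2\,b_t(x,a^*(x),\pi_T(x))$ already holds (from $f^*\in\cC_t$ plus the elimination condition for $\pi_T(x)\in\cA_t(x)$), averages this over $t$, and then uses the maximal-width selection to dominate each term by $b_t(x_t,a_t,a'_t)$. Both routes land on the same Eluder sum $\tfrac1T\sum_t b_t(x_t,a_t,a'_t)$; the paper's averaging is slightly more economical since it works with the confidence sets as defined (centered at the changing $\widehat f_t$), whereas your route requires the extra intersection step to get monotone sets. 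Either argument is fine.
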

For the BTL preference model, we have $\log \cN(\cF)=O(d\log T)$ and $d_\cE(\cF)=O(\kappa^2 d \log T)$. Hence, we get an $\Tilde{O}(\kappa d/\sqrt{T})$ sub-optimality gap for \texttt{APO-Gen}, which is $\sqrt{\kappa}$ factor loose than Theorem~\ref{theorem:logistic-case-regret-bound}. This is because we crucially use self-concordance of the sigmoid function in Theorem~\ref{theorem:logistic-case-regret-bound} to shave this extra $\sqrt{\kappa}$ factor. Nevertheless, this result is general enough to subsume other preference models (e.g., Thurstone) beyond the BTL model.
\section{Experiments}
\label{sec:exps}
We first present a practical version of \texttt{APO}, which largely follows the former with minor changes adapted for the computationally efficient implementation required in large-scale experiments. Next, we present experimental results that demonstrate the efficacy of \texttt{APO} over random sampling (hereafter denoted by \texttt{Random}) and baselines~\cite{mehta2023sample,muldrew2024active}. Hyperparameter details are given in Appendix~\ref{appendix:experiment-details}. The experiment code can be found \href{https://github.com/nirjhar-das/active-preference-optimization}{here}.
\begin{algorithm}[t]
\caption{\texttt{APO} (Practical version)}
\label{algo:apo-rlhf}
\begin{algorithmic}[1]
\REQUIRE Context-generation pairs $\cM \!=\! \{(x, a, a')\}$, sample budget $T$, encoder $\phi$, SFT policy $\pi_{\text{SFT}}$, log-loss $\cL$, batch size $B$, uncertainty regularizer $\lambda \!>\! 0$, KL regularizer $\beta \!>\!0$, learning rate $\eta \!>\!0$. Initialize $V_1 = \lambda I, \widehat{\theta}_1 =0, \cD = \emptyset $.
\FOR{batch $t=1, \ldots, \lfloor T/B\rfloor$}
\STATE Set $ b_t(x,a,a') = \matnorm{\phi(x,a) - \phi(x,a')}{V_t^{-1}}$ for each  $(x,a,a') \in \cM$. Set $\cM_t=\emptyset$.\alglinelabel{line:score-apo-rlhf}
\FOR{$j=1,\ldots,B$}
\STATE Pick $(x_{t,j},a_{t,j},a'_{t,j})=\!\!\!\argmax\limits_{(x,a,a') \in \cM \setminus \cM_t} \!\!\!b_t(x,a,a')$; Observe preference $y_{t,j}$.
\alglinelabel{line:pick-top-rlhf}
\STATE Update $\cM_t \gets \cM_t \cup \{(x_{t,j}, a_{t,j}, a'_{t,j})\}$ and $\cD \gets \cD \cup \{(x_{t,j},a_{t,j},a'_{t,j},y_{t,j})\}$.
\ENDFOR
\STATE Update $\widehat{\theta}_{t+1} \gets \texttt{Gradient-step}(\cL,\widehat{\theta}_t, \cD,\eta)$. 
\STATE Update $V_{t+1} \gets V_t + \sum_{j=1}^B z_{t,j} z_{t,j}^\top $, where $z_{t,j} = \phi(x_{t,j},a_{t,j}) - \phi(x_{t,j},a'_{t,j})$. \alglinelabel{line:Vt}
\ENDFOR
\STATE Set reward $\widehat r_T(x,a) = \phi(x,a)^\top \widehat{\theta}_{\lfloor T/B\rfloor +1} \forall (x,a)$ and policy $\pi_T \gets \texttt{PPO}(\pi_{\text{SFT}}, \widehat r_T, \beta)$.\alglinelabel{line:policy}\alglinelabel{line:reward}
\end{algorithmic}
\end{algorithm}
\subsection{Practical Version of APO for RLHF}
In this practical version of \texttt{APO} (Algorithm~\ref{algo:apo-rlhf}), we access preference data in batches instead of being fully online. At the start of each batch $t$, we first compute the uncertainty $b_t(x,a,a')$ of each triplet $(x,a,a')$ ( Step~\ref{line:score-apo-rlhf}). This is similar to Algorithm~\ref{algo:act-con-sel-logB} except that here we compute the norm w.r.t. the inverted sample covariance matrix of feature differences $V_t^{-1}$ instead of $H_t(\widehat\theta_t)^{-1}$ (Step~\ref{line:Vt}). We do so since it is both compute and memory efficient for large scale experiments. To maximize exploration, only those $B$ triplets $(x,a,a')$ are sent for labeling in a batch that have the highest uncertainty $b_t(x,a,a')$, and those are stored in a buffer $\cD$. At the end of each batch $t$, we update the parameter estimate $\widehat{\theta}_t$ via a black-box gradient-descend-based algorithm (e.g. Adam~\cite{KingBa15}) on the log-loss \eqref{eq:log-loss} over the dataset $\cD$. 
Finally, after the budget $T$ is exhausted, we first learn an estimate $\widehat r_T$ of the latent reward model $r^*$, and then align the policy via proximal policy optimization (\texttt{PPO})~\cite{schulman2017proximal}, which takes as input the SFT policy $\pi_{\text{SFT}}$, the learnt reward model $\widehat r_T$ and a KL-regularizer $\beta$, and returns $\pi_T$. 
\begin{figure}[!bt]
\centering
\includegraphics[width=0.3\textwidth]{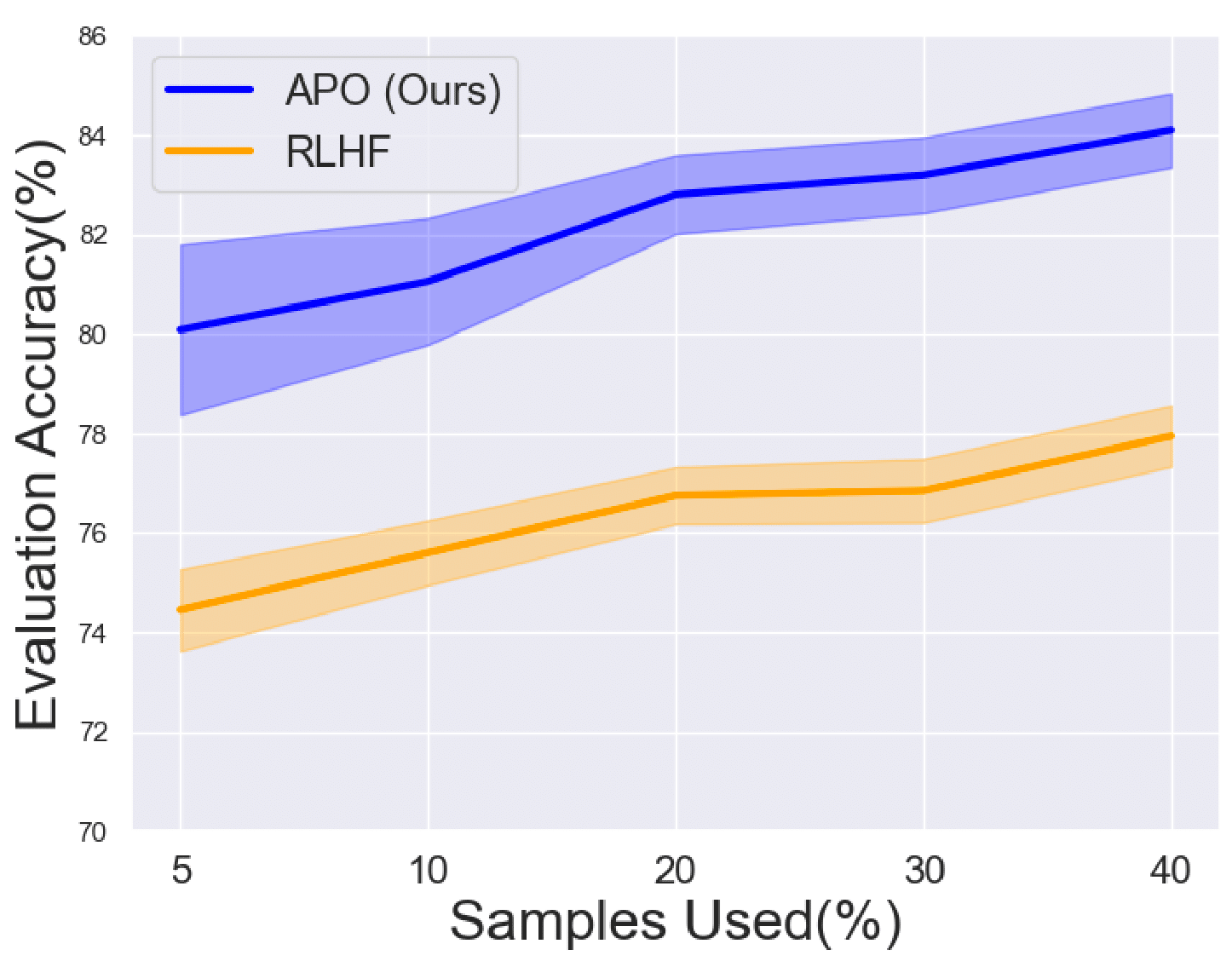}
\includegraphics[width=0.3\textwidth]{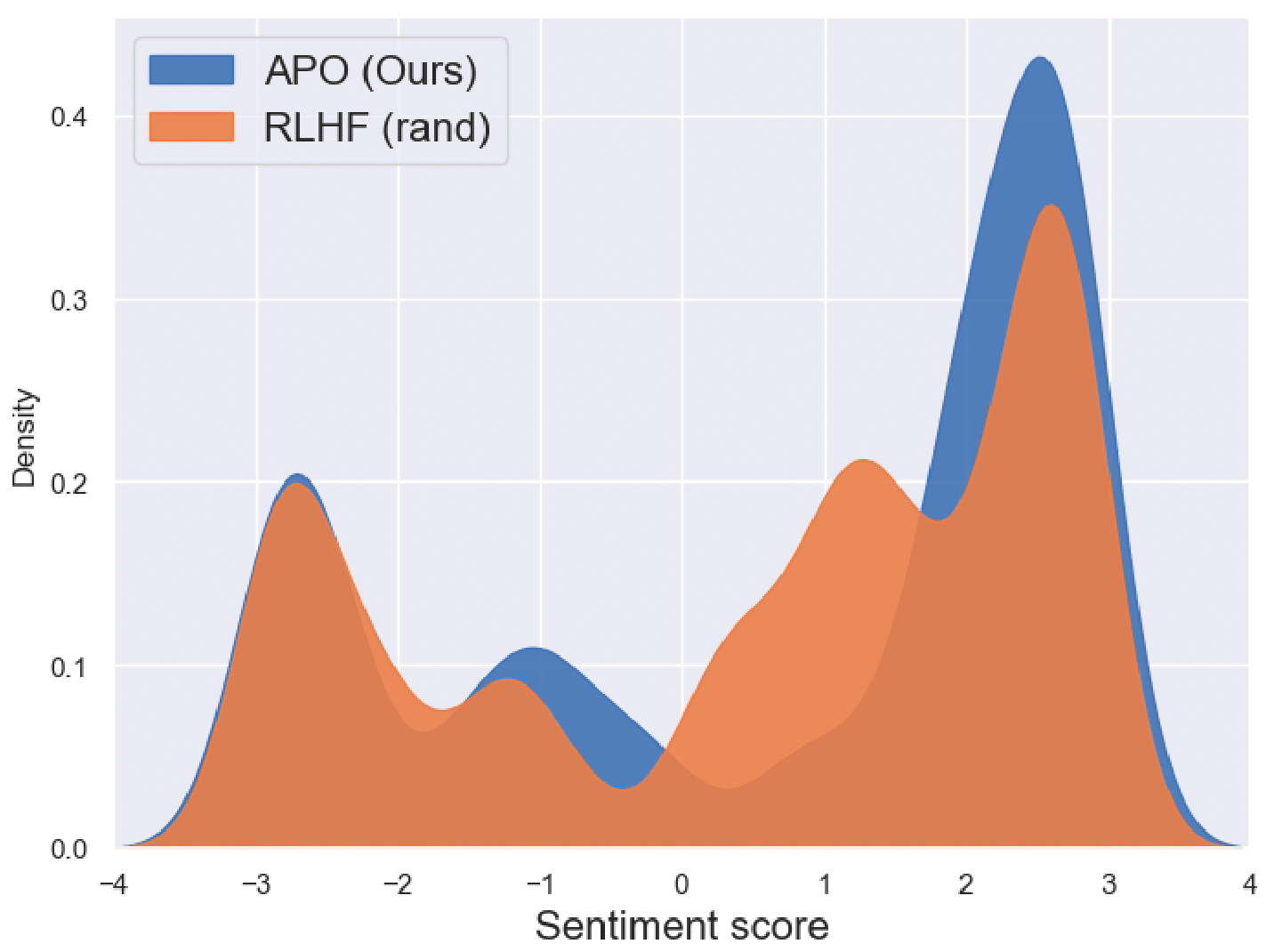}
\includegraphics[width=0.38\textwidth]{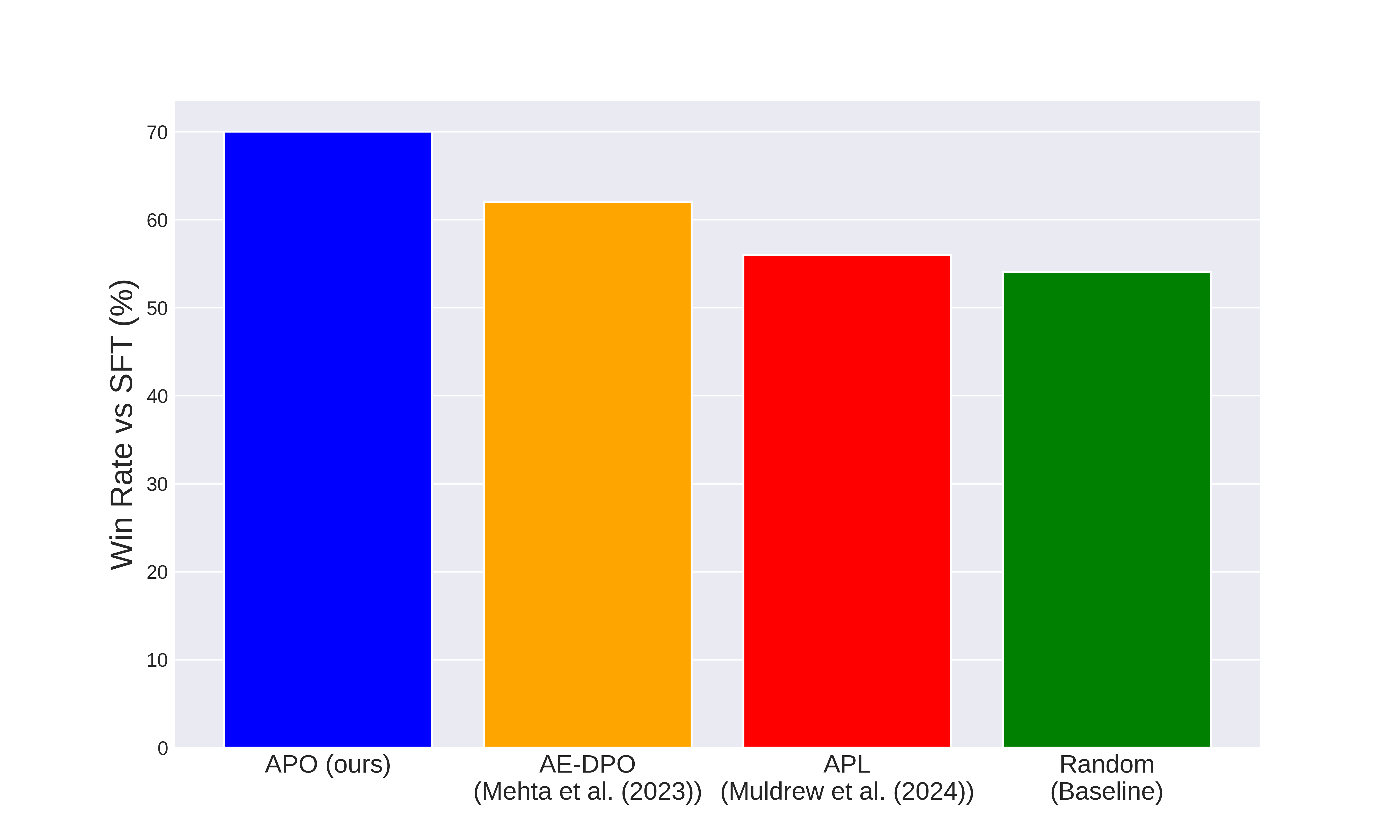}
\includegraphics[width=0.24\textwidth]{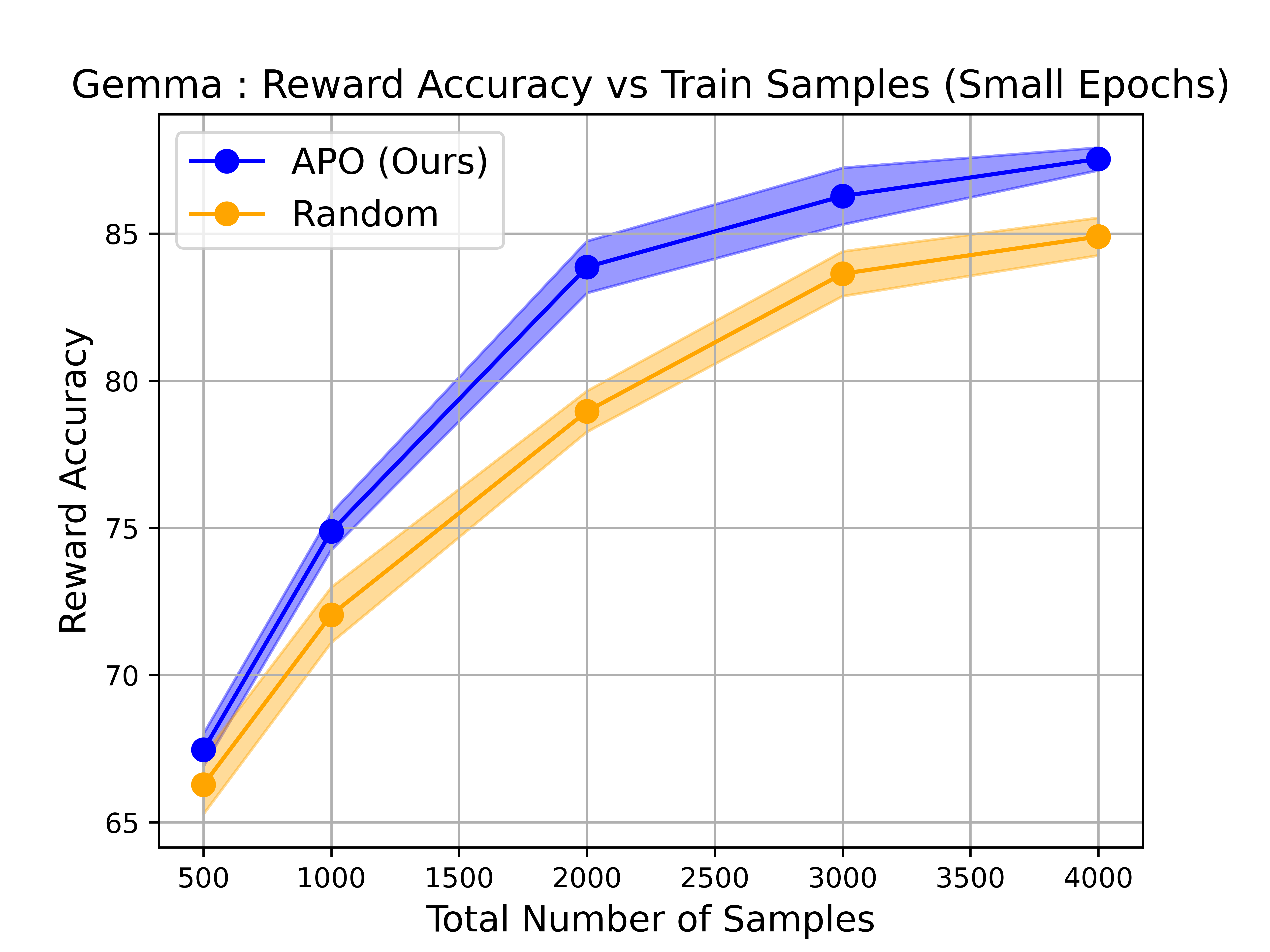}
\includegraphics[width=0.24\textwidth]{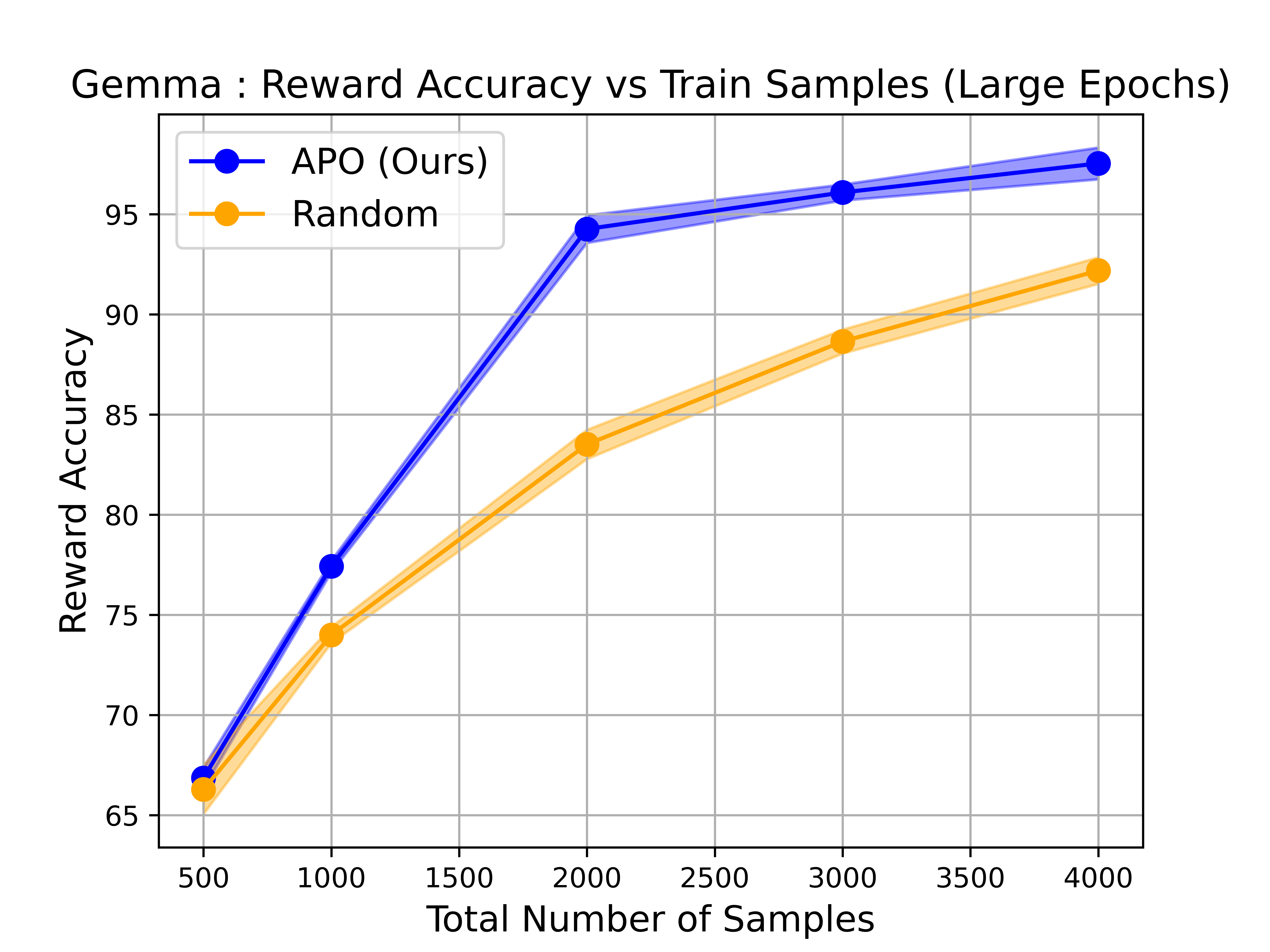}
\includegraphics[width=0.25\textwidth]{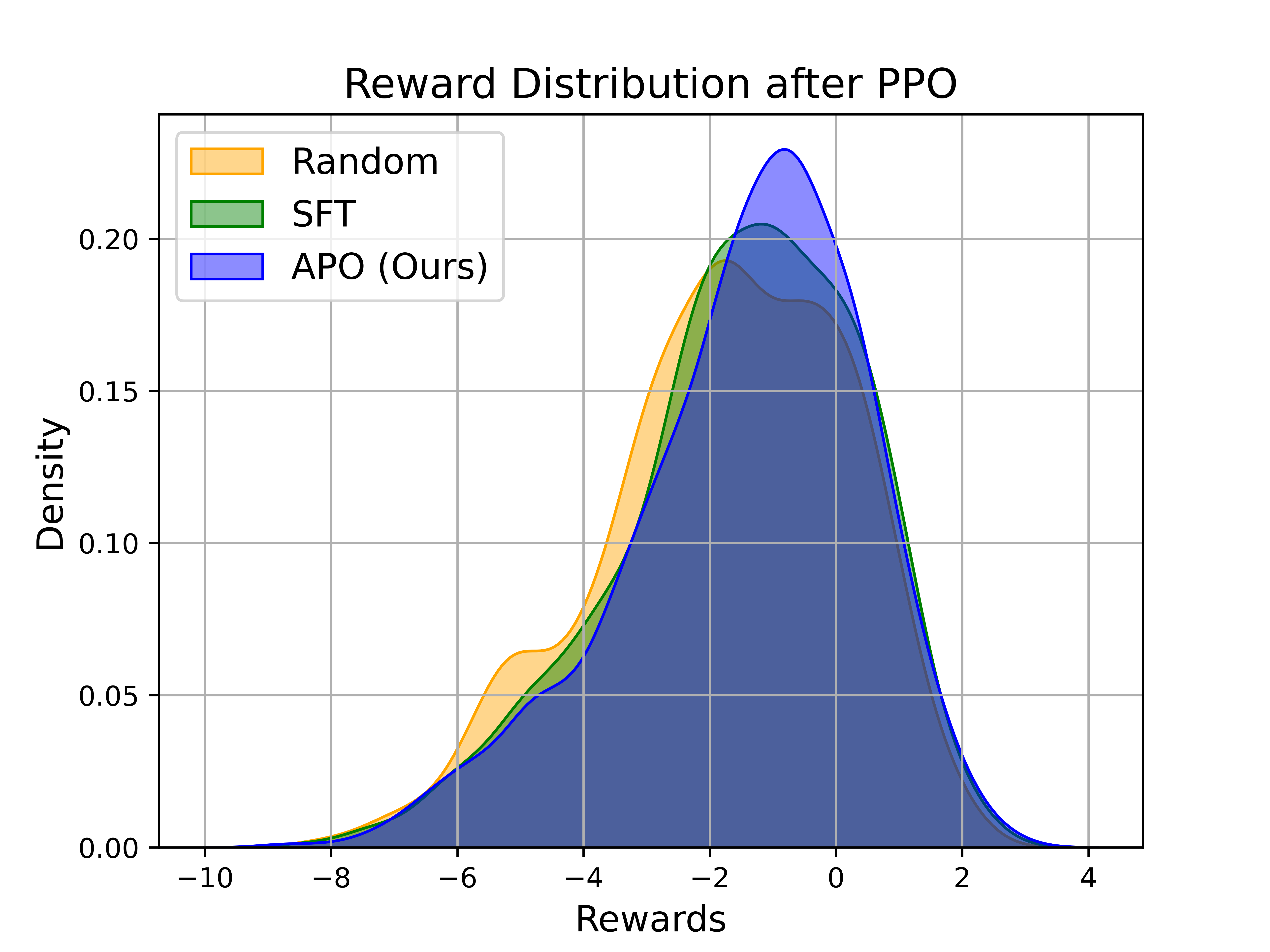}
\includegraphics[width=0.24\textwidth]{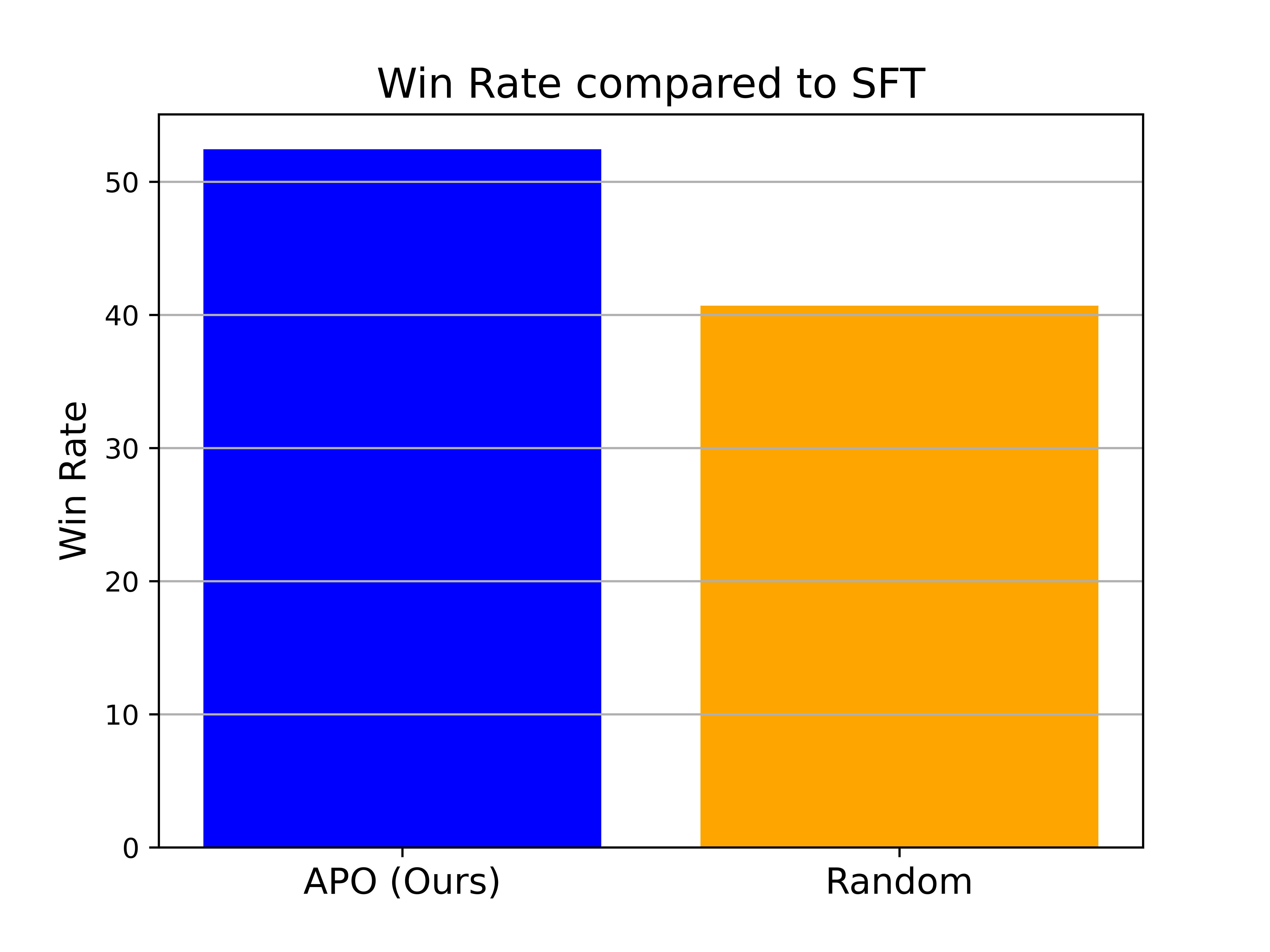}
\caption{\small{\textbf{Top Row: Controlled Sentiment Generation Task: Left:} Evaluation accuracy of trained reward model vs. no. of samples (in \%) comparing \texttt{APO} with \texttt{Random}. \textbf{Middle:} Sentiment score distribution of aligned policies trained on reward model learned with \texttt{APO} and on \texttt{Random}'s highest accuracy reward model. Generations by \texttt{APO}-trained reward is more shifted towards positive, showing better alignment than \texttt{Random}. \textbf{Right:} Win rates of \texttt{APO}, \texttt{AE-DPO}~\cite{mehta2023sample} and \texttt{APL}~\cite{muldrew2024active} and \texttt{Random} against SFT policy. \texttt{APO} outperforms \texttt{AE-DPO}, \texttt{APL} and \texttt{Random} by $72 : 62 : 56 : 54$ win rate.
\textbf{Bottom Row: Single-turn Dialogue Task: Left and 2nd Left:} Evaluation accuracy of trained reward model vs. no. of samples comparing \texttt{APO} with \texttt{Random}, when the number of epochs is 5 (\textbf{Left}) and 20 (\textbf{2nd Left}). Evaluation accuracy of \texttt{APO} is higher than the \texttt{Random} in both cases. \textbf{2nd Right:} Reward distribution of \texttt{APO}-aligned, SFT and \texttt{Random}-aligned policies for generations on prompts in the test dataset. Clearly, \texttt{APO}'s alignment is better than \texttt{Random}.
\textbf{Right:} Win rates of \texttt{APO} and \texttt{Random} aligned policies against SFT policy. \texttt{APO} outperforms \texttt{Random} by $55 : 40$ win rate.}}
    \label{fig:rlhf_vs_rand}
\end{figure}
\subsection{Results on Controlled Sentiment Generation Task} 
In this experiment, we consider a user group that prefers positive sentiment completions for movie reviews in the IMDb dataset~\cite{imdb_dataset}. The goal is to output generations $a$ that exhibit positive sentiment, catering to the user group's preferences for a given context $x$. For controlled evaluation, we generate preference pairs $(a,a')$ utilizing a pre-trained sentiment classifier where $\mathbb{P}(\text{positive-sentiment}\mid x,a)>\mathbb{P}(\text{positive-sentiment}\mid x,a')$. We generate a total of 10000 preference samples $(x,a,a')$ and use 4:1 train-test split.
For the SFT policy, we fine-tune GPT-2~\cite{radford2019language} on preferred reviews from the train set (8000 samples) and use this GPT-2 backbone for both reward learning and policy alignment. 

\textbf{For reward training}, we adaptively select context and generation pairs from the train set. We use the feature representation $\phi(x,a)$, and estimate the uncertainty $b_t(x,a,a')$ for each $(x,a,a')$ in the train set and select top-$B$ samples to update the reward model. We repeat this process $K$ times and return the final trained reward model. We evaluate the performance of the trained reward model against \texttt{Random} (where we select $B$ samples randomly at every batch)
on the test set of 2000 samples. Figure~\ref{fig:rlhf_vs_rand} (\textbf{Top Left}) shows the result: evaluation accuracy of the reward model learned by \texttt{APO} is much higher than the one learned via \texttt{Random} even when \texttt{APO}'s sample budget is only $5\%$ of the data and \texttt{Random}'s is $40\%$.

\textbf{For policy alignment}, we align the SFT policy with respective trained reward models (via \texttt{APO} and \texttt{Random}) using \texttt{PPO} (step~\ref{line:policy}). To demonstrate the effectiveness of adaptive sampling, we use the reward model trained on a sample budget of only $10\%$ for \texttt{APO}, while we use the highest accuracy reward model (corresponding to $40\%$ samples) for \texttt{Random}. Similar to \cite{rafailov2023direct}, the generations of aligned policies are evaluated against the ground truth reward $r^*$ for positive sentiment, which is provided by the pre-trained sentiment classifier. From Figure~\ref{fig:rlhf_vs_rand} (\textbf{Top Middle}), it can be seen that the reward distribution of the generations of \texttt{APO} -aligned policy achieves a higher density of positive sentiment compared to that aligned by \texttt{Random}. Moreover, from Figure~\ref{fig:rlhf_vs_rand} (\textbf{Top Right}) it is evident that \texttt{APO} outperforms \texttt{APL}~\cite{muldrew2024active}, \texttt{AE-DPO}~\cite{mehta2023sample} and \texttt{Random} in terms of win-rate against the SFT policy demonstrating the efficiency of the proposed method.

\subsection{Results on Single-turn Dialogue task}
In this experiment, we use Anthropic-HH~\cite{anthropic_dataset} preference dataset and instruction-tuned Gemma-2b~\cite{team2024gemma} language model. We collect all the contexts with single-turn dialogues and split these into two sets in a 4:1 ratio. We put samples from the larger collection into three buckets based on reward difference between \textit{chosen} and \textit{rejected} responses using \href{https://huggingface.co/Ray2333/reward-model-Mistral-7B-instruct-Unified-Feedback}{Mistral-7b} reward model as latent reward $r^*$. These buckets contain data points that are progressively easier to classify: (B1) reward difference between $-1$ to $1$, (B2) reward difference between $1$ to $3$ and (B3) reward difference of more than $3$. Out of these three buckets, we take 4500 samples from (B1), 2500 from (B2), and 1000 from (B3) to carefully curate a collection of 8000 training samples. Such a collection (more samples taken from the buckets with a smaller reward difference and fewer samples from the bucket with a larger reward difference) highlights the importance of selecting prompts carefully to obtain useful information during reward training --  randomly sampling contexts to collect feedback is more likely to hurt the performance in such a setting. For the test set, we sample $2000$ data points from the smaller collection set aside.

\textbf{Reward Evaluation.} We compare the reward models learnt by \texttt{APO} and \texttt{Random} by computing the $\%$ of samples in the test set for which the models assign higher reward to the \emph{chosen} responses than to the \textit{rejected} responses. We study how this accuracy changes with the number of batches or epochs, keeping the sample budget the same (Fig.~\ref{fig:rlhf_vs_rand} (\textbf{Bottom Left}) for 5 epochs and Fig.~\ref{fig:rlhf_vs_rand} (\textbf{Bottom 2nd Left}) for 20 epochs). We observe that \texttt{APO} always outperforms \texttt{Random}. We also see that the reward accuracy increases with an increasing number of epochs for a given sample budget. We show results by varying training samples till $4000$ as we want to demonstrate the effectiveness of \texttt{APO} under budget constraint.

\textbf{Win Rate.} Based on reward models learnt by \texttt{APO} and \texttt{Random}, we fine tune the SFT policy with \texttt{PPO} to obtain \texttt{APO}-policy and \texttt{Random}-policy respectively. Then we generate responses for contexts in the test set using \texttt{APO}, \texttt{Random}, and SFT policies, and get them evaluated by the Mistral-7b reward model. The reward distributions of these three policies are shown in Fig.~\ref{fig:rlhf_vs_rand} (\textbf{Bottom 2nd right}). It can be seen that \texttt{APO} has a higher density of positive rewards. Win rate of \texttt{APO}-policy and \texttt{Random}-policy against SFT policy is shown in Fig.~\ref{fig:rlhf_vs_rand} (\textbf{Bottom Right}), which shows that \texttt{APO} outperforms \texttt{Random} by a $55:40$ win rate.
\section{Conclusion}
In this work, we studied whether sampling prompts uniformly at random from a dataset to solicit feedback is sample efficient. We first showed that this method can suffer a constant suboptimality gap when aligning a language model policy with human preferences. Next, we characterized the sub-optimality gap lower bound for any active-learning algorithm with sample budget $T$ and problem dimension $d$, showing it to be $\Omega(d/\sqrt{T})$. Then, we proposed an algorithm \texttt{APO}, which actively samples prompts to achieve an $\Tilde{O}(d/\sqrt{T})$ sub-optimality gap. We also extended the results of \texttt{APO} to general function approximation to better capture modern-day RLHF training. Finally, we showed its efficacy over the random-sampling baseline on practical datasets. Although \texttt{APO}'s sub-optimality gap is minimax optimal in $d$ and $T$, the optimal dependence on $\kappa$ is unknown and seems to be an interesting future work.
\bibliographystyle{splncs04}
\bibliography{ref}

\newpage
\appendix

\newpage
{\centering  {\Large\bfseries Active Preference Optimization for Sample Efficient RLHF: Appendix \par}}
\section{Proof of Theorem~\ref{theorem:lower-bound}}
\label{appendix:uniform-lower-bound}
In this section, we restate Theorem~\ref{theorem:lower-bound} concerning the performance lower bound of a uniform learner (Definition~\ref{def:uniform-learner}) and provide a detailed proof by constructing a lower bound instance.
\begin{figure}[!ht]
\centering
\begin{tikzpicture}
    \draw[thick,->] (0,0) -- (-0.75,1.299) node[anchor=south] {$z_b$};
    \draw[thick,->,red] (0,0) -- (1.25,2.165) node[anchor=south] {$\theta^*$};
    \draw[thick,->,blue] (0,0) -- (2.5,0) node[anchor=north] {$\widehat{\theta}$};
    \draw[thick,->] (0,0) -- (1.5,0); 
    \node[align=left] at (1.5,-0.3) 
    {$z_g$};
    \draw[dashed,->,black] (0,0) -- (0,2.2);
    \draw[dashed,->,black] (0,0) -- (-1,0);
    \node[align=left] at (0,-0.3) {$(0,0)$};
\end{tikzpicture}
\caption{\small{Visualization of the instance for Theorem~\ref{theorem:lower-bound}. Here, $z_g$ represents feature difference vectors for \emph{good} contexts, and $z_b$ represents feature difference for the \emph{bad} context. $\theta^*$ and $\widehat{\theta}$ are the true and learnt parameters respectively.}}
\label{fig:lower-bound-instance}
\end{figure}
\begin{theorem}[Lower bound of uniform sampling strategy]
     There exists a problem instance $(\cX, \cA, \theta^*)$ for which the policy learnt by a Uniform Learner \texttt{Alg} under the budget $T \ll \lvert \cX \rvert$ suffers $\Omega(1)$ sub-optimality gap with high probability.
\end{theorem}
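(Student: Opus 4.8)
The plan is to exhibit a two-dimensional instance in which the features split the contexts into a large \emph{good} group and a small \emph{bad} group, so that a learner restricted to uniform context sampling never sees the bad contexts and therefore systematically mislabels them. Concretely, I would fix $d=2$, take a unit feature-difference vector $z_g$ for the good contexts and a unit vector $z_b$ for the bad ones with $\langle z_g, z_b\rangle < 0$ (an obtuse angle between them), and set $\theta^*$ to be the angle bisector of $z_g$ and $z_b$. Realizing these as genuine features is routine: for a good context put $\phi(x,a)=z_g/2$, $\phi(x,a')=-z_g/2$, and similarly with $z_b$ for the bad contexts, which keeps $\norm{\phi}\le 1$; if one wishes to respect the identifiability constraint $\inner{\mathbf{1}}{\theta}=0$, the whole construction can be carried out inside that hyperplane (in $\RR^3$) with no change. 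Because $\theta^*$ bisects the obtuse angle, both $\langle z_g,\theta^*\rangle>0$ and $\langle z_b,\theta^*\rangle>0$, so $r^*(x,a)-r^*(x,a')>0$ for \emph{every} context; hence $a$ is the Condorcet winner everywhere and the optimal policy always plays $a$.

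The first step is a sampling argument. Let $\cX$ contain $N$ contexts with only a vanishing fraction (a single one, or more generally $N_b \ll N/T$) in the bad group. Since \texttt{Alg} draws the $T$ contexts uniformly, a union bound gives that the probability of ever sampling a bad context is at most $O(T N_b/N)=o(1)$ when $T\ll \lvert\cX\rvert$. Thus, with high probability, every triplet in the learner's dataset has feature difference exactly $z_g$, and the observed labels are i.i.d.\ $\mathrm{Ber}(\sigma(\langle z_g,\theta^*\rangle))$.

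The crux is to pin down the MLE on this degenerate data. Because every $z_s=z_g$, the log-loss $\cL_t(\theta)$ depends on $\theta$ only through the scalar $u=\langle z_g,\theta\rangle$, so the minimizer is determined only in the $z_g$ direction and is free orthogonal to it. I would first use a Hoeffding bound to show that, since $\sigma(\langle z_g,\theta^*\rangle)>1/2$ is a fixed constant, the empirical fraction of wins exceeds $1/2$ with high probability; the likelihood is then increasing in $u$, so the constrained MLE is pushed to the boundary $\norm{\theta}=S$ in the direction maximizing $\langle z_g,\theta\rangle$. That feasible maximizer is a positive multiple of $z_g$, giving $\widehat{\theta}\propto z_g$ with a positive factor, so $\langle z_b,\widehat{\theta}\rangle$ inherits the sign of $\langle z_b,z_g\rangle<0$, i.e.\ the learner believes $a'$ beats $a$ on the bad contexts. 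This step is the main obstacle, since it requires arguing that the regularizer / norm-constraint / tie-breaking of the MLE solver leaves no orthogonal component that could flip the sign of $\langle z_b,\widehat{\theta}\rangle$; the cleanest route is to fix the solver to return the minimum-norm (equivalently, lightly ridge-regularized) maximizer, for which the orthogonal component is exactly zero.

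Finally, I would translate the wrong sign into a constant gap. For any bad context $x$, the greedy policy with respect to $\widehat{\theta}$ selects $\pi_T(x)=a'$ because $\langle z_b,\widehat\theta\rangle<0$, whereas the optimal action is $a$; the incurred sub-optimality is $r^*(x,a)-r^*(x,a')=\langle z_b,\theta^*\rangle$, a strictly positive constant fixed by the construction (e.g.\ $\cos 60^\circ\cdot\norm{\theta^*}$ in the figure's geometry) that does not depend on $T$. Since $R(T)=\max_{x\in\cX}\max_{a}\{r^*(x,a)-r^*(x,\pi_T(x))\}$ is a maximum over all contexts and the bad contexts lie in $\cX$, we conclude $R(T)\ge\langle z_b,\theta^*\rangle=\Omega(1)$ on the high-probability event, which is the claim. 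The only remaining bookkeeping is to intersect the two high-probability events (no bad context sampled, and majority-wins concentration) by a union bound.
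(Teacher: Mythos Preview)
Your geometric construction is exactly the paper's: two-dimensional features, a large \emph{good} group with feature difference $z_g$, a small \emph{bad} group with $z_b$ obtuse to $z_g$, and $\theta^*$ the angle bisector so that $a$ is optimal everywhere. The difference is in how the MLE is pinned down. You fix $\|\theta^*\|$ and appeal to Hoeffding to get the empirical win fraction above $1/2$, then invoke a minimum-norm tie-breaking rule to force $\widehat\theta\propto z_g$. The paper instead \emph{couples} $\|\theta^*\|=\alpha$ to $N$ by setting $\alpha=2\log(N-1)$, so that $\sigma(\alpha/2)=1-1/N$ and, with probability $(1-1/N)^{2T}\ge 1-2T/N$, no bad context is sampled \emph{and} every label equals $1$. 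When all labels are $1$ the log-loss is $T\log(1+e^{-u})$ in $u=\langle z_g,\theta\rangle$, which is strictly decreasing, so the constrained MLE is the \emph{unique} point $\widehat\theta=\alpha z_g$ and no tie-breaking convention is needed.

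This buys the paper two things. First, it sidesteps the issue you correctly flag as the main obstacle: the orthogonal component of $\widehat\theta$ is determined by the geometry alone, not by the solver. Second, it works for any $T\ll N$, including $T=O(1)$, whereas your Hoeffding step needs $T\to\infty$ to make the concentration event high probability. A small imprecision to note: with $\bar y\in(1/2,1)$ the likelihood is \emph{not} globally increasing in $u$ and the MLE is not pushed to the boundary; the correct statement is that the minimum-norm MLE is $\min(\sigma^{-1}(\bar y),S)\,z_g$, which is still a positive multiple of $z_g$ and yields your conclusion.
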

\begin{proof}
Let the number of contexts be $\abs{\cX}=N$. Assume $T \ll N$. Otherwise, if the sample budget $T > N$, then one can just collect data for every context, and the setting becomes trivial.
We divide $\cX$ into two disjoint subsets: a \emph{good} set $\cX_g$ and a \emph{bad} set $\cX_b$. We assume w.l.o.g. that $\abs{\cX_b} =1$, and we denote the \emph{bad} context by $b$. Let the action set be $\cA = \{a,a'\}$ for all contexts, and let $a$ have a higher reward than $a'$. Let $\phi:\cX \times \cA \rightarrow \RR^2$ be a feature map and $z_x = \phi(x,a) - \phi(x,a')$ be the feature difference vector at context $x$. Fix an $\alpha >0$ and
consider the problem instance:
\begin{align*}
 \theta^* = \alpha\begin{bmatrix} \frac{1}{2} & \frac{\sqrt{3}}{2}\end{bmatrix}^\top,\quad z_{b} = \begin{bmatrix} -\frac{1}{2} & \frac{\sqrt{3}}{2}\end{bmatrix}^\top,\quad z_x = \begin{bmatrix} 1 & 0\end{bmatrix}^\top,\, \forall\ x \in \cX_g~.  
\end{align*}
Note that $\matnorm{\theta^*}{2} = \alpha$. From this construction (see Fig.~\ref{fig:lower-bound-instance}), it is clear that both for \emph{good} and \emph{bad} contexts, $z_x^\top \theta^* = \alpha/2 >0$, which implies that indeed action $a$ has higher reward than $a'$. 

Let $\cE_1$ be the event that all the $T$ sampled contexts are \emph{good} (i.e., from $\cX_g$). Since, under uniform sampling, for a random context $X$, $\PP[X \in \cX_g] = 1 - 1/N$, we have $\PP[\cE_1] = (1 -1/N)^T$. Let $\cE_2$ be the event that all observed preferences $y_1, \ldots y_T$ are equal to $1$. Since, for a random preference $Y$ given a context $x$, $\PP[Y = 1 | x] = \sigma(\alpha/2)$, we have $\PP[\cE_2|\cE_1]=  \sigma(\alpha/2)^T$.

Now, under the event $\cE_1 \cap \cE_2$, the MLE $\widehat{\theta}$ constrained to the same norm as $\theta^*$ is given by
\begin{align*}
    \widehat{\theta} = \argmin_{\theta\in \RR^2:\matnorm{\theta}{2} \leq \alpha} \sum\nolimits_{t=1}^T \log\rbr{1 + e^{-\alpha \theta_1}}= \argmin_{\theta\in \RR^2:\matnorm{\theta}{2} \leq \alpha}  \log\rbr{1 + e^{-\alpha \theta_1}}~.
\end{align*}
It is easy to see that $\widehat{\theta} = \begin{bmatrix}\alpha & 0\end{bmatrix}^\top$. For any $x \in \cX_g$, the predicted reward difference between actions $a$ and $a'$ is $z_x^\top \widehat{\theta} = \alpha > 0$. Thus, $\widehat{\theta}$ predicts the better action $a$ correctly for all \emph{good} contexts $\cX_g$. However, for context $b$, the 
reward difference is $z_b^\top {\widehat{\theta}} = -\frac{\alpha}{2} < 0$. Thus, $\widehat \theta$ wrongly predicts $a'$ as the better action for the \emph{bad} context $b$. This yields a constant sub-optimality gap
\begin{align*}
    R(T,b) = \phi(b,a)^\top\theta^* - \phi(b,a')^\top\theta^* = \alpha/2 = \Omega(1).
\end{align*}
Finally, it remains to show that the event $\cE_1 \cap \cE_2$ happens with high probability. To this end, we choose $\alpha = 2\log(N - 1)$ which yields $\sigma(\alpha/2) = 1 - 1/N$. This yields
\begin{align*}
    \PP[\cE_1 \cap \cE_2]= \PP[\cE_2 | \cE_1] \PP[\cE_1]
    = \Big(1 - \frac{1}{N}\Big)^{2T}\geq 1 - \frac{2T}{N}~. 
\end{align*}
The last step uses $T \ll N$, which completes the proof.
\end{proof}
\section{Proof of Theorem~\ref{theorem:active-learning-lower-bound}}
\label{appendix:active-lower-bound}

We first re-state the theorem and then give its proof. The proof is based on reducing the contextual preference bandit instance to a logistic bandit instance. However, note that the regret lower bound of the logistic bandit does not help us in this case because our goal is to lower bound the sub-optimality gap and not regret. In this section, we use $D_{KL}(P,Q)$ to denote the KL-divergence between two probability distributions $P$ and $Q$.

\begin{theorem}[Lower Bound for any sampling strategy]
    Let $\Theta = \{-\frac{1}{\sqrt{T}},\frac{1}{\sqrt{T}}\}^d $, $\cX$ be a finite set of contexts, $\cA = \{-\frac{1}{2}, \frac{1}{2}\}^d$. Then, for any algorithm, there exists a parameter $\theta^* \!\in\! \Theta$ such that sub-optimality gap of a policy learnt by the algorithm after collecting $T$ samples satisfies 
\begin{align*}
    \mathbb{E}_{\theta^*}\left[R(T) \right] \ge \Omega\left(\frac{d}{\sqrt{T}}\right)~,
\end{align*}
where the expectation is over the randomness of $(x_1,a_1,a'_1,y_1,\ldots,x_T,a_T,a'_T,y_T)$.
\end{theorem}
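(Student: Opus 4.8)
The plan is to reduce the contextual preference problem to a single logistic dueling bandit and then run a coordinate-wise (Assouad-type) two-point argument. First I would exploit the freedom in the construction: since $R(T)=\max_{x}\max_{a}\{\cdots\}$ is a maximum over contexts, I fix an arbitrary $x \in \cX$ and lower bound the gap at that single context. Taking the feature difference to depend only on the actions (so $z=a-a'$ with $a,a' \in \cA=\{-\tfrac12,\tfrac12\}^d$) makes every query carry the same information about $\theta^*$ no matter which context is queried; this strips the contexts of any genuine role and leaves a pure identification problem over $\Theta=\{-\tfrac{1}{\sqrt T},\tfrac{1}{\sqrt T}\}^d$. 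Because the reward $\langle a,\theta\rangle=\sum_i a_i\theta_i$ is separable, the optimal action at $x$ matches the signs of $\theta$ coordinate-wise and the sub-optimality decomposes additively across coordinates. I would therefore define the error events $\cE_{\theta,i}=\{\text{sign}(\pi_{T,i}(x))\neq\text{sign}(\theta_i)\}$ and record that on $\cE_{\theta,i}$ the policy forfeits reward of order $1/\sqrt T$ in coordinate $i$.

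Next I would lower bound $\PP_\theta[\cE_{\theta,i}]$ on average over $\Theta$ through a two-point comparison. For a fixed $i$, let $\theta'$ be $\theta$ with its $i$-th coordinate sign flipped; since $\theta_i$ and $\theta'_i$ then have opposite signs, getting coordinate $i$ right under $\theta$ is the same event as getting it wrong under $\theta'$, i.e. $\cE_{\theta,i}^c=\cE_{\theta',i}$. Lemma~\ref{lemma:bretagnolle-huber-inequality} (Bretagnolle--Huber) then yields $\PP_\theta[\cE_{\theta,i}]+\PP_{\theta'}[\cE_{\theta',i}]\geq\tfrac12\exp(-D_{KL}(\PP_\theta,\PP_{\theta'}))$, where $\PP_\theta$ denotes the law of the entire interaction transcript under hypothesis $\theta$.

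The main obstacle is bounding $D_{KL}(\PP_\theta,\PP_{\theta'})$ uniformly over every, possibly adaptive, algorithm. For this I would invoke the divergence decomposition for interactive protocols (\cite[Lemma 15.1]{lattimore2020bandit}): since the two hypotheses induce the same policy and differ only in the conditional law of the preference $y_t$ given the history and the chosen triplet, the transcript KL collapses to $\sum_{t=1}^T\EE_\theta[D_{KL}(\texttt{Ber}(\sigma(z_t^\top\theta)),\texttt{Ber}(\sigma(z_t^\top\theta')))]$. Each per-round term I would bound by a Taylor expansion of the log-sigmoid together with $\dot\sigma\leq\tfrac14$, giving $\tfrac18\langle z_t,\theta-\theta'\rangle^2$. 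As $\theta-\theta'$ is supported on coordinate $i$ with $|\theta_i-\theta'_i|=2/\sqrt T$ and $|z_{t,i}|\leq1$, each term is at most $\tfrac18(2/\sqrt T)^2=\tfrac{1}{2T}$, so summing over the $T$ rounds gives $D_{KL}(\PP_\theta,\PP_{\theta'})\leq\tfrac12$ irrespective of the algorithm. The delicate point is exactly that this survives adaptivity: the decomposition lemma absorbs the history-dependence and the per-round bound holds for every realized $z_t$, so no control over the learner's choices is required.

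Finally I would average and translate back into a gap. Summing the two-point bound over the $|\Theta|/2$ sign-flip pairs for each coordinate $i$, then over $i$, and dividing by $|\Theta|$ gives $\frac{1}{|\Theta|}\sum_{\theta\in\Theta}\sum_{i=1}^d\PP_\theta[\cE_{\theta,i}]\geq\tfrac{d}{4}\exp(-\tfrac12)$, so there exists $\theta^*\in\Theta$ with $\sum_{i=1}^d\PP_{\theta^*}[\cE_{\theta^*,i}]\geq\tfrac{d}{4}\exp(-\tfrac12)$. Invoking the additive per-coordinate penalty, I would conclude $\EE_{\theta^*}[R(T)]\geq\EE_{\theta^*}[\sum_{i=1}^d\mathds{1}[\cE_{\theta^*,i}]\cdot2|\theta^*_i|]=\frac{2}{\sqrt T}\sum_{i=1}^d\PP_{\theta^*}[\cE_{\theta^*,i}]=\Omega(d/\sqrt T)$, as claimed.
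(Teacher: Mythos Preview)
Your proposal is correct and follows essentially the same route as the paper: fix a context, use the coordinate-wise sign-mistake events, apply Bretagnolle--Huber to sign-flip pairs, bound the transcript KL via the divergence decomposition of \cite[Lemma~15.1]{lattimore2020bandit} together with a second-order expansion of the log-sigmoid and $\dot\sigma\le 1/4$, then average over $\Theta$ to extract $\theta^*$. The only quibble is the per-coordinate penalty: with $\cA=\{-\tfrac12,\tfrac12\}^d$ a wrong sign in coordinate $i$ costs $(a_i^*-\pi_{T,i}(x))\theta_i^*=|\theta_i^*|$, not $2|\theta_i^*|$, but this affects only the constant and not the $\Omega(d/\sqrt{T})$ conclusion.
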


\begin{proof}
    Let $\pi$ be the policy learnt by any given sampling strategy. Without loss of generality, fix a context $x \in \cX$ and denote by $\pi_T(x) \in \cA$ the action predicted by the learned policy. For a preference parameter $\theta$, denote by $\PP_\theta$ the distribution over $(x_1,a_1,a'_1,y_1,x_2,\ldots,x_T,a_T,a'_T,y_T)$. By Divergence Decomposition Lemma (Lemma~\ref{lemma:divergence-decomposition-lemma}), we have, for $\theta, \theta' \in \Theta$,
    \begin{align*}
        D_{KL}(\PP_\theta, \PP_{\theta'}) \leq \frac{1}{8}\EE_\theta \left[\sum_{t=1}^T \langle a_t - a'_t, \theta - \theta' \rangle^2 \right]
    \end{align*}
    Note that for any $x \in \mathcal{X}$ and any two actions $a,a' \in \mathcal{A}$, $(a - a') \in [-1, 1]^d$. Now, we define the event $$\cE_{\theta, i} = \{\text{sign}(\pi_{T,i}(x) \neq \text{sign}(\theta_i)\}~,$$
    where $\pi_{T,i}(x)$ and $\theta_i$ are the $i$-th coordinates of $\pi_T(x)$ and $\theta$ respectively. Further, let $\theta' \in \Theta$ be such that $\theta'_i = -\theta_i$ and for $j \neq i$, $\theta'_j = \theta_j$. Note that $\cE_{\theta',i} = \cE_{\theta,i}^c$. Thus, by Bretagnolle-Huber Inequality (Lemma~\ref{lemma:bretagnolle-huber-inequality}),
    \begin{align*}
        \PP_{\theta}[\cE_{\theta,i}] + \PP_{\theta'}[\cE_{\theta',i}] &\geq \frac{1}{2} \exp(-D_{KL}(\PP_\theta, \PP_{\theta'}))\\ &\geq \frac{1}{2} \exp\left(-\frac{1}{8} \EE_\theta \left[\sum_{t=1}^T \langle a_t - a'_t, \theta - \theta' \rangle^2 \right] \right) \\
        &\geq \frac{1}{2} \exp\left(-\frac{1}{8} \sum_{t=1}^T {4}\theta_i^2 \right) = \frac{1}{2} \exp(-\frac{1}{2})~.
    \end{align*}
    Now, since we have $\lvert \Theta \rvert = 2^d$,
    \begin{align*}
        \sum_{\theta \in \Theta} \frac{1}{\lvert \Theta \rvert} \sum_{i=1}^d \PP_{\theta}[\cE_{\theta,i}] = \frac{1}{\lvert \Theta \rvert} \sum_{i=1}^d \sum_{\theta \in \Theta} p_{\theta, i} \geq \frac{2^{d-1}}{2^d} \cdot d \cdot \frac{1}{2} \exp(-\frac{1}{2}) = \frac{d}{4} \exp(-\frac{1}{2})~.
    \end{align*}
    Hence, there exists a $\theta^* \in \Theta$ such that $\sum_{i=1}^d \PP_{\theta^*}[\cE_{\theta^*,i}] \geq \frac{d}{4} \exp(-\frac{1}{2})$. Thus, the expected suboptimality gap is lower bounded as
    \begin{align*}
        \EE_{\theta^*}[R(T)] & \geq \EE_{\theta ^*}\left[\max_{a \in \mathcal{A}} \langle a - \pi_T(x), \theta^* \rangle \right] \\
        &= \EE_{\theta^*} \left[ \sum_{i=1}^d \lvert \theta^*_i \rvert \cdot \mathds{1}\{ \text{sign}(\pi_{T,i}(x)) \neq \text{sign}(\theta^*_i) \} \right] \tag{Best action has coordinate-wise same sign as $\theta^*$}\\
        &= \frac{1}{\sqrt{T}} \sum_{i=1}^d \EE_{\theta^*} \left[ \mathds{1}\{ \text{sign}(\pi_{T,i}(x)) \neq \text{sign}(\theta^*_i) \} \right] \\
        &= \frac{1}{\sqrt{T}} \sum_{i=1}^d \PP_{\theta^*}[\cE_{\theta^*,i}] \geq \frac{1}{\sqrt{T}} \cdot \frac{d \exp(-\nicefrac{1}{2})}{4}~,
    \end{align*}
    which completes the proof.
\end{proof}

\begin{lemma}[Divergence Decomposition]
\label{lemma:divergence-decomposition-lemma}
    Consider a logistic bandit instance with action space $\mathcal{A} \subset \RR^d$ and parameter $\theta \in \RR^d$. A policy $\pi$, over $T$ rounds, takes actions $a_t \in \mathcal{A}$ and observes reward $r_t \sim \texttt{Ber}(\sigma(\langle a_t, \theta\rangle))$, for all $t \in [T]$. Let the sequence of random actions and observations be denoted by $\mathcal{O} = (a_1, r_1, a_2, \ldots, a_T, r_T)$.  Further, let $\PP_\theta$ and $\PP_{\theta'}$ be the two distributions of $\mathcal{O}$ under parameters $\theta$ and $\theta'$. Then,
    \begin{align*}
        D_{KL}(\PP_\theta, \PP_{\theta'}) \leq \frac{1}{8}\EE_\theta \left[\sum_{t=1}^T \langle a_t, \theta - \theta' \rangle^2 \right]~.
    \end{align*}
\end{lemma}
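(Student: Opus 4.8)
The plan is to prove this via the standard chain-rule (divergence decomposition) for the relative entropy of sequentially generated data, reducing the trajectory-level KL to a sum of per-round Bernoulli KL divergences, and then bounding each Bernoulli term by a second-order argument that produces the $1/8$ constant.

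First I would write out the joint law of the trajectory $\mathcal{O} = (a_1, r_1, \ldots, a_T, r_T)$ under either hypothesis. By the sequential structure, the density factorizes as a product over rounds of a policy kernel $\pi_t(a_t \mid a_1, r_1, \ldots, a_{t-1}, r_{t-1})$ times a reward kernel that is $\texttt{Ber}(\sigma(\langle a_t, \theta\rangle))$ under $\PP_\theta$ and $\texttt{Ber}(\sigma(\langle a_t, \theta'\rangle))$ under $\PP_{\theta'}$. The crucial observation is that the policy kernel is identical under both hypotheses---the learner never observes $\theta$, only the realized history---so in the log-likelihood ratio every policy factor cancels. Taking $\EE_\theta$ of the surviving log-ratio and applying the tower property round by round, $D_{KL}(\PP_\theta, \PP_{\theta'})$ collapses to $\sum_{t=1}^T \EE_\theta\!\left[ D_{KL}\big(\texttt{Ber}(\sigma(\langle a_t, \theta\rangle)),\, \texttt{Ber}(\sigma(\langle a_t, \theta'\rangle))\big)\right]$, where the inner conditional KL is evaluated at the ($\PP_\theta$-random) action $a_t$.

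Next I would bound the per-round Bernoulli KL. Writing $u = \langle a_t, \theta\rangle$ and $v = \langle a_t, \theta'\rangle$, I view $\texttt{Ber}(\sigma(u))$ as a one-parameter exponential family in the natural (logit) parameter $u$ with log-partition $A(u) = \log(1 + e^u)$, so that $A'(u) = \sigma(u)$ and $A''(u) = \dot{\sigma}(u) = \sigma(u)(1-\sigma(u)) \le \tfrac14$. The KL between two members of this family equals the Bregman divergence of $A$, and a second-order Taylor expansion gives $D_{KL}(\texttt{Ber}(\sigma(u)), \texttt{Ber}(\sigma(v))) = \tfrac12 A''(\xi)(u-v)^2 \le \tfrac18 (u-v)^2$ for some $\xi$ between $u$ and $v$. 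Substituting $u - v = \langle a_t, \theta - \theta'\rangle$ and summing the bound over $t$ yields the claim.

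The main obstacle---really a matter of care rather than difficulty---is the chain-rule step: one must justify cleanly that the adaptively chosen actions contribute nothing to the divergence. The argument hinges on the policy kernel being a fixed, history-measurable stochastic map that does not depend on the hypothesis, so that its factors cancel in the likelihood ratio, while the outer expectation correctly remains under $\PP_\theta$ (matching the direction of the KL). Once this decomposition is in place, the remaining work---the Bernoulli KL bound and the summation---is routine. This mirrors \cite[Lemma 15.1]{lattimore2020bandit}, specialized to Bernoulli rewards with logit-linear means.
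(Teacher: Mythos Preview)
Your proposal is correct and follows essentially the same route as the paper: both invoke the divergence decomposition from \cite[Lemma~15.1]{lattimore2020bandit} to reduce to per-round Bernoulli KL divergences, and both bound that KL via a second-order Taylor expansion yielding the $\tfrac{1}{2}\dot\sigma(\xi)(u-v)^2 \le \tfrac{1}{8}(u-v)^2$ estimate. The only cosmetic difference is that you package the per-round bound as a Bregman divergence of the log-partition $A(u)=\log(1+e^u)$, whereas the paper writes out the Bernoulli KL explicitly and Taylor-expands $\log\sigma$; since $A'' = \dot\sigma$ in either formulation, the computations coincide.
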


\begin{proof}
    This lemma is a direct adaptation of~\cite[Lemma 15.1, Exercise 15.8(b)]{lattimore2020bandit} to the logistic bandit setting. From these results, we directly have
    \begin{align}
    \label{eq:total-kl-divergence-upto-T}
        D_{KL}(\PP_\theta, \PP_{\theta'}) = \EE_\theta \left[ \sum_{t=1}^T D_{KL}(\texttt{Ber}(\sigma(\langle a_t, \theta \rangle)), \texttt{Ber}(\sigma(\langle a_t, \theta' \rangle)) \right]~.
    \end{align}
    Now, upon simplification of the KL-divergence between two logistic distributions, we have,
    \begin{align}
    \label{eq:kl-divergence-at-t}
        &D_{KL}(\texttt{Ber}(\sigma(\langle a_t, \theta \rangle)), \texttt{Ber}(\sigma(\langle a_t, \theta' \rangle)) \nonumber \\
        &= (1 - \sigma(\langle a_t, \theta \rangle)) \langle a_t, \theta' - \theta \rangle + \log(\sigma(\langle a_t, \theta \rangle)) - \log(\sigma(\langle a_t, \theta' \rangle))~.
    \end{align}
    Let $p = \langle a_t, \theta \rangle$ and $q = \langle a_t, \theta' \rangle$. Now, via a Taylor expansion, we have,
    \begin{align*}
        \log(\sigma(q)) &= \log(\sigma(p)) + \frac{\dot{\sigma}(p)}{\sigma(p)} (q - p) + \frac{1}{2} \frac{d}{dp}\left(\frac{\dot{\sigma}(p)}{\sigma(p)}\right) \bigg|_{u \in (p,q)} (q - p)^2 \\
        &= \log(\sigma(p)) + (1 - \sigma(p))(q-p) - \frac{1}{2} \dot{\sigma}(u) (q - p)^2~. \tag{$\dot{\sigma}(p) = \sigma(p)(1 - \sigma(p))$}
    \end{align*}
    Therefore, substituting the Taylor expansion in~\ref{eq:kl-divergence-at-t}, we get,
    \begin{align*}
        D_{KL}(\texttt{Ber}(\sigma(\langle a_t, \theta \rangle)), \texttt{Ber}(\sigma(\langle a_t, \theta' \rangle)) = \frac{1}{2} \dot{\sigma}(u) \langle a_t, \theta' - \theta \rangle^2 \leq \frac{1}{8}  \langle a_t, \theta' - \theta \rangle^2 \tag{$\dot{\sigma}(u) \leq \frac{1}{4}$}~.
    \end{align*}
    Substituting this in~\ref{eq:total-kl-divergence-upto-T} gives the desired result.
\end{proof}

\begin{lemma}[Bretagnolle-Huber Inequality, Theorem 14.2 of~\cite{lattimore2020bandit}] 
\label{lemma:bretagnolle-huber-inequality}
Let $(\Omega, \mathcal{F}, P)$ and $(\Omega, \mathcal{F}, Q)$ be two probability spaces. Let $A \in \mathcal{F}$ and $A^c$ be its complement event. Then,
\begin{align*}
    \PP_P[A] + \PP_Q[A^c] \geq \frac{1}{2} \exp(- D_{KL}(P,Q))
\end{align*}
\end{lemma}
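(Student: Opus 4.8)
The plan is to prove the inequality by routing it through the total variation distance and then relating total variation to the KL divergence via the Bhattacharyya (Hellinger) affinity. First I would fix a common dominating measure $\mu$ (for concreteness $\mu = (P+Q)/2$) and write $p = \mathrm{d}P/\mathrm{d}\mu$ and $q = \mathrm{d}Q/\mathrm{d}\mu$ for the corresponding densities. If $D_{KL}(P,Q) = \infty$ the right-hand side equals $0$ and the claim is trivial, so I assume $D_{KL}(P,Q) < \infty$, which in particular forces $P \ll Q$.

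The first step is an elementary reduction. For any event $A$,
\[
\PP_P[A] + \PP_Q[A^c] = 1 + (\PP_P[A] - \PP_Q[A]) \ge 1 - \mathrm{TV}(P,Q),
\]
since $\PP_P[A] - \PP_Q[A] \ge -\mathrm{TV}(P,Q)$ directly from the definition $\mathrm{TV}(P,Q) = \sup_{A}|\PP_P[A] - \PP_Q[A]|$. Hence it suffices to lower bound $1 - \mathrm{TV}(P,Q)$. Using $\mathrm{TV}(P,Q) = \tfrac12\int|p-q|\,\mathrm{d}\mu$ together with the identity $\min(p,q) = \tfrac12(p+q-|p-q|)$, I would rewrite this as $1 - \mathrm{TV}(P,Q) = \int \min(p,q)\,\mathrm{d}\mu$, so the whole statement reduces to showing $\int\min(p,q)\,\mathrm{d}\mu \ge \tfrac12\exp(-D_{KL}(P,Q))$.

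The heart of the argument is a short chain. Since $pq = \min(p,q)\max(p,q)$, Cauchy--Schwarz applied to $\sqrt{\min(p,q)}$ and $\sqrt{\max(p,q)}$ gives
\[
\Big(\int\sqrt{pq}\,\mathrm{d}\mu\Big)^2 \le \int\min(p,q)\,\mathrm{d}\mu \cdot \int\max(p,q)\,\mathrm{d}\mu \le 2\int\min(p,q)\,\mathrm{d}\mu,
\]
where the last step uses $\int\max(p,q)\,\mathrm{d}\mu = 2 - \int\min(p,q)\,\mathrm{d}\mu \le 2$. Then I would lower bound the affinity $\int\sqrt{pq}\,\mathrm{d}\mu$ by Jensen's inequality for the concave logarithm:
\[
\log\int\sqrt{pq}\,\mathrm{d}\mu = \log\EE_P\big[\sqrt{q/p}\,\big] \ge \EE_P\big[\tfrac12\log(q/p)\big] = -\tfrac12 D_{KL}(P,Q),
\]
so that $\int\sqrt{pq}\,\mathrm{d}\mu \ge \exp(-\tfrac12 D_{KL}(P,Q))$. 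Combining the two displays yields $\int\min(p,q)\,\mathrm{d}\mu \ge \tfrac12\exp(-D_{KL}(P,Q))$, which together with the reduction of the previous paragraph completes the proof.

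The only care needed is measure-theoretic bookkeeping: choosing the dominating measure, noting that the set $\{p=0\}$ has $P$-measure zero (so $\EE_P[\sqrt{q/p}]$ is well defined and equals $\int\sqrt{pq}\,\mathrm{d}\mu$, the integrand vanishing where $p=0$), and checking that the Cauchy--Schwarz and Jensen steps apply to these densities. None of these is a genuine difficulty; the one conceptual step is recognizing that the Bhattacharyya affinity $\int\sqrt{pq}\,\mathrm{d}\mu$ is exactly the right intermediate quantity linking $\int\min(p,q)\,\mathrm{d}\mu$ to $D_{KL}(P,Q)$.
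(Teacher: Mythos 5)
Your proof is correct. The paper itself does not prove this lemma --- it is stated as a citation (Theorem 14.2 of Lattimore and Szepesv\'ari) --- and your argument is precisely the standard proof given in that source: reduce $\PP_P[A]+\PP_Q[A^c]$ to $1-\mathrm{TV}(P,Q)=\int\min(p,q)\,\mathrm{d}\mu$, bound this below by $\tfrac12\big(\int\sqrt{pq}\,\mathrm{d}\mu\big)^2$ via Cauchy--Schwarz, and bound the Bhattacharyya affinity below by $\exp\big(-\tfrac12 D_{KL}(P,Q)\big)$ via Jensen, with the measure-theoretic points (dominating measure, the set $\{p=0\}$, the infinite-KL case) handled correctly.
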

\section{Proof of Theorem~\ref{theorem:logistic-case-regret-bound}}
\label{appendix:logistic}
First, we state the result from the logistic bandit literature that characterizes the confidence set for the constrained maximum likelihood estimator. Here, we give one version of the confidence set from~\cite{lee2024improved}, but note that similar guarantees are also derived in~\cite{abeille2021instance}.
\begin{lemma}[Confidence Set for MLE (Theorem 1 of~\cite{lee2024improved}]
\label{lemma:MLE-confidence-set}
    Let $\widehat{\theta}_t$ be the constrained maximum likelihood estimator after $t-1$ time steps defined as follows:
\begin{align*}
    \widehat{\theta}_{t} = \argmin_{\theta \in \Theta}\left\lbrace -\sum_{s=1}^{t-1}  y_s \log(\sigma(z_s^\intercal \theta)) + (1 - y_s) \log(1-\sigma(z_s^\intercal \theta))\right\rbrace~.
\end{align*}
    Now define the set $$\cC_t(\delta) = \{ \theta \in \Theta: \cL_t(\theta) - \cL_(\widehat{\theta}) \leq \beta_t(\delta)^2 \}$$ where $\beta_t(\delta) = \sqrt{10d\log\left(\frac{St}{4d}+e\right) + 2(e-2+S)\log\left(\frac{1}{\delta}\right)}$. Then we have
    $P(\forall t \geq 1, \theta^* \in \cC_t(\delta)) \geq 1 - \delta$.
\end{lemma}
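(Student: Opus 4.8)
The plan is to establish this as a time-uniform, loss-based confidence set through the \emph{method of mixtures} (a PAC-Bayes / Laplace-type argument) combined with the self-concordance of the logistic loss. The starting point is the identity $\cL_t(\theta^*) - \cL_t(\widehat{\theta}_t) = \log\big(\mathrm{Lik}_t(\widehat{\theta}_t)/\mathrm{Lik}_t(\theta^*)\big)$, where $\mathrm{Lik}_t(\theta) = \prod_{s=1}^{t-1}\sigma(z_s^\top\theta)^{y_s}(1-\sigma(z_s^\top\theta))^{1-y_s}$ is the data likelihood. Since $\widehat{\theta}_t$ is the constrained maximizer over $\Theta$, it suffices to upper bound $\sup_{\theta\in\Theta}\log\big(\mathrm{Lik}_t(\theta)/\mathrm{Lik}_t(\theta^*)\big)$ simultaneously for all $t$, and the task reduces to controlling this supremal log-likelihood ratio.

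The crucial observation is that because $y_s \sim \texttt{Ber}(\sigma(z_s^\top\theta^*))$ and $z_s$ is measurable with respect to the past, each likelihood-ratio factor $\sigma(z_s^\top\theta)^{y_s}(1-\sigma(z_s^\top\theta))^{1-y_s}/\big[\sigma(z_s^\top\theta^*)^{y_s}(1-\sigma(z_s^\top\theta^*))^{1-y_s}\big]$ has conditional expectation one under the true model. Hence, for any fixed $\theta$, the process $L_t(\theta) := \mathrm{Lik}_t(\theta)/\mathrm{Lik}_t(\theta^*)$ is a nonnegative mean-one martingale with respect to the observation filtration $\mathcal{F}_t$. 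Fixing a \emph{data-independent} prior $Q$ supported on $\Theta$ and mixing, the process $M_t := \int_\Theta L_t(\theta)\,dQ(\theta)$ is again a nonnegative mean-one martingale (Fubini, with $M_1 = 1$). By Ville's maximal inequality, $\PP(\exists t\ge1 : M_t \ge 1/\delta) \le \delta$, so with probability at least $1-\delta$ one has $\log M_t \le \log(1/\delta)$ for every $t$ at once. This single step both supplies the time-uniformity (the ``$\forall t$'' in the statement) and generates the $\log(1/\delta)$ term in $\beta_t(\delta)$.

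It then remains to lower bound $\log M_t$ by the MLE loss gap minus a $d$-dependent complexity penalty. Restricting the mixture integral to a small Euclidean ball $B(\widehat{\theta}_t,\rho)\cap\Theta$ and expanding the log-likelihood to second order about $\widehat{\theta}_t$, self-concordance ($\lvert \Ddot{\sigma}\rvert \le \dot{\sigma}$) controls the quadratic Taylor remainder uniformly over the ball via the local design matrix $H_t(\widehat{\theta}_t)$, while the first-order term is handled by optimality of $\widehat{\theta}_t$ (up to the affine constraint $\inner{\mathbf{1}}{\theta}=0$). This yields $M_t \ge Q\big(B(\widehat{\theta}_t,\rho)\big)\, e^{-\Delta_t}\, L_t(\widehat{\theta}_t)$, where $\Delta_t$ bounds the self-concordant loss increment across the ball, so that $\cL_t(\theta^*) - \cL_t(\widehat{\theta}_t) \le \log(1/\delta) + \log\!\big(1/Q(B(\widehat{\theta}_t,\rho))\big) + \Delta_t$. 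Taking $Q$ uniform over the radius-$S$ set makes $\log\!\big(1/Q(B)\big) = O\big(d\log(S/\rho)\big)$, and choosing $\rho \asymp d/(St)$ (using $\norm{z_s}\le 2$ to bound total curvature by $O(t)$) balances the $\rho$-dependent terms into the stated $10\,d\log\!\big(St/(4d)+e\big)$, with the $S$-dependence of the deviation term absorbed into the coefficient $2(e-2+S)$.

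The main obstacle is precisely this third step: making the local quadratic expansion rigorous and tight enough to recover the exact constants. The logistic Hessian varies with $\theta$, so the quadratic remainder is not uniformly controlled a priori; self-concordance is exactly what keeps $H_t(\theta)$ comparable to $H_t(\widehat{\theta}_t)$ throughout the radius-$\rho$ ball, and I would need the multiplicative self-concordant correction relating the loss increment to $\norm{\theta-\widehat{\theta}_t}_{H_t(\widehat{\theta}_t)}$ (the inequality adapted from \cite{lee2024improved}). Extra care is required because $\Theta = \{\inner{\mathbf{1}}{\theta}=0,\ \norm{\theta}\le S\}$: I must estimate the prior mass of a small ball intersected with this $(d-1)$-dimensional affine slice, and treat the case where $\widehat{\theta}_t$ sits near the norm boundary, so that $\log\!\big(1/Q(B)\big)$ still scales as $O\big(d\log(St/d)\big)$. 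Tuning $\rho$, the prior, and the self-concordant curvature bound so as to land on the precise coefficients is the delicate, calculation-heavy portion; the martingale construction and Ville's inequality are otherwise routine.
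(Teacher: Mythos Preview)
The paper does not supply its own proof of this lemma: it is stated as Theorem~1 of \cite{lee2024improved} and the text simply says ``The details of the proof can be found in section~3.1 of~\cite{lee2024improved}.'' So there is no in-paper argument to compare against; the lemma is imported wholesale.

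That said, your sketch is the right shape and matches the approach in the cited reference (and in closely related work such as \cite{abeille2021instance}). The three ingredients you identify---(i) the nonnegative mean-one likelihood-ratio martingale, mixed against a data-independent prior and controlled for all $t$ via Ville's inequality; (ii) a Laplace-type lower bound on the mixture by restricting to a small ball around $\widehat{\theta}_t$ and expanding the loss to second order; (iii) self-concordance of $\sigma$ to keep the Hessian comparable across the ball---are exactly the mechanism behind this style of time-uniform, loss-based confidence set. Your diagnosis of the delicate part (prior-mass estimate on $\Theta=\{\inner{\mathbf{1}}{\theta}=0,\ \norm{\theta}\le S\}$, boundary effects, and the tuning of $\rho$ to hit the stated constants) is accurate; those are precisely the calculation-heavy details that the reference works through. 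Nothing in your plan is wrong, and nothing essential is missing.
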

The details of the proof can be found in section 3.1 of~\cite{lee2024improved}. Next, we present another lemma that quantifies the parameter estimation error. Using this lemma and a novel self-concordance property, we will prove Lemma~\ref{lemma:bound-on-theta-hat-minus-theta-star-in-theta-hat-norm}.

\begin{lemma}[Lemma 6 of~\cite{lee2024improved}]
\label{lemma:bound-on-theta-hat-minus-theta-star-in-theta-star-norm}
    Let $\widehat{\theta}_t$ be defined above. Further, let $\theta^* \in C_t(\delta)$. Then, $$\lVert \widehat{\theta}_t - \theta^* \rVert_{H_t(\theta^*)}^2 \leq \gamma_t(\delta)^2 \coloneqq 2(2+2S) f(d,S,t,\delta)$$ where $$f(d,S,t, \delta) \coloneqq 2(e-2)(2+2S)d\log(\frac{5St}{d}) + 2(e-2)(2+2S)\log(\frac{t}{\delta}) + \frac{5d}{4} + \frac{d^2}{16St}$$
    Simplifying the RHS above, we have, $\gamma_t(\delta)^2 = CS^2\left(d\log\frac{St}{d} + \log\frac{t}{\delta}\right)$ for some $C>0$.
\end{lemma}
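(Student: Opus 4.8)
The plan is to convert the hypothesis $\theta^* \in \cC_t(\delta)$ --- which by the definition of the confidence set in Lemma~\ref{lemma:MLE-confidence-set} means the log-loss gap satisfies $\cL_t(\theta^*) - \cL_t(\widehat\theta_t) \le \beta_t(\delta)^2$ --- into the Hessian-weighted error bound $\lVert \widehat\theta_t - \theta^* \rVert^2_{H_t(\theta^*)} \le \gamma_t(\delta)^2$. The bridge between a loss gap and an error norm is the curvature of the logistic log-loss, which is exactly the design matrix $H_t$. First I would rewrite the loss gap as a Bregman divergence and use the first-order optimality of the \emph{constrained} minimizer $\widehat\theta_t$: since $\Theta$ is convex and $\theta^* \in \Theta$ is feasible, $\langle \nabla\cL_t(\widehat\theta_t), \theta^* - \widehat\theta_t \rangle \ge 0$, so the loss gap dominates the Bregman divergence $D_t(\theta^*, \widehat\theta_t) \coloneqq \cL_t(\theta^*) - \cL_t(\widehat\theta_t) - \langle \nabla\cL_t(\widehat\theta_t), \theta^* - \widehat\theta_t \rangle$:
\[
D_t(\theta^*, \widehat\theta_t) \le \cL_t(\theta^*) - \cL_t(\widehat\theta_t) \le \beta_t(\delta)^2 .
\]
It then suffices to lower bound $D_t(\theta^*, \widehat\theta_t)$ by a constant multiple of $\lVert \widehat\theta_t - \theta^* \rVert^2_{H_t(\theta^*)}$.

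Next I would expand the Bregman divergence in integral form, $D_t(\theta^*, \widehat\theta_t) = \int_0^1 (1-u)\, \lVert \widehat\theta_t - \theta^* \rVert^2_{\nabla^2 \cL_t(\theta_u)}\, du$ with $\theta_u = \theta^* + u(\widehat\theta_t - \theta^*)$, and recall $\nabla^2\cL_t(\theta) = \sum_s \dot{\sigma}(z_s^\top\theta)\, z_s z_s^\top$. The crux is to transfer the curvature at the interpolated points $\theta_u$ back to the curvature at $\theta^*$, i.e. back to $H_t(\theta^*)$. For this I would invoke the self-concordance of the sigmoid, $\lvert \ddot{\sigma} \rvert \le \dot{\sigma}$, which gives the multiplicative comparison $\dot{\sigma}(z_s^\top\theta_u) \ge \dot{\sigma}(z_s^\top\theta^*)\, e^{-\lvert z_s^\top(\theta_u - \theta^*) \rvert}$ for every $s$. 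Together with the boundedness of Assumption~\ref{ass:bound} (so $\lVert z_s \rVert \le 2$ and $\lVert \theta \rVert \le S$, hence $\lvert z_s^\top\theta \rvert \le 2S$ along the segment), this yields a lower bound of the form $D_t(\theta^*, \widehat\theta_t) \ge c\, \lVert \widehat\theta_t - \theta^* \rVert^2_{H_t(\theta^*)}$ where the curvature factor $c$ degrades linearly in $S$; the two powers of $(2+2S)$ appearing in $\gamma_t(\delta)^2$ both trace back to this self-concordant transfer (one from relating $D_t$ to the $H_t(\theta^*)$-norm, one from the companion control of the loss gap).

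The main obstacle is that the comparison factor $e^{-\lvert z_s^\top(\theta_u - \theta^*)\rvert}$ depends on the very error $\lVert \widehat\theta_t - \theta^* \rVert$ we are trying to bound, so the inequality produced above is self-referential. I would resolve this exactly as in the refined logistic-bandit analysis of~\cite{lee2024improved}: writing $x = \lVert \widehat\theta_t - \theta^* \rVert_{H_t(\theta^*)}$, the self-concordant expansion produces an inequality of the shape $x^2 \le a\,\beta_t(\delta)^2 + b\, x$ (the linear term coming from the first-order cross terms generated when lower-bounding the interpolated Hessian), which I would solve as a quadratic in $x$ to conclude $x^2 \le \gamma_t(\delta)^2$. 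The remaining work is bookkeeping: substituting $\beta_t(\delta)^2 = 10d\log(St/(4d)+e) + 2(e-2+S)\log(1/\delta)$ from Lemma~\ref{lemma:MLE-confidence-set} and collecting the $(e-2)$ and $(2+2S)$ factors reproduces $f(d,S,t,\delta)$ and hence the simplified form $\gamma_t(\delta)^2 = CS^2(d\log(St/d) + \log(t/\delta))$. I expect carefully tracking these constants through the self-concordant step --- rather than any single conceptual move --- to be the most delicate part of the argument.
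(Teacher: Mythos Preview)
The paper does not prove this lemma itself; it cites Appendix~C.4.4 of~\cite{lee2024improved}. From the way the paper \emph{uses} that proof in the very next lemma, one can see that the cited argument proceeds differently from your plan: it Taylor--expands $\cL_t$ at $\theta^*$ (not at $\widehat\theta_t$), so the first-order term is $\nabla\cL_t(\theta^*)^\top(\widehat\theta_t-\theta^*)$, a martingale sum that is controlled directly by a self-normalized concentration bound. The paper quotes the key intermediate inequality $\lvert \nabla\cL_t(\theta^*)^\top(\widehat\theta_t-\theta^*)\rvert \le \tfrac{1}{2(2+2S)}\lVert\widehat\theta_t-\theta^*\rVert^2_{H_t(\theta^*)} + f(d,S,t,\delta)$, obtained from Cauchy--Schwarz plus Young's inequality after the martingale bound. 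Combining this with the self-concordance lower bound on the second-order term, $\lVert\widehat\theta_t-\theta^*\rVert^2_{\tilde G_t(\theta^*,\widehat\theta_t)} \ge \tfrac{1}{2+2S}\lVert\widehat\theta_t-\theta^*\rVert^2_{H_t(\theta^*)}$ (up to the $\lambda$ regularizer), and using $\cL_t(\widehat\theta_t)\le\cL_t(\theta^*)$ gives $\tfrac{1}{2(2+2S)}x^2 \le f + 4\lambda S^2$, hence $x^2 \le 2(2+2S)f = \gamma_t(\delta)^2$.

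Your Bregman-divergence route---expanding at $\widehat\theta_t$, dropping the gradient via constrained first-order optimality, and transferring curvature to $H_t(\theta^*)$ by self-concordance---is conceptually valid and more elegant in one respect: it needs only the black-box confidence-set hypothesis $\theta^*\in\cC_t(\delta)$ and avoids re-doing any martingale concentration. Two points of caution, however. First, once you invoke first-order optimality the gradient term vanishes entirely, so there is no linear term in $x$ and no quadratic to solve; your inequality is simply $x^2 \le C(2+2S)^2\beta_t(\delta)^2$ (plus a tiny regularization correction). The quadratic-in-$x$ structure you describe belongs to the Taylor-at-$\theta^*$ route, where the gradient term supplies the $bx$ piece. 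Second, because $\beta_t(\delta)^2$ already carries an $(e-2+S)$ factor on $\log(1/\delta)$, multiplying by $(2+2S)^2$ from the self-concordant transfer produces an $O(S^3\log(1/\delta))$ term, whereas the stated $\gamma_t(\delta)^2$ has only $O(S^2\log(t/\delta))$. So your approach yields the right shape but a slightly looser $S$-dependence; the exact constants in $f(d,S,t,\delta)$ come from bounding the martingale gradient directly, not from going through $\beta_t(\delta)^2$.
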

The proof of the lemma can be found in appendix C.4.4 of~\cite{lee2024improved}. Now we present the proof of lemma~\ref{lemma:bound-on-theta-hat-minus-theta-star-in-theta-hat-norm}.
\begin{lemma}
\label{lemma:bound-on-theta-hat-minus-theta-star-in-theta-hat-norm}
    Suppose $\theta^* \in \cC_t(\delta)$. Then, $\lVert \theta^* - \widehat{\theta}_t \rVert_{H_t(\widehat{\theta}_t)} \leq CS^{\nicefrac{1}{2}}\gamma_t(\delta)$.
\end{lemma}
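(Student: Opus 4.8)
The plan is to transfer the estimation error from the $H_t(\theta^*)$-norm, where Lemma~\ref{lemma:bound-on-theta-hat-minus-theta-star-in-theta-star-norm} already gives $\lVert\widehat\theta_t-\theta^*\rVert_{H_t(\theta^*)}\le\gamma_t(\delta)$, to the $H_t(\widehat\theta_t)$-norm, losing only a factor $\sqrt S$. Since $H_t(\theta)=\sum_s\dot\sigma(z_s^\top\theta)z_sz_s^\top+\lambda\Ib_d$, this reduces to comparing the weights $\dot\sigma(z_s^\top\widehat\theta_t)$ and $\dot\sigma(z_s^\top\theta^*)$ along the segment joining $\theta^*$ and $\widehat\theta_t$. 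A termwise comparison is hopeless: self-concordance only yields the pointwise bound $\dot\sigma(z_s^\top\widehat\theta_t)\le\dot\sigma(z_s^\top\theta^*)\exp(\lvert z_s^\top(\widehat\theta_t-\theta^*)\rvert)$, and under Assumption~\ref{ass:bound} the exponent can be of order $S$, so this is exponentially lossy. I would therefore route the comparison through the mean-value design matrix $\bar H_t=\sum_s\big(\int_0^1\dot\sigma(z_s^\top\theta^*+v\,z_s^\top(\widehat\theta_t-\theta^*))\,dv\big)z_sz_s^\top+\lambda\Ib_d$.

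The key self-concordance input I would use is the \emph{integrated} (secant) inequality $\int_0^1\dot\sigma(a+vu)\,dv\ge\dot\sigma(a+u)/(1+\lvert u\rvert)$, which follows from $\dot\sigma(a+vu)\ge\dot\sigma(a+u)e^{-(1-v)\lvert u\rvert}$ and $e^{x}\ge 1+x$. Applying it with $a=z_s^\top\theta^*$ and $u=z_s^\top(\widehat\theta_t-\theta^*)$, and bounding $\lvert u\rvert\le\lVert z_s\rVert\,\lVert\widehat\theta_t-\theta^*\rVert\le 4S$ via Assumption~\ref{ass:bound}, gives $\dot\sigma(z_s^\top\widehat\theta_t)\le(1+4S)\big(\int_0^1\dot\sigma(\cdots)\,dv\big)$ for each $s$, hence the matrix inequality $H_t(\widehat\theta_t)\preccurlyeq(1+4S)\,\bar H_t$ (the regularizer $\lambda\Ib_d$ is absorbed since $1+4S\ge1$). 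This is the only step incurring a factor of $S$, and it is the source of the $\sqrt S$ in the statement.

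It then remains to show $\lVert\widehat\theta_t-\theta^*\rVert_{\bar H_t}=O(\gamma_t(\delta))$. Writing $v=\widehat\theta_t-\theta^*$, the fundamental theorem of calculus gives $\nabla\cL_t(\widehat\theta_t)-\nabla\cL_t(\theta^*)=(\bar H_t-\lambda\Ib_d)v$, so $v^\top(\bar H_t-\lambda\Ib_d)v=\langle v,\nabla\cL_t(\widehat\theta_t)-\nabla\cL_t(\theta^*)\rangle$. First-order optimality of the constrained MLE, $\langle\nabla\cL_t(\widehat\theta_t),\theta^*-\widehat\theta_t\rangle\ge0$, removes the first term and leaves $v^\top(\bar H_t-\lambda\Ib_d)v\le\langle v,-\nabla\cL_t(\theta^*)\rangle$. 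Cauchy--Schwarz in the $H_t(\theta^*)$ geometry bounds this by $\lVert v\rVert_{H_t(\theta^*)}\,\lVert\nabla\cL_t(\theta^*)\rVert_{H_t(\theta^*)^{-1}}\le\gamma_t(\delta)\,\beta_t(\delta)$, where the first factor is Lemma~\ref{lemma:bound-on-theta-hat-minus-theta-star-in-theta-star-norm} and the second is the self-normalized concentration bound on the score underlying Lemma~\ref{lemma:MLE-confidence-set}. Since $\lambda\lVert v\rVert^2\le\lambda(2S)^2\le1$ by the choice of $\lambda$ and $\beta_t(\delta)\le\gamma_t(\delta)$, this yields $\lVert v\rVert_{\bar H_t}^2=O(\gamma_t(\delta)\beta_t(\delta)+1)=O(\gamma_t(\delta)^2)$; combining with the previous paragraph gives $\lVert\widehat\theta_t-\theta^*\rVert_{H_t(\widehat\theta_t)}\le\sqrt{1+4S}\,\lVert v\rVert_{\bar H_t}\le CS^{1/2}\gamma_t(\delta)$.

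The main obstacle is exactly the second paragraph: because $\dot\sigma$ decays like $e^{-\lvert w\rvert}$, no polynomial-in-$S$ comparison $H_t(\widehat\theta_t)\preccurlyeq\mathrm{poly}(S)\,H_t(\theta^*)$ can hold termwise, so one genuinely has to detour through $\bar H_t$, where the displacement $\lvert z_s^\top(\widehat\theta_t-\theta^*)\rvert$ enters only linearly. The remaining work is routine: verifying the secant inequality, carrying the regularizer $\lambda\Ib_d$ correctly through the matrix inequalities, and checking that $\beta_t(\delta)\le\gamma_t(\delta)$ so the cross term $\gamma_t(\delta)\beta_t(\delta)$ collapses to $O(\gamma_t(\delta)^2)$.
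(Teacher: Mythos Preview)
Your argument is correct and reaches the same $CS^{1/2}\gamma_t(\delta)$ bound, but it takes a genuinely different route from the paper. The paper works with the \emph{second-order} Taylor remainder of the loss, which produces the matrix $\tilde G_t(\theta^*,\widehat\theta_t)=\lambda\Ib_d+\sum_s\big(\int_0^1(1-v)\dot\sigma(z_s^\top\theta^*+v\,z_s^\top(\widehat\theta_t-\theta^*))\,dv\big)z_sz_s^\top$ with the $(1-v)$ weight; it then proves a dedicated self-concordance lemma (Lemma~\ref{lemma:novel-self-concordance-analysis}) giving $\int_0^1(1-v)\dot\sigma(\cdots)\,dv\ge \dot\sigma(z')/C(2+|z-z'|)^2$, hence $H_t(\widehat\theta_t)\preccurlyeq C(2+2S)^2\,\tilde G_t$, and controls $\lVert v\rVert_{\tilde G_t}^2$ by the log-loss gap $\cL_t(\theta^*)-\cL_t(\widehat\theta_t)\le\beta_t(\delta)^2$ plus a gradient term imported from \cite{lee2024improved}. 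You instead use the \emph{first-order} mean-value form of the gradient, obtaining the unweighted $\bar H_t$, for which the simpler secant inequality $\int_0^1\dot\sigma(a+vu)\,dv\ge\dot\sigma(a+u)/(1+|u|)$ suffices and loses only one factor of $S$; you then control $\lVert v\rVert_{\bar H_t}^2$ via first-order optimality of the constrained MLE together with the self-normalized score bound $\lVert\nabla\cL_t(\theta^*)\rVert_{H_t(\theta^*)^{-1}}\lesssim\beta_t(\delta)$. Your route avoids the paper's Lemma~\ref{lemma:novel-self-concordance-analysis} entirely and is arguably more elementary; the paper's route, in exchange, ties more directly to the loss-based confidence set $\cC_t(\delta)$ and does not need to invoke the score concentration separately. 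The one point you should make explicit is that the bound $\lVert\nabla\cL_t(\theta^*)\rVert_{H_t(\theta^*)^{-1}}\le O(\beta_t(\delta))$ is not stated as a standalone lemma in the paper---it is the Bernstein-type self-normalized martingale bound that underlies Lemma~\ref{lemma:MLE-confidence-set} in \cite{lee2024improved}---so you should cite it precisely rather than leave it as ``underlying Lemma~\ref{lemma:MLE-confidence-set}''.
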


\begin{proof} By Taylor's theorem, we have,
\begin{align*}
    &\cL_t(\widehat{\theta}_t) - \cL_t(\theta^*)\\ &= \nabla \cL_t(\theta^*)^\intercal (\widehat{\theta}_t - \theta^*) + \int_{v=0}^1 (1-v) (\widehat{\theta}_t-\theta^*)^\intercal \nabla^2 \cL_t(\theta^*) (\widehat{\theta}_t-\theta^*) dv\\
    &= \nabla \cL_t(\theta^*)^\intercal (\widehat{\theta}_t - \theta^*) + \sum_{s=1}^{t-1} \left[\int_{v=0}^1 (1-v)\dot{\sigma}(z_s^\intercal\theta^* + v(z_s^\intercal \widehat{\theta}_t - z_s^\intercal \theta^*))dv \right] (z_s^\intercal(\widehat{\theta}_t - \theta^*))^2\\
    &= \nabla \cL_t(\theta^*)^\intercal (\widehat{\theta}_t - \theta^*) + \lVert \widehat{\theta}_t - \theta^* \rVert^2_{\tilde{G}_t(\theta^*, \widehat{\theta}_t)} - \lambda \lVert \widehat{\theta}_t - \theta^* \rVert^2 
\end{align*}
    where we define $\tilde{G}_t(\theta^*, \widehat{\theta}_t) = \lambda \mathbf{I}_d  +  \sum_{s=1}^{t-1} \left[\int_{v=0}^1 (1-v)\dot{\sigma}(z_s^\intercal\theta^* + v(z_s^\intercal \widehat{\theta}_t - z_s^\intercal \theta^*))dv\right]z_s z_s^\intercal$. Thus, we obtain,
\begin{align*}
    \lVert \widehat{\theta}_t - \theta^* \rVert^2_{\tilde{G}_t(\theta^*, \widehat{\theta}_t)} &= \cL_t(\theta^*) - \cL_t(\widehat{\theta}_t) + \nabla \cL_t(\theta^*)^\intercal (\widehat{\theta}_t - \theta^*) + \lambda \lVert \widehat{\theta}_t - \theta^* \rVert^2
\end{align*}
Now, from a novel self-concordant analysis (see lemma~\ref{lemma:novel-self-concordance-analysis}), $H_t(\widehat{\theta}_t) \preccurlyeq C(2 + 2S)^2 \tilde{G}_t(\theta^*, \widehat{\theta}_t)$ for some $C>1.01$. Thus,
\begin{align}
    \lVert \widehat{\theta}_t - \theta^* \rVert^2_{H_t(\widehat{\theta}_t)} &\leq C(2 + 2S)^2 \lVert \widehat{\theta}_t - \theta^* \rVert^2_{\tilde{G}_t(\theta^*, \widehat{\theta}_t)} \nonumber \\
    &= C(2 + 2S)^2 \left[ \cL_t(\theta^*) - \cL_t(\widehat{\theta}_t) +  \nabla \cL_t(\theta^*)^\intercal (\widehat{\theta}_t - \theta^*) + \lambda \lVert \widehat{\theta}_t - \theta^* \rVert^2 \right] \nonumber \\
    &\leq C(2+2S)^2 \left[ 4\lambda S^2 + \beta_t(\delta)^2 + \nabla \cL_t(\theta^*)^\intercal (\widehat{\theta}_t - \theta^*) \right] \label{eq:RHS-of-theta-hat-theta*-bound} 
\end{align}
where the last inequality is because (a) $\widehat{\theta}_t, \theta^* \in \Theta$ which implies that $\lVert \theta^* - \widehat{\theta}_t \rVert \leq \text{diam}(\Theta) = 2S$ and (b) by lemma~\ref{lemma:MLE-confidence-set}, $\cL_t(\theta^*) - \cL_t(\widehat{\theta}_t) \leq \beta_t(\delta)^2$ since $\theta^* \in \cC_t(\delta)$ by assumption.

Thereafter, from the proof of Lemma 6 of~\cite{lee2024improved} it can be extracted that
$
    \lvert \nabla \cL_t(\theta^*)^\intercal (\widehat{\theta}_t - \theta^*)\rvert \leq \frac{\lVert \widehat{\theta}_t - \theta^* \rVert^2_{H_t(\theta^*)}}{2(2+2S)} + f(d,S,t,\delta)~.
$
Then using lemma~\ref{lemma:bound-on-theta-hat-minus-theta-star-in-theta-star-norm}, the R.H.S of~\ref{eq:RHS-of-theta-hat-theta*-bound} can be bounded by $2f(d,S,t,\delta)$. Thus, we now obtain,
\begin{align*}
    \lVert \widehat{\theta}_t - \theta^* \rVert^2_{H_t(\widehat{\theta}_t)} &\leq C(2+2S)^2 \left[ 4\lambda S^2 + \beta_t(\delta)^2 + 2f(d,S,t,\delta) \right]\\
    &\leq C(2+2S)^2 \left[\frac{1}{(2+2S)^2} + \beta_t(\delta)^2 + \frac{\gamma_t(\delta)^2}{2(2+2S)}\right] \tag{ $\lambda = \frac{1}{4S^2(2+2S)^2}$}\\
    &\leq C(2+2S)^2 \left[ \frac{1}{(2+2S)} + \beta_t(\delta) + \frac{\gamma_t(\delta)}{\sqrt{2(2+2S)}} \right]^2
\end{align*}
Taking square-root on both sides,
\begin{align*}
    \lVert \widehat{\theta}_t - \theta^* \rVert_{H_t(\widehat{\theta}_t)} &\leq C(2+2S)\left[\frac{1}{(2+2S)} + \beta_t(\delta) + \frac{\gamma_t(\delta)}{\sqrt{2(2+2S)}}\right] \\
    &= C(1+(2+2S)\beta_t(\delta) + \sqrt{2+2S}\gamma_t(\delta))\\
    &= CS^{3/2}\sqrt{\left(d\log(\frac{St}{d})+\log(\frac{t}{\delta})\right)}~,
\end{align*}
which proves the lemma.
\end{proof}
Now, we restate Theorem~\ref{theorem:logistic-case-regret-bound} and give its proof.
\begin{theorem}[Suboptimality Upper Bound]\label{thm:log-restated} 
Let $\delta \in (0,1)$. The suboptimality of the policy $\pi_T$ specified at the end of \texttt{APO} (algorithm~\ref{algo:act-con-sel-logB}) after running the algorithm for $T$ rounds is upper bounded with probability at least $1-\delta$ as follows:
\begin{align*}
    R(T) \leq C S^{3/2} \sqrt{\left(d\log(\frac{ST}{d}) + \log(\frac{T}{\delta})\right)\log\left(1+\frac{T}{\lambda \kappa d}\right) \frac{\kappa d}{T}}
\end{align*}
    \end{theorem}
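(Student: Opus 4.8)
The plan is to reduce the final suboptimality gap to an average of per-round confidence widths and then aggregate those widths with an elliptic-potential argument, keeping the nonlinearity dependence at $\sqrt\kappa$. Fix an arbitrary context $x$ and let $z_T(x) = \phi(x,a^*(x)) - \phi(x,\pi_T(x))$. Since the final policy $\pi_T$ is greedy with respect to the averaged estimate $\theta_T = \frac{1}{T}\sum_{t=1}^T \widehat\theta_t$ (see~\eqref{eq:definition-of-final-policy}), we have $z_T(x)^\top\theta_T \le 0$, so subtracting this nonpositive term gives $z_T(x)^\top\theta^* \le \frac{1}{T}\sum_{t=1}^T z_T(x)^\top(\theta^* - \widehat\theta_t)$. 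Applying the weighted Cauchy--Schwarz inequality in the $H_t(\widehat\theta_t)$-geometry splits each summand as $\lVert z_T(x)\rVert_{H_t(\widehat\theta_t)^{-1}}\,\lVert\theta^* - \widehat\theta_t\rVert_{H_t(\widehat\theta_t)}$, and Lemma~\ref{lemma:confidence-set-logistic} bounds the estimation-error factor uniformly over $t$ by the $\kappa$-free quantity $\gamma$.

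The second step exploits the selection rule of \texttt{APO}. Because at round $t$ the algorithm chooses the triplet maximizing $b_t(x,a,a') = \lVert\phi(x,a)-\phi(x,a')\rVert_{H_t(\widehat\theta_t)^{-1}}$ over all contexts and action pairs~\eqref{eq:choose-a-a'} and~\eqref{eq:choose-x}, it holds that $\lVert z_T(x)\rVert_{H_t(\widehat\theta_t)^{-1}} \le \lVert z_t\rVert_{H_t(\widehat\theta_t)^{-1}}$ for every $x$; this is precisely what lets us discard the outer maximization over contexts in~\eqref{eq:simple-regret-definition}. To make the data-dependent norm tractable, I would then establish the key curvature comparison $H_t(\theta) \succcurlyeq V_t/\kappa$, where $V_t = \sum_{s=1}^{t-1} z_s z_s^\top + \kappa\lambda\Ib_d$; this follows because each weight $\dot\sigma(z_s^\top\theta) \ge 1/\kappa$ by the definition of $\kappa$ in~\eqref{eq:kappa}. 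The comparison yields $\lVert z_t\rVert_{H_t(\widehat\theta_t)^{-1}} \le \sqrt\kappa\,\lVert z_t\rVert_{V_t^{-1}}$, contributing a single factor of $\sqrt\kappa$.

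Combining the two steps gives $R(T) \le \frac{\gamma\sqrt\kappa}{T}\sum_{t=1}^T \lVert z_t\rVert_{V_t^{-1}}$. A Cauchy--Schwarz over the time index converts the sum into $\sqrt{T}\,\big(\sum_{t=1}^T \lVert z_t\rVert_{V_t^{-1}}^2\big)^{1/2}$, and the Elliptic Potential Lemma (Lemma~\ref{lemma:elliptic-potential-lemma}), together with the boundedness $\lVert z_s\rVert \le 2$ implied by Assumption~\ref{ass:bound}, bounds the sum of squared widths by $O\big(d\log(1 + T/(\kappa\lambda d))\big)$. Substituting the explicit form of $\gamma$ from Lemma~\ref{lemma:confidence-set-logistic} and simplifying then reproduces the claimed bound.

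I expect the main obstacle to be keeping the final dependence on the nonlinearity at $\sqrt\kappa$ rather than $\kappa$. This rests on two places where curvature is handled carefully: the estimation-error bound of Lemma~\ref{lemma:confidence-set-logistic} must be $\kappa$-free, which in its own proof relies on the self-concordance inequality $\lvert\ddot\sigma\rvert \le \dot\sigma$; and the passage from the local $H_t$-norm to the covariance $V_t$-norm must cost only one factor of $\sqrt\kappa$ via $H_t \succcurlyeq V_t/\kappa$. A more naive argument that measured the estimation error directly in the $V_t$-geometry would incur an extra $\sqrt\kappa$ in the confidence width, degrading the bound to order $\kappa/\sqrt{T}$; measuring it instead in the curvature-weighted $H_t$-geometry is what buys the improvement to $\sqrt\kappa$.
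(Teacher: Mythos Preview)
Your proposal is correct and matches the paper's proof essentially step for step: the same greedy-policy inequality $z_T(x)^\top\theta_T\le 0$, the same Cauchy--Schwarz split in the $H_t(\widehat\theta_t)$-geometry invoking Lemma~\ref{lemma:confidence-set-logistic}, the same use of the selection rule to replace $z_T(x)$ by $z_t$, the same curvature comparison $H_t\succcurlyeq V_t/\kappa$ costing a single $\sqrt\kappa$, and the same Cauchy--Schwarz-plus-elliptic-potential finish. Your closing discussion about why the estimation error must be measured in the $H_t$-norm to avoid an extra $\sqrt\kappa$ is also precisely the point the paper emphasizes.
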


\begin{proof}
    Let the suboptimality gap for a context $x \in \cX$ be denoted as $R(T,x)$. Thus,
    \begin{align*}
        R(T,x) &= \left(\phi(x,a^*(x)) - \phi(x,\pi_T(x))\right)^\intercal \theta^*\\
        &\leq \left(\phi(x,a^*(x)) - \phi(x,\pi_T(x))\right)^\intercal \theta^* + \left(\phi(x,\pi_T(x)) - \phi(x,a^*(x))\right)^\intercal \left(\frac{1}{T} \sum_{t=1}^T \widehat{\theta}_t \right)\tag{$\because \pi_T(x) = 
    \argmax_{a \in \cA} \phi(x,a)^\intercal \left( \frac{1}{T} \sum_{i=1}^T \widehat{\theta}_t \right)$}
\end{align*}
Therefore, we have,
\begin{align*}
        R(T) &\leq \left(\phi(x,a^*(x)) - \phi(x,\pi_T(x))\right)^\intercal (\theta^* - \frac{1}{T} \sum_{t=1}^T \widehat{\theta}_t)\\
        &= \frac{1}{T} \sum_{t=1}^T \left(\phi(x,a^*(x)) - \phi(x,\pi_T(x))\right)^\intercal (\theta^* - \widehat{\theta}_t)\\
        &\leq \frac{1}{T} \sum_{t=1}^T \lVert \phi(x,a^*(x)) - \phi(x,\pi_T(x)) \rVert_{H_t^{-1}(\widehat{\theta}_t)} \lVert \theta^* - \widehat{\theta}_t \rVert_{H_t(\widehat{\theta}_t)} ~.\tag{Holder's Inequality}
\end{align*}
Now we use lemma~\ref{lemma:bound-on-theta-hat-minus-theta-star-in-theta-hat-norm} to upper bound $\lVert \theta^* - \widehat{\theta}_t \rVert_{H_t(\widehat{\theta}_t)}$ with $CS^{\nicefrac{1}{2}}\gamma_t(\delta)$ which we further upper bound by $CS^{1/2}\gamma_T(\delta)$ after noting that $\gamma_t(\delta) \leq \gamma_{t+1}(\delta)$ for all $t \in [T]$. Thus, we now have,
\begin{align*}
        R(T,x) &\leq \frac{CS^{\nicefrac{1}{2}} \gamma_T(\delta)}{T} \sum_{t=1}^T \lVert \phi(x,a^*(x)) - \phi(x,\pi_T(x)) \rVert_{H_t^{-1}(\widehat{\theta}_t)}\\
        &\leq \frac{CS^{\nicefrac{1}{2}} \gamma_T(\delta)}{T} \sum_{t=1}^T \lVert \phi(x_t,a_{t}) - \phi(x_t,a'_{t}) \rVert_{H_t^{-1}(\widehat{\theta}_t)} \tag{$(x_t, a_{t}, a'_{t}) = \argmax_{x \in \cX, a,a' \in \cA} \lVert \phi(x,a) - \phi(x,a') \rVert_{H_t^{-1}(\widehat{\theta}_t)}$} \\
        &\leq \frac{C\sqrt{\kappa S} \gamma_T(\delta)}{T} \sum_{t=1}^T  \lVert \phi(x_t,a_{t}) - \phi(x_t,a'_{t}) \rVert_{V_t^{-1}} \tag{$V_t \preccurlyeq \kappa H_t(\widehat{\theta}_t)$}\\
        &\leq \frac{C\sqrt{\kappa S} \gamma_T(\delta)}{T} \sqrt{T \sum_{t=1}^T \lVert \phi(x_t,a_{t}) - \phi(x_t,a'_{t}) \rVert^2_{V_t^{-1}} } \tag{Cauchy-Schwarz}\\
        &\leq \frac{C\sqrt{\kappa S} \gamma_T(\delta)}{T} \sqrt{2dT \log\left(1+\frac{T}{\lambda \kappa d}\right)} \tag{Lemma~\ref{lemma:elliptic-potential-lemma}}\\
        &= C S^{3/2} \sqrt{\left(d\log(\frac{ST}{d}) + \log(\frac{T}{\delta})\right)\log\left(1+\frac{T}{\lambda \kappa d}\right) \frac{\kappa d}{T}} \tag{Def. of $\gamma_T(\delta)$}
\end{align*}
Thus, $R(T) = \max_{x \in \cX} R(T,x) \leq C S^{3/2} \sqrt{\left(d\log(\frac{ST}{d}) + \log(\frac{T}{\delta})\right)\log\left(1+\frac{T}{\lambda \kappa d}\right) \frac{\kappa d}{T}}$.
\end{proof}
\section{Generalization to Function Approximation: Proof of Theorem~\ref{thm:gen}}
\label{section:general-function-approximation}

In this section, we remove the assumption of the BTL preference model characterized by a linear parameter $\theta$. Instead, we assume that we have access to a function class $$\cF \!=\! \{f:\cX \!\times\! \cA \! \times\! \cA\rightarrow [0,1]:  f(x,a,a') \!+\! f(x,a',a) \!=\! 1\},$$ where $f(x,a,a')$ denotes the probability that the arm $a$ wins over arm $a'$ given context $x$ when the preference function is $f$, i.e., $f(x,a,a') = \PP[a \succcurlyeq a'|x,f]$ where $a \succcurlyeq a'$ denotes the event that $a$ wins over $a'$. Now, we assume that there is a true $f^* \in \cF$ from which the data is generated. Further, we assume a \textit{Condorcet} winner at each context:
\begin{assumption}
\label{assumption:condorcet-winner-general-function-approx}
    For all context $x \!\in\! \cX$, there is an action $a^*(x) \!\in\! \cA$ such that $f^*(x,a^*(x), a) \!\geq\! \nicefrac{1}{2}\, \forall a \!\in\! \cA$.
\end{assumption}
Note that in this case, there is no direct reward model and is, therefore a generalization of the BTL model. The absence of a reward model makes the problem more nuanced. Accordingly, the simple regret is now defined as:
\begin{align*}
    R(T) \!=\! \max\nolimits_{x \in \cX} \max\nolimits_{a \in \cA(x)} f^*(x, a, \pi_T(x)) \!-\! \nicefrac{1}{2}~.
\end{align*}
Note that $f^*(x, a^*(x), \pi_T(x)) \geq \nicefrac{1}{2}$ by assumption~\ref{assumption:condorcet-winner-general-function-approx}, thus $R(T)$ is always non-negative.
\subsection{Algorithm}
Our algorithm takes a function class $\cF$ and a confidence level $\delta \in (0,1]$ as its inputs.  
First, a regularized least square estimate of $f^*$ is computed by minimizing the cumulative squared prediction error:
\begin{align}\label{eq:gen-estimate}
 \widehat f_{t} \in \argmin\nolimits_{f \in \cF}   \sum\nolimits_{s=1}^{t-1} \left(y_s -f(x_s,a_s,a'_s)\right)^2~.
\end{align}
The confidence set $\cC_t(\cF,\delta)$ is then defined as
\begin{align}\label{eq:gen-conf}
      \cC_t(\cF, \delta) \coloneqq \left\{ f \in \cF: \sum\nolimits_{s=1}^{t-1} \!\!\big(f(x_s,a_s,a'_s)\!-\!\widehat f_t(x_s,a_s,a'_s)\big)^2 \!\le\! \beta_t(\cF,\delta)\right\},
\end{align}
where $\beta_t(\cF,\delta)$ is an appropriately chosen confidence parameter. Since $y_t \sim \texttt{Ber}(f^*(x_t,a_t,a'_t))$ given $(x_t,a_t,a'_t)$, We have $\Var [y_t] \le 1/4$. Thus, following \cite{ayoub2020model}, we set the confidence parameter
\begin{align*}
\beta_t(\cF,\delta)\!=\!2\log\frac{2\cN\!\left(\!\cF\right)}{\delta}\!+\!2\sqrt{\log \frac{4t(t\!+\!1)}{\delta}}\!+\!4~,
\end{align*}
where $\cN(\cF)$ denotes the $(1/t,\norm{\cdot}_{\infty})$-covering number\footnote{For any $\alpha > 0$, we call $\cF^\alpha$ an $(\alpha,\norm{\cdot}_{\infty})$ cover of the function class $\cF$ if for any $f \in \cF$ there exists an $f'$ in $\cF^\alpha$ such that $\norm{f' - f}_{\infty}:=\sup_{x \in \cX}|f'(x)-f(x)|\le \alpha$.} of $\cF$. This choice of confidence width ensures that $f^*$ lies in the confidence set $\cC_t(\cF,\delta)$ at all time instant $t \ge 1$ with probability at least $1-\delta$ (Lemma~\ref{lemma:func-approx-confidence-set}).

Next, for each triplet $(x,a,a')$, we define the exploration bonus $b_t(x,a,a')$ at round $t$ as 
\begin{align}\label{eq:gen-bonus}
   \!\! b_t(x,a,a') \!=\! \max_{f_1,f_2\in \cC_t(\cF,\delta)} \lvert f_1(x,a,a') \!-\! f_2(x,a,a') \rvert,
\end{align}
which measures
the uncertainty of a pair of actions $a,a'$ given a context $x$
with respect to the confidence set $\cC_t(\cF,\delta)$. The near-optimal
action set $\cA_t(x)$ at round $t$ is defined as the set of all actions in the previous  set $\cA_{t-1}(x)$ satisfying
\begin{align}\label{eq:gen-opt}
    \widehat{f}_t(x,a,a_0) + b_t(x,a,a_0)\geq \nicefrac{1}{2}\,
                \forall a_0 \in \cA_{t-1}(x)~.
\end{align}
Intuitively speaking, we retain only those actions from the previous near-optimal set that are not significantly outperformed by other actions according to the estimates of the current round. Since $f^* \in \cC_t(\cF,\delta)$, the optimal action $a^*(x)$ lies in $\cA_t(x)$ for each context $x$ for all $t$ with high probability (Lemma~\ref{lemma:a*-lies-in-A_t-gen-func-approx}). By pruning out suboptimal actions every round, we make better use of samples. When the set $\cA_t(x)$ becomes a singleton (i.e., $a^*(x)$ has been identified w.h.p), we remove this context from the pool of contexts considered in future rounds.

To encourage exploration, we choose actions 
$(a_t(x),a'_t(x))$ which has the highest uncertainty in $\cA_t(x)$, i.e., we choose
\begin{align}\label{eq:gen-action}
(a_t(x),a'_t(x))\!=\!\argmax\nolimits_{a,a' \in \cA_t(x)} b_t(x,a,a')~.
\end{align}
Next, we choose the context $x_t$ that provides the maximum information about the unknown preference function $f^*$, i.e.,
\begin{equation}\label{eq:gen-context}
    x_t \in \argmax\nolimits_{x \in \cX} b_t(x,a_t(x),a'_t(x))~.
\end{equation}
We play the actions $a_t=a_t(x_t)$ and $a'_t=a'_t(x_t)$ in round $t$ and observe the preference feedback $y_t$. We repeat this until we have exhausted the budget $T$. 
Our final policy $\pi_T$ samples an action uniformly at random from the set $\cA_T(x)$ for every context $x \in \cX$. Pseudocode is given in Algorithm~\ref{algo:act-con-sel-gen-func-approx}.

\subsection{Result}

We characterize the complexity of function class $\cF$ by its 
\emph{eluder dimension} \cite{russo2013eluder}.

\begin{definition}[Eluder dimension]\label{def:eluder}
The $\varepsilon$-eluder dimension $\dim_{\cE}(\cF,\varepsilon)$ of a function class $\cF$ defined on a domain $\cX$ is the length of the longest sequence $\lbrace x_i\rbrace_{i=1}^{n} \!\subseteq\! \cX$ of input points such that for some $\varepsilon' \!\ge\! \varepsilon$ and for each $ i \in\lbrace 2,\ldots,n\rbrace$,
\begin{align*}
    \sup_{f_1,f_2 \in \cF}\!\bigg\lbrace\! (f_1\!-\!f_2)(x_i) \!\;\Big | \; \! \!\sqrt{\sum\nolimits_{j=1}^{i-1}(f_1\!-\!f_2)^2(x_i)} \!\le\! \varepsilon'\!\bigg\rbrace \!>\! \varepsilon'~.
\end{align*}
\end{definition}
We denote by $d_{\cE}(\cF)=\dim_{\cE}\left(\cF,1/T\right)$, the $(1/T)$-Eluder dimension of the function class $\cF$. Now, we state the sub-optimality guarantee of the final policy using the eluder dimension and metric entropy of the function class $\cF$. This is a restatement of Theorem~\ref{thm:gen}.

\begin{theorem}[Suboptimality Gap]\label{thm:gen-restated}
    Let $\delta \in (0,1)$. Under assumption~\ref{assumption:condorcet-winner-general-function-approx}, the suboptimality gap $R(T)$ of our policy $\pi_T$ after running \texttt{APO-Gen} (algorithm~\ref{algo:act-con-sel-gen-func-approx}) for $T$ steps is upper bounded with probability at least $1-\delta$ as
    \begin{align*}
      R(T) \leq \tilde{O}\bigg(\sqrt{\frac{\log(\cN(\cF)T/\delta) d_\cE(\cF)}{T}}\bigg).  
    \end{align*}
\end{theorem}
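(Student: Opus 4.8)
The argument follows the optimistic-elimination template combined with an eluder-dimension width bound. Throughout I condition on the high-probability event of Lemma~\ref{lemma:func-approx-confidence-set}, namely that $f^* \in \cC_t(\cF,\delta)$ for every $t$, which by Lemma~\ref{lemma:a*-lies-in-A_t-gen-func-approx} guarantees that the Condorcet winner $a^*(x)$ is never eliminated, i.e. $a^*(x)\in\cA_t(x)$ for all $x$ and $t$. The plan has two halves: first reduce the per-context sub-optimality of $\pi_T$ to the confidence width $b_t$ evaluated at the played points, and then bound the accumulation of these widths through the eluder dimension of $\cF$.

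\textbf{Step 1: the sub-optimality is controlled by the width.} Fix a context $x$ and a surviving action $a'\in\cA_T(x)$ (in particular $a'=\pi_T(x)$). Applying the survival rule \eqref{eq:gen-opt} with the competitor $a_0=a^*(x)\in\cA_{T-1}(x)$ gives $\widehat f_T(x,a',a^*(x))+b_T(x,a',a^*(x))\ge \tfrac12$. Since both $f^*$ and $\widehat f_T$ lie in $\cC_T(\cF,\delta)$, the definition \eqref{eq:gen-bonus} of $b_T$ yields $\lvert f^*-\widehat f_T\rvert\le b_T$ pointwise, hence $f^*(x,a',a^*(x))\ge \tfrac12-2b_T(x,a',a^*(x))$, i.e. $f^*(x,a^*(x),a')\le \tfrac12+2b_T(x,a',a^*(x))$. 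This controls the sub-optimality measured against the Condorcet winner. To upgrade to the competitor $\max_a f^*(x,a,\pi_T(x))$ appearing in $R(T)$, I would split over whether the maximizing $a$ survived: a surviving $a$ is near-tied with $\pi_T(x)$ by the same two-sided use of \eqref{eq:gen-opt} and \eqref{eq:gen-bonus}, while an eliminated $a$ was optimistically dominated by some active action, which I reduce back to $a^*(x)$ via Assumption~\ref{assumption:condorcet-winner-general-function-approx}. The upshot is a per-context bound $\max_a f^*(x,a,\pi_T(x))-\tfrac12 \le 2\,b_{\mathrm{fin}}(x)$, where $b_{\mathrm{fin}}(x)$ is the final-round width of a near-optimal pair for $x$.

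\textbf{Step 2: average the width via the greedy selection and bound it via the eluder dimension.} Because the pruned sets are nested, $\cA_T(x)\subseteq\cA_t(x)$ for all $t\le T$, so the near-optimal pair for $x$ is always an admissible candidate in the round-$t$ maximization \eqref{eq:gen-action}; together with the greedy context choice \eqref{eq:gen-context} and the nesting $\cC_{t+1}(\cF,\delta)\subseteq\cC_t(\cF,\delta)$ (which makes $b_t(x,\cdot,\cdot)$ non-increasing in $t$), this gives $b_{\mathrm{fin}}(x)\le b_t(x,a_t(x),a'_t(x))\le b_t(x_t,a_t,a'_t)=:w_t$ for every $t\le T$, hence $b_{\mathrm{fin}}(x)\le \min_{t\le T}w_t\le \tfrac1T\sum_{t=1}^T w_t$ uniformly in $x$. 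It then remains to control the cumulative width: using that $f^*\in\cC_t$ with confidence radius $\beta_t(\cF,\delta)$, a standard pigeonhole argument over the eluder dimension \cite{russo2013eluder} bounds the number of rounds on which $w_t$ exceeds a threshold, and summing yields $\sum_{t=1}^T w_t \le \widetilde O\big(\sqrt{d_\cE(\cF)\,\beta_T(\cF,\delta)\,T}\big)$. Dividing by $T$, combining with Step 1, and substituting $\beta_T(\cF,\delta)=O(\log(\cN(\cF)T/\delta))$ produces $R(T)\le \widetilde O\big(\sqrt{\log(\cN(\cF)T/\delta)\,d_\cE(\cF)/T}\big)$, as claimed.

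\textbf{Main obstacle.} The conceptual difficulty is the competitor handling in Step 1: because the general preference model need not be transitive, bounding $\max_a f^*(x,a,\pi_T(x))$ rather than only the gap to $a^*(x)$ requires carefully exploiting the Condorcet assumption together with the elimination rule to rule out eliminated actions that could still beat $\pi_T(x)$. The technical workhorse is the eluder-dimension pigeonhole in Step 2, which replaces the elliptic-potential lemma used in the linear (BTL) analysis of Theorem~\ref{theorem:logistic-case-regret-bound}; some care is also needed to ensure the confidence sets are genuinely nested (e.g. by intersecting across rounds) so that the widths are monotone as used above.
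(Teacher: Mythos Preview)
Your plan assembles the right ingredients---the confidence event of Lemma~\ref{lemma:func-approx-confidence-set}, the survival of $a^*(x)$ via Lemma~\ref{lemma:a*-lies-in-A_t-gen-func-approx}, the greedy selection to pass to played triplets, and an eluder-dimension bound on the cumulative width---and is close to the paper's proof. But there is one substantive difference that introduces a gap, and one piece of unnecessary work.

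\textbf{The gap (Step 2).} You bound the per-context error by the \emph{final-round} width $b_T(x,\pi_T(x),a^*(x))$ and then want to transfer it to earlier rounds via $b_T\le b_t$, relying on $\cC_{t+1}(\cF,\delta)\subseteq\cC_t(\cF,\delta)$. The confidence sets in \eqref{eq:gen-conf} are centered at the moving least-squares estimate $\widehat f_t$, so they are \emph{not} nested, and intersecting them across rounds changes the algorithm you are asked to analyze. The paper sidesteps this entirely: it never needs monotone widths. Instead it uses that the \emph{action sets} are nested, so $\pi_T(x),a^*(x)\in\cA_t(x)$ for \emph{every} $t\le T$, and applies the survival rule \eqref{eq:gen-opt} at each round $t$. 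From $f^*\in\cC_t$ one gets $f^*(x,a^*(x),\pi_T(x))\le \widehat f_t(x,a^*(x),\pi_T(x))+b_t(x,a^*(x),\pi_T(x))$; then using $\widehat f_t(x,a,a')=1-\widehat f_t(x,a',a)$, the symmetry $b_t(x,a,a')=b_t(x,a',a)$, and the survival of $\pi_T(x)$ against $a^*(x)$ at round $t$ yields $f^*(x,a^*(x),\pi_T(x))-\tfrac12\le 2\,b_t(x,\pi_T(x),a^*(x))$ for each $t$. Averaging over $t$ and only then invoking the greedy choice gives $R(T,x)\le \tfrac{2}{T}\sum_{t=1}^T b_t(x_t,a_t,a'_t)$ directly, after which Lemma~\ref{lemma:function-approximation-bound-on-sum-of-diameters-similar-to-elliptic-potential-lemma} finishes as in your Step~2.

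\textbf{Unnecessary work (Step 1).} The ``competitor handling'' you flag as the main obstacle does not arise in the paper: its proof takes the sub-optimality at $x$ to be $f^*(x,a^*(x),\pi_T(x))-\tfrac12$ (the Condorcet-winner definition from Section~\ref{subsection:general-function-approximation}), not $\max_a f^*(x,a,\pi_T(x))-\tfrac12$, so no intransitivity argument over eliminated actions is needed.
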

Proof is deferred to the next section. It essentially follows ideas similar to Theorem~\ref{thm:log-restated} with the difference that we crucially leverage action elimination (Step~\ref{line:gfa-action-elimination}).

\textbf{BTL model.}
For the BTL preference model $f(x,a,a')=\mu(\phi(x,a)^\top \theta \!-\! \phi(x,a')^\top \theta)$. Define $r=\overline{h}/\underline{h}$, where
\begin{align*}
\overline{h}&= \sup\nolimits_{x,a,a',\theta}\, \dot{\mu}(\phi(x,a)^\top \theta \!-\! \phi(x,a')^\top \theta)~,\\ 
\underline{h}&= \inf\nolimits_{x,a,a',\theta} \,\dot{\mu}(\phi(x,a)^\top \theta \!-\! \phi(x,a')^\top \theta)~.
\end{align*}
Then the $\log\cN(\cF)$ and Eluder dimension of $\cF$ are at most $O(d\log(\overline{h} T))$ and $O(d r^2 \overline{h}\log(r S \overline{h} T))$, respectively. Note that $\underline{h}=1/\kappa$ and $\overline{h} \le 1/4$. This yields $\log \cN(\cF)=O(d\log T)$ and $d_\cE(\cF)=O(\kappa^2 d \log T)$. Substituting this in Theorem~\ref{thm:gen-restated}, we get the sub-optimality gap $O(\kappa d/\sqrt{T})$, which is $\sqrt{\kappa}$ factor loose than Theorem~\ref{thm:log-restated}. This is because we crucially use self-concordance of the sigmoid function in Theorem~\ref{thm:log-restated} to shave this extra $\sqrt{\kappa}$ factor. Nevertheless, Theorem~\ref{thm:gen-restated} is general enough to subsume other preference models (e.g., probit/Thurstone) beyond the BTL model.

\subsection{Analysis}

\begin{algorithm}[tb]
\small
\caption{\texttt{APO-Gen}: Active Preference Optimization with General Function Approximation}
\label{algo:act-con-sel-gen-func-approx}
\begin{algorithmic}[1]
    \REQUIRE Context set $\cX$, action set $\cA=[K]$, function class $\cF$, failure level $\delta \in (0,1)$.
    \STATE Set $\cX_0 = \cX$ and $\cA_0(x) = \cA \ \forall\ x \in \cX$.
    \FOR{$t=1, 2, \dots T$}
        \STATE 
        Compute function estimate $\widehat f_t$ usning~\eqref{eq:gen-estimate}.
        \STATE Construct confidence set $\cC_t(\cF,\delta)$ using~\eqref{eq:gen-conf}.
        \STATE Intialize the  $\cX_t = \cX_{t-1}$.
        \FOR{each context $x \in \cX_{t-1}$}
            \STATE For each pair of actions $a,a' \in \cA_{t-1}(x)$, compute the bonus $b_t(x,a,a')$ using~\eqref{eq:gen-bonus}. 
            \STATE Find the near-optimal action set $\cA_t(x)$ using~\eqref{eq:gen-opt}. \alglinelabel{line:gfa-action-elimination}
            \IF{$\lvert \cA_t(x) \rvert = 1$} 
            \STATE Set $\cA_T(x) = \cA_t(x)$ and $\cX_t \gets \cX_t \setminus \{x\}$.
            \ENDIF
        \ENDFOR
        \STATE Choose context and pair of actions
        $(x_t, a_t, a_t') = \argmax_{x \in \cX_t, a,a' \in \cA_t(x)}  b_t(x,a,a')$.
        \STATE Observe preference $y_t \sim \texttt{Ber}(f^*(x_t,a_t,a'_t))$
    \ENDFOR
    \STATE Output final policy $\pi_T(x) =a$ for some arbitrary $a \in \cA_T(x)$.
\end{algorithmic}
\end{algorithm}
First we present a result that characterizes the confidence set around $\widehat{f}_t$.
\begin{lemma}[Confidence Set for Function Approximation (Lemma A.1 of~\cite{chen2022human}]
\label{lemma:func-approx-confidence-set}
    Let $\delta \in (0,1)$. Define the confidence set $$\cC_t(\cF,\delta) = \{f \in \cF | \sum_{s=1}^{t-1} (f(x_s,a_s,a'_s) - \widehat{f}_t(x_s, a_s, a'_s))^2 \leq \beta_t(\cF, \delta) $$ Let $\cE_t(\delta)$ be the event that $f^* \in \cC_t(\cF,\delta)$. Then, $\PP[\cE_t(\delta)] \geq 1 - \delta$. Further, $\PP\left[\cap_{t=1}^T \cE_t(\delta/T)\right] \geq 1 - \delta$.
\end{lemma}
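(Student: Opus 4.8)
The plan is to obtain the confidence set from the standard concentration bound for nonlinear least-squares regression, combining the self-bounding property of the squared loss with a covering-number union bound over $\cF$; this is precisely the route of Lemma A.1 of~\cite{chen2022human} (see also~\cite{ayoub2020model}), specialized to Bernoulli preference noise. Throughout write $z_s=(x_s,a_s,a'_s)$ and $\eta_s = y_s - f^*(z_s)$. Since $y_s \sim \texttt{Ber}(f^*(z_s))$ conditioned on the history $\mathcal{F}_{s-1}$, and $z_s$ is $\mathcal{F}_{s-1}$-measurable (it is chosen from past data via~\eqref{eq:gen-action}--\eqref{eq:gen-context}), the $\{\eta_s\}$ form a martingale difference sequence with $\EE[\eta_s\mid\mathcal{F}_{s-1}]=0$, $\lvert\eta_s\rvert\le 1$, and conditional variance at most $1/4$.

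First I would exploit optimality of $\widehat f_t$ in~\eqref{eq:gen-estimate}. Since $f^*\in\cF$ and $\widehat f_t$ minimizes $\sum_{s<t}(y_s-f(z_s))^2$, substituting $y_s=f^*(z_s)+\eta_s$ and expanding the two squared losses yields the self-bounding inequality
\begin{align*}
\sum\nolimits_{s=1}^{t-1}\big(\widehat f_t(z_s)-f^*(z_s)\big)^2 \;\le\; 2\sum\nolimits_{s=1}^{t-1}\eta_s\big(\widehat f_t(z_s)-f^*(z_s)\big).
\end{align*}
Note the empirical quadratic I must control already sits on the left, so any bound of the form $\tfrac12\sum(\cdot)^2+(\text{constant})$ on the right will be absorbable into it.

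The main obstacle is that $\widehat f_t$ is data-dependent, so I cannot apply a martingale tail bound to $\sum_s\eta_s(\widehat f_t(z_s)-f^*(z_s))$ directly; I would resolve this by discretizing $\cF$. For a \emph{fixed} $f\in\cF$ the summands $\eta_s(f(z_s)-f^*(z_s))$ are bounded martingale differences (the multiplier $f(z_s)-f^*(z_s)$ is $\mathcal F_{s-1}$-measurable), so by Hoeffding's lemma $\EE[\exp(\lambda\eta_s(f(z_s)-f^*(z_s)))\mid\mathcal F_{s-1}]\le\exp(\lambda^2(f(z_s)-f^*(z_s))^2/8)$, making $\exp\!\big(\lambda\sum_{s<t}\eta_s(f(z_s)-f^*(z_s))-\tfrac{\lambda^2}{8}\sum_{s<t}(f(z_s)-f^*(z_s))^2\big)$ a nonnegative supermartingale. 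Ville's/Markov's inequality with $\lambda=2$ then gives, for this fixed $f$ and with probability $\ge1-\delta'$,
\begin{align*}
2\sum\nolimits_{s<t}\eta_s(f(z_s)-f^*(z_s))\;\le\;\tfrac12\sum\nolimits_{s<t}(f(z_s)-f^*(z_s))^2+\log(1/\delta').
\end{align*}
I would apply this with $\delta'=\delta/\cN(\cF)$ and union-bound over every element of a minimal $(1/t,\lVert\cdot\rVert_\infty)$-cover $\cF^{1/t}$ of size $\cN(\cF)$, then transfer the bound to $\widehat f_t$ through its nearest cover element $g$: the replacement costs $\sum_{s<t}\lvert\eta_s\rvert\,\lvert\widehat f_t(z_s)-g(z_s)\rvert\le(t-1)/t\le1$ in the linear term, and an analogous $O(1)$ slack when passing from $\sum(g-f^*)^2$ to $\sum(\widehat f_t-f^*)^2$, since $\lVert g-\widehat f_t\rVert_\infty\le1/t$ and all functions take values in $[0,1]$.

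Combining the self-bounding inequality with the covered concentration bound, the $\tfrac12\sum(\widehat f_t-f^*)^2$ term produced on the right is absorbed into the left, leaving $\sum_{s<t}(\widehat f_t(z_s)-f^*(z_s))^2\le\beta_t(\cF,\delta)$ after matching constants to $2\log(2\cN(\cF)/\delta)+2\sqrt{\log(4t(t+1)/\delta)}+4$; the leading $2\log(2\cN(\cF)/\delta)$ is the cover-plus-confidence cost, the $+4$ absorbs the discretization slack, and the $\sqrt{\log(4t(t+1)/\delta)}$ arises from carrying the variance bound ($\le1/4$) through a Freedman-type refinement of the exponential inequality and weighting the rounds so that $\sum_{t\ge1}\delta/(t(t+1))\le\delta$, which makes the guarantee hold uniformly in $t$. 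This yields $\PP[\cE_t(\delta)]\ge1-\delta$ (indeed simultaneously over all $t$). The finite-horizon claim $\PP[\cap_{t=1}^T\cE_t(\delta/T)]\ge1-\delta$ then follows either from this uniform-in-$t$ statement or, more crudely, by instantiating the single-round bound at level $\delta/T$ and taking a union over $t\in[T]$. I expect the Hoeffding/Freedman supermartingale step together with the clean transfer across the cover to be the crux, and would defer the precise constant bookkeeping to~\cite{chen2022human}.
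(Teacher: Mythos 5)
Your proposal is correct and takes essentially the same route as the paper: the paper's proof simply invokes Theorem 5 of~\cite{ayoub2020model} (restated as Lemma~\ref{lemma:function-approximation-confidence-set-basic-result-from-ayoub-et-al}) with $C=1$, $\alpha = 1/t$, and subgaussian variance proxy $1/4$ for the Bernoulli noise, followed by exactly your union bound over $t \in [T]$ at level $\delta/T$. The steps you write out --- the least-squares self-bounding inequality, the Hoeffding supermartingale for a fixed $f$, the union bound over a $(1/t, \lVert\cdot\rVert_\infty)$-cover, and the $O(1)$ discretization transfer --- are precisely the proof of that cited theorem, so you have merely inlined the black box; indeed, since Bernoulli noise is bounded, your crude $\lvert \eta_s \rvert \le 1$ bound renders the $\sqrt{\log(4t(t+1)/\delta)}$ slack (needed in~\cite{ayoub2020model} for unbounded subgaussian noise) unnecessary, so your argument establishes the stated $\beta_t(\cF,\delta)$ a fortiori.
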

\begin{proof}
    The proof is a direct extension of lemma~\ref{lemma:function-approximation-confidence-set-basic-result-from-ayoub-et-al} by observing that in our case, the subgaussianity parameter $\sigma = \nicefrac{1}{4}$ since our rewards are Bernoulli and by setting $\alpha = \nicefrac{1}{t}$. Moreover, $C = 1$ in our case. Finally, since $\cE_t(\delta/t)$ holds with probability at least $1-\delta/t$, by union bound we can show that $\PP\left[\cap_{t=1}^T \cE_t(\delta/T)\right] = 1 - \PP\left[\cup_{t=1}^T \overline{\cE}_t(\delta/T)\right] \geq 1 - \sum_{t=1}^T \PP[\overline{\cE}_t(\delta/T)] \geq 1 - \delta$.
\end{proof}
Hereon, we will assume that $\cE_t(\delta/T)$ holds for all $t \in [T]$. All subsequent guarantees will be proved under this event. The next result shows that for each context $x$, the optimal action $a^*(x)$ lies in $\cA_t(x)$ for all $t$.
\begin{lemma}
\label{lemma:a*-lies-in-A_t-gen-func-approx}
    For a given context $x \in \cX$, let $\{\cA_s(x)\}_{s=0}^t$ be defined as follows: (a) $\cA_0(x) = \cA$ and (b) $\cA_s(x) = \{a \in \cA_{s-1}(x) \mid \widehat{f}_s(x,a,a') + b_s(x,a,a') \geq \frac{1}{2}\ \forall a' \in \cA_{s-1}(x) \} $. Then, we have, $a^*(x) \in \cA_s(x)$ for all $s \in [t]$.
\end{lemma}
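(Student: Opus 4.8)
The plan is to argue by induction on $s \in \{0,1,\ldots,t\}$, working throughout under the conditioning event $\cE_s(\delta/T)$ (so that $f^* \in \cC_s(\cF,\delta)$ for every $s$, as guaranteed by Lemma~\ref{lemma:func-approx-confidence-set}). The base case $s=0$ is immediate since $\cA_0(x)=\cA \ni a^*(x)$ by Assumption~\ref{assumption:condorcet-winner-general-function-approx}. For the inductive step, I would assume $a^*(x) \in \cA_{s-1}(x)$ and show that $a^*(x)$ satisfies the defining inclusion criterion~\eqref{eq:gen-opt} for $\cA_s(x)$, namely that $\widehat f_s(x,a^*(x),a_0) + b_s(x,a^*(x),a_0) \geq \nicefrac12$ for every $a_0 \in \cA_{s-1}(x)$.

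The key observation is that both $f^*$ and the least-squares estimate $\widehat f_s$ belong to the confidence set $\cC_s(\cF,\delta)$: the former under the conditioning event, and the latter trivially, because $\widehat f_s$ makes the left-hand side of the constraint in~\eqref{eq:gen-conf} equal to zero. Consequently, by the definition of the exploration bonus in~\eqref{eq:gen-bonus} as the maximal disagreement of any two members of the confidence set,
\begin{align*}
b_s(x,a^*(x),a_0) \;\geq\; \bigl\lvert f^*(x,a^*(x),a_0) - \widehat f_s(x,a^*(x),a_0)\bigr\rvert \;\geq\; f^*(x,a^*(x),a_0) - \widehat f_s(x,a^*(x),a_0)~.
\end{align*}
Rearranging gives $\widehat f_s(x,a^*(x),a_0) + b_s(x,a^*(x),a_0) \geq f^*(x,a^*(x),a_0)$, and the Condorcet-winner property of Assumption~\ref{assumption:condorcet-winner-general-function-approx} yields $f^*(x,a^*(x),a_0) \geq \nicefrac12$. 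Since this holds for every $a_0 \in \cA_{s-1}(x)$ and $a^*(x)$ itself lies in $\cA_{s-1}(x)$ by the inductive hypothesis, the inclusion criterion is met and $a^*(x) \in \cA_s(x)$, closing the induction.

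I do not expect any genuine analytical obstacle here; the argument is essentially bookkeeping. The one point that warrants care—and which I would state explicitly—is the membership $\widehat f_s \in \cC_s(\cF,\delta)$, since the whole bound hinges on the bonus dominating the gap between the estimate and the truth; this is where the definition of $b_s$ as a supremum over \emph{pairs} in the confidence set (rather than over deviations from $\widehat f_s$ alone) is used. I would also note that the argument in fact shows more: for \emph{any} $f \in \cC_s(\cF,\delta)$ the same chain holds, so the pruning step~\eqref{eq:gen-opt} never eliminates $a^*(x)$. This monotonicity of the near-optimal sets is exactly what the later sub-optimality analysis (Theorem~\ref{thm:gen-restated}) will invoke.
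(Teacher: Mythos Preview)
Your proposal is correct and follows essentially the same inductive argument as the paper: both condition on $f^*\in\cC_s(\cF,\delta)$, use the definition of $b_s$ to get $f^*(x,a^*(x),a')-\widehat f_s(x,a^*(x),a')\le b_s(x,a^*(x),a')$, and combine this with the Condorcet property to verify the inclusion criterion for $\cA_s(x)$. The only (minor) difference is that you make the membership $\widehat f_s\in\cC_s(\cF,\delta)$ explicit, whereas the paper leaves this implicit in invoking the definition of $b_s$.
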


\begin{proof}
    The proof is by induction. First note that by definition of $a^*(x)$, $f^*(x,a^*(x), a') \geq 1/2$ for every $a' \in \cA$, and $a^*(x) \in \cA_0(x) = \cA$. Suppose, for some $s > 0$, $a^*(x) \in \cA_{s-1}(x)$. Now, we know that under event $\cE_s(\delta/T)$, $f^* \in \cC_{s}(\cF,\delta/T)$ and thus from definition of $b_s(x,a,a')$, $f^*(x,a,a') - \widehat{f}_s(x,a,a') \leq b_s(x,a,a')$. Thus, for any $a' \in \cA_{s-1}(x)$,
    \begin{align*}
        \frac{1}{2} \leq f^*(x,a^*(x),a') \leq \widehat{f}_s(x,a^*(x),a') + b_s(x,a^*(x),a')
    \end{align*}
    Hence $a^*(x) \in \cA_s(x)$. Thus by induction, $a^*(x) \in \cA_s(x)$ for all $s \in [t]$.
\end{proof}
Now, we are ready to prove Theorem~\ref{thm:gen-restated}.
\begin{proof}[Proof of Theorem~\ref{thm:gen-restated}]
    The idea is to show that our arm elimination technique throws away arms with large suboptimality gaps in every round for every context. Thus, the set $\cA_{t}(x)$ maintains a candidate set of good arms at every time instant. In the end, playing any action from $\cA_T(x)$ ensures that we only play actions from a set of actions that are only $1/\sqrt{T}$ suboptimal. Formally, for any context $x \in \cX$, the suboptimality $R(T,x)$ is upper bounded as follows:
    \begin{align*}
        R(T,x) &= f^*(x, a^*(x), \pi_T(x)) - \frac{1}{2}\\
        &\leq \frac{1}{T}\sum_{t=1}^T \left[\widehat{f}_t(x, a^*(x), \pi_T(x)) + b_t(x, a^*(x), \pi_T(x)) - \frac{1}{2}\right] \tag{$a^*(x), \pi_T(x) \in \cA_{t}(x)\ \forall\ t \in [T]$}\\
        &= \frac{1}{T} \sum_{t=1}^T \left[ \frac{1}{2} -  \widehat{f}_t(x, \pi_T(x), a^*(x)) + b_t(x,a^*(x),\pi_T(x))\right] \tag{$f(x,a,a') + f(x,a',a) = 1 \ \forall\ f \in \cF$}\\
        &= \frac{1}{T} \sum_{t=1}^T \left[ \frac{1}{2} -  \widehat{f}_t(x, \pi_T(x), a^*(x)) + b_t(x,\pi_T(x),a^*(x))\right] \tag{$b_t(x,a,a')=b_t(x,a',a)$}\\
        &\leq \frac{1}{T} \sum_{t=1}^T \left[ b_t(x,\pi_T(x),a^*(x)) + b_t(x,\pi_T(x),a^*(x))\right] \tag{Since $\pi_T(x),a^*(x) \in \cA_{t}(x)$, line 7 Algorithm~\ref{algo:act-con-sel-gen-func-approx}}\\
        &= \frac{2}{T} \sum_{t=1}^T b_t(x,\pi_T(x),a^*(x)) \\
        &\leq \frac{2}{T} \sum_{t=1}^T b_t(x_t, a_t, a'_t)~. \tag{Line 9 of Algorithm~\ref{algo:act-con-sel-gen-func-approx}}
    \end{align*}
    Now we invoke lemma~\ref{lemma:function-approximation-bound-on-sum-of-diameters-similar-to-elliptic-potential-lemma} to bound the RHS.
    \begin{align*}
        R(T,x) \leq \frac{2}{T} \sum_{t=1}^T b_t(x_t, a_t, a'_t) \leq \frac{2}{T}\left[\frac{1}{T} + \min\{d_\cE(\cF), T\} + 2\beta_T(\cF,\delta/T) \sqrt{d_\cE(\cF) T}\right]
    \end{align*}
    Simplifying constants and using the fact that $\min\{a,b\}\leq \sqrt{ab}$ for $a,b>0$, we get $R(T,x) \leq C \beta_T(\cF,\delta/T)\sqrt{\frac{d_\cE(\cF)}{T}}$. Now, using order notation, we have for all $x \in \cX$ with probability at least $1-\delta$, $$R(T,x) \leq \tilde{O}\left(\sqrt{\frac{\log(\cN(\cF)T/\delta)d_\cE(\cF)}{T}}\right)~.$$
    Hence, $R(T) = \max_{x \in \cX} R(T,x) \leq \tilde{O}\left(\sqrt{\frac{\log(\cN(\cF)T/\delta)d_\cE(\cF)}{T}}\right)$.
\end{proof}

\section{Some Useful Results}
\label{section:useful-lemmas}
\begin{lemma}
\label{lemma:novel-self-concordance-analysis}
    Let $z, z' \in \RR$ and $\tilde{\alpha}(z,z') \coloneqq \int_0^1 (1-v) \dot{\sigma}(z + v(z' - z)) dv$. Then, for some $C>1$ ($1.01$ suffices), $$\tilde{\alpha}(z,z') \geq \frac{\dot{\sigma}(z')}{C(2+\lvert z - z'\rvert)^2}$$
\end{lemma}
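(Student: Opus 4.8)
The plan is to reduce the claim to a one-dimensional integral inequality and then to a completely elementary exponential bound. The crucial input is the self-concordance of the sigmoid, $\lvert\ddot{\sigma}\rvert \le \dot{\sigma}$, which I would first repackage as a Lipschitz statement for $\log\dot{\sigma}$: since $\frac{d}{dw}\log\dot{\sigma}(w) = \ddot{\sigma}(w)/\dot{\sigma}(w) = 1 - 2\sigma(w) \in [-1,1]$, the map $w \mapsto \log\dot{\sigma}(w)$ is $1$-Lipschitz, and hence
\begin{align*}
\dot{\sigma}(u) \ge \dot{\sigma}(z')\,e^{-\lvert u-z'\rvert}\quad\text{for all } u \in \RR.
\end{align*}
This is the only place the special structure of $\sigma$ enters.

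Next I would apply this bound pointwise inside the integral defining $\tilde{\alpha}$. Writing $u_v = z + v(z'-z)$ and $\Delta = \lvert z-z'\rvert$, the parametrization gives $\lvert u_v - z'\rvert = (1-v)\Delta$, so $\dot{\sigma}(u_v) \ge \dot{\sigma}(z')\,e^{-(1-v)\Delta}$. Substituting and changing variables via $s = 1-v$ yields
\begin{align*}
\tilde{\alpha}(z,z') \ge \dot{\sigma}(z')\int_0^1 (1-v)e^{-(1-v)\Delta}\,dv = \dot{\sigma}(z')\int_0^1 s\,e^{-s\Delta}\,ds = \dot{\sigma}(z')\cdot\frac{1-(1+\Delta)e^{-\Delta}}{\Delta^2},
\end{align*}
where the last integral is evaluated by a single integration by parts (the $\Delta=0$ case is handled by continuity, the value being $1/2$). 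It then remains to establish the scalar inequality $\frac{1-(1+\Delta)e^{-\Delta}}{\Delta^2} \ge \frac{1}{(2+\Delta)^2}$ for all $\Delta \ge 0$.

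Finally, I would clear denominators and simplify. Using the identity $(2+\Delta)^2 - \Delta^2 = 4(1+\Delta)$, the target inequality is equivalent to $4(1+\Delta) \ge (2+\Delta)^2(1+\Delta)e^{-\Delta}$, i.e.\ (dividing by $1+\Delta>0$) to $(2+\Delta)^2 \le 4e^{\Delta}$, i.e.\ (taking square roots) to $1 + \Delta/2 \le e^{\Delta/2}$ --- the standard bound $e^x \ge 1+x$. This closes the argument and in fact delivers the constant $C=1$, so the stated $C=1.01$ follows a fortiori. There is no genuine obstacle here: the only point requiring care is the bookkeeping in reducing the integral inequality to $e^x\ge 1+x$ and the continuity argument at $\Delta=0$; all the real content lies in the one-line Lipschitz consequence of self-concordance, after which the remaining computation is mechanical.
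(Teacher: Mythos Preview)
Your argument is correct and in fact sharper than the paper's: you obtain the conclusion with $C=1$, whereas the paper only reaches $C=1.01$.

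Both proofs begin identically: use the $1$-Lipschitz property of $\log\dot\sigma$ (equivalently $\dot\sigma(u)\ge\dot\sigma(z')e^{-|u-z'|}$) inside the integral to arrive at $\tilde\alpha(z,z')\ge\dot\sigma(z')\cdot\frac{1-(1+\Delta)e^{-\Delta}}{\Delta^2}$ with $\Delta=|z-z'|$. From this point the paper takes a detour: it weakens $\frac{1-(1+\Delta)e^{-\Delta}}{\Delta^2}$ to $\frac{\Delta-1}{\Delta^3}$ via $(1+\Delta)e^{-\Delta}<1/\Delta$ (a bound that is only useful for large $\Delta$), then brings in a second, independent lower bound $\tilde\alpha(z,z')\ge\dot\sigma(z')e^{-\Delta}/(2+\Delta)$ to cover small $\Delta$, and finally patches the two regimes together by a numerical case analysis yielding $C=1.01$. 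You instead compare $\frac{1-(1+\Delta)e^{-\Delta}}{\Delta^2}$ directly to $\frac{1}{(2+\Delta)^2}$ and reduce, via the identity $(2+\Delta)^2-\Delta^2=4(1+\Delta)$, to the elementary inequality $1+\Delta/2\le e^{\Delta/2}$. This avoids the second auxiliary bound and the numerics entirely, and moreover shows the lemma holds with $C=1$. Your route is strictly more elementary and yields a strictly better constant.
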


\begin{proof} Firstly, note that by property of definite integrals $\int_a^b f(x)dx = \int_a^b f(a+b-x)dx$, we have $$\int_0^1 (1-v) \dot{\sigma}(z + v(z' - z)) dv = \int_0^1 v \dot{\sigma}(z' + v(z - z')) dv$$
Now, we use the fact that $\dot{\sigma}(x) \geq \dot{\sigma}(y) \exp(-\lvert x - y \rvert)$ (see appendix A of~\cite{faury2022jointly}). Let $a = \lvert z - z' \rvert$. Thus,
\begin{align*}
    \int_0^1 v \dot{\sigma}(z' + v(z - z')) dv &\geq \int_0^1 v \dot{\sigma}(z') \exp(-va) dv\\
    &= \dot{\sigma}(z') \int_0^1 v \exp(-va) dv\\
    &= \dot{\sigma}(z') \left(\frac{1- (1+a)e^{-a}}{a^2} \right)\\
    &\geq \dot{\sigma}(z') \left( \frac{1 - 1/a}{a^2} \right)  \tag{$(1+a)e^{-a} < \frac{1}{a}\ \forall a>0$}\\
    &= \dot{\sigma}(z') \left(\frac{a-1}{a^3}\right)
\end{align*}
Again, from appendix A of~\cite{faury2022jointly}, we have that $\tilde{\alpha}(z,z') \geq \dot{\sigma}(z)/(2+a)$ which can again be lower bounded with $\dot{\sigma}(z')e^{-a}/(2+a)$. Combining this lower bound with the above, we get,
\begin{align*}
    \tilde{\alpha}(z,z') \geq \max\left\{ \frac{e^{-a}}{2+a}, \frac{a-1}{a^3} \right\} \dot{\sigma}(z')
\end{align*}
Finally, we can lower bound the RHS with $\frac{\dot{\sigma}(z')}{C(2+a)^2}$ for some $C > 1.01$. To do this, let $f(x) = (2+x)e^{-x}$. Thus, $f'(x) = -(1+x)e^{-x}$ which implies that $f(x)$ is decreasing for $x > 0$. Thus,  $f(x) = \frac{1}{C}$ is satisfied for only one value of $x$ since $f(0)=2>1/C$. For $C=1.01$, this value is $x_0 = 1.1608$.  Then, for  $0 \leq x \leq  x_0$, $e^{-x}/(2+x) \geq  1/C(2+x)^2$.  Again, let $g(x) = (x-1)(x+2)^2/x^3$. Simplifying, we have, $g(x) = 1 + \frac{3}{x} - \frac{4}{x^3}$. It is easy to see that for $x \geq 2/\sqrt{3}$, $\frac{3}{x} \geq \frac{4}{x^3}$ which implies that $g(x) \geq 1$ for all $x \geq x_1 = 2/\sqrt{3}=1.1547$. So, for $x \geq 1.1547$, $g(x) \geq 1/C$ (since $C > 1$) which is equivalent to $\frac{(x-1)}{x^3} \geq \frac{1}{C(x+2)^2}$. The numeric solution to $g(x) = 1/C$ for $C=1.01$ is $x_2 = 1.1525$. It can be checked via the first derivative test that $g(x)$ is increasing in $x_2 \leq x \leq x_1$. Thus, indeed, $g(x) \geq 1/C$ for all $x \geq x_2$. Jence, we have established so far that for $C = 1.01$,
\begin{align*}
    \frac{x-1}{x^3} &\geq \frac{1}{C(x+2)^2} \quad\quad \forall\ x \geq x_2 = 1.1525\\
    \frac{e^{-x}}{2+x} &\geq \frac{1}{C(x+2)^2} \quad\quad \forall\ x \leq x_0 = 1.1608
\end{align*}
Since, $x_2 \leq x_0$, we have the required result that $\max\left\{ \frac{e^{-a}}{2+a}, \frac{a-1}{a^3} \right\} \dot{\sigma}(z') \geq \frac{\dot{\sigma}(z')}{C(2+a)^2}$ for all $a \geq 0$ which completes the proof.
\end{proof}

\begin{lemma}[Elliptic Potential Lemma]
\label{lemma:elliptic-potential-lemma}
Let $\{z_s\}_{s=1}^t$ be a sequence of vectors in $\RR^d$ such that $\lVert z_s \rVert \leq L$ for any $s \in [t]$. Let $V_t = \sum_{s=1}^{t-1} z_s z_s^\intercal + \lambda I$. Then,
$$\sum_{s=1}^t \lVert z_s \rVert^2_{V^{-1}_s} \leq 2d \log\left(1 + \frac{tL^2}{\lambda d}\right).$$
\end{lemma}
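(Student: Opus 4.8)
The plan is to convert the sum of squared Mahalanobis norms into a telescoping sum of log-determinants and then control the terminal determinant through its trace. First I would invoke the matrix determinant lemma: since $V_{s+1} = V_s + z_s z_s^\top$, we have $\det(V_{s+1}) = \det(V_s)\,(1 + \lVert z_s \rVert^2_{V_s^{-1}})$, so that $1 + \lVert z_s\rVert^2_{V_s^{-1}} = \det(V_{s+1})/\det(V_s)$.

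Next I would pass from the quadratic terms to logarithms using the elementary inequality $u \le 2\log(1+u)$, valid for $u \in [0,1]$. To apply it I need each $\lVert z_s\rVert^2_{V_s^{-1}} \le 1$; this follows because $V_s \succeq \lambda I$ gives $\lVert z_s \rVert^2_{V_s^{-1}} \le \lVert z_s\rVert^2/\lambda \le L^2/\lambda$, which is at most $1$ in the regime where the lemma is applied (equivalently one inserts $\min(1,\cdot)$, as in the standard formulation). Combining this with the determinant identity and telescoping over $s = 1,\dots,t$ (noting $V_1 = \lambda I$) yields
$$\sum_{s=1}^t \lVert z_s\rVert^2_{V_s^{-1}} \le 2\sum_{s=1}^t \log\frac{\det(V_{s+1})}{\det(V_s)} = 2\log\frac{\det(V_{t+1})}{\det(\lambda I)}.$$

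It then remains to bound $\det(V_{t+1})$. I would use the AM--GM inequality on the eigenvalues: the determinant is at most $(\mathrm{tr}(V_{t+1})/d)^d$, and since $\mathrm{tr}(V_{t+1}) = d\lambda + \sum_{s=1}^t \lVert z_s\rVert^2 \le d\lambda + tL^2$, we get $\det(V_{t+1}) \le (\lambda + tL^2/d)^d$. Substituting and using $\det(\lambda I) = \lambda^d$ gives $2\log(\det(V_{t+1})/\lambda^d) \le 2d\log(1 + tL^2/(\lambda d))$, which is exactly the claim.

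The only delicate point is the step $u \le 2\log(1+u)$, which requires each per-round term to be bounded by $1$; this is where one must either rely on a sufficiently large regularizer $\lambda$ or, more robustly, replace $\lVert z_s\rVert^2_{V_s^{-1}}$ by $\min(1, \lVert z_s\rVert^2_{V_s^{-1}})$ before taking logarithms. Everything else---the determinant identity, the telescoping, and the trace--determinant AM--GM bound---is routine, so I expect no obstacle beyond tracking the off-by-one indexing $V_s = \sum_{r=1}^{s-1} z_r z_r^\top + \lambda I$.
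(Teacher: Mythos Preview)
The paper does not actually prove this lemma; it is simply stated in the appendix as a standard auxiliary result without any accompanying argument. Your proposal is the classical proof (matrix determinant lemma, telescoping log-determinants, then AM--GM on eigenvalues to bound $\det(V_{t+1})$), and it is correct modulo the caveat you already flagged about needing each $\lVert z_s\rVert^2_{V_s^{-1}}\le 1$ or replacing the summand by $\min(1,\lVert z_s\rVert^2_{V_s^{-1}})$; there is nothing further to compare.
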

Now, we present the confidence set properties of function approximation. We use the same notations as~\cite{ayoub2020model}.

Let $(X_p, Y_p)_{p\geq1}$ be a sequence of random elements, $X_p \in \cX$ for some measurable set $\cX$ and $Y_p \in \RR$. Let
$\cF$ be a subset of the set of real-valued measurable functions with domain $\cX$. Let $\FF = (\FF_p)_{p\geq 0}$ be a filtration such that for all $p \geq 1, (X_1, Y_1, \dots , X_{p-1}, Y_{p-1}, X_p)$ is $\FF_{p-1}$ measurable and such that there exists some function $f^* \in \cF$ such that $\EE[Y_p \mid \FF_{p-1}] = f^*(X_p)$ holds for all $p \geq 1$. The (nonlinear) least-squares predictor given $(X_1, Y_1,\ldots, X_t, Y_t)$ is $\widehat{f}_t = \argmin_{f\in \cF} \sum^{t}_{p=1} (f(X_p) - Y_p)^2$. We say that $Z$ is conditionally
$\rho$-subgaussian given the $\sigma$-algebra $\FF$ if for all $\lambda \in \RR,\ \log \EE[\exp(\lambda Z)\mid \FF] \geq \frac{1}{2}\lambda^2 \rho^2$. For $\alpha > 0$, let $N_\alpha$ be the $\lVert \cdot \rVert_\infty$-covering number of $\cF$ at scale $\alpha$. That is, $N_\alpha$ is the smallest integer for which there exist $\cG \subset \cF$ with
$N_\alpha$ elements such that for any $f \in \cF$, $\min_{g\in \cG} \lVert f - g \rVert_\infty \leq \alpha$. For $\beta > 0$, define $\cF_t(\beta) = \{f \in \cF : \sum_{s=1}^t (f(X_s) - \widehat{f}_t(X_p))^2 \leq \beta\}$.

\begin{lemma}[Theorem 5 of~\cite{ayoub2020model}]
\label{lemma:function-approximation-confidence-set-basic-result-from-ayoub-et-al}
Let $\FF$ be the filtration defined above and assume that the functions in $\cF$ are bounded by the
positive constant $C > 0$. Assume that for each $s \geq 1$, $(Y_p - f^*(X_p))_p$ is conditionally $\sigma$-subgaussian given $\FF_{p-1}$. Then, for any $\alpha > 0$, with probability $1 - \delta$, for all $t \geq 1, f^* \in \cF_t(\beta_t(\delta, \alpha))$, where

$$\beta_t(\delta, \alpha) = 8\sigma^2 \log(2N_\alpha/\delta) + 4t\alpha\left(C + \sqrt{\sigma^2 \log(4t(t + 1)/\delta)}\right)~.$$

\end{lemma}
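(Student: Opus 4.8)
The plan is to establish confidence-set containment $f^*\in\cF_t(\beta_t(\delta,\alpha))$ through the classical least-squares \emph{basic inequality}, combined with a covering argument and a self-normalized martingale tail bound, following the template of~\cite{russo2013eluder,ayoub2020model}. Write $\eta_p = Y_p - f^*(X_p)$ (a conditionally $\sigma$-subgaussian, mean-zero sequence by hypothesis) and $\Delta_p = \widehat f_t(X_p) - f^*(X_p)$. Since $f^*\in\cF$ and $\widehat f_t$ minimizes the empirical squared loss, $\sum_{p=1}^t (\widehat f_t(X_p) - Y_p)^2 \le \sum_{p=1}^t (f^*(X_p) - Y_p)^2$. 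Using $\widehat f_t(X_p)-Y_p = \Delta_p - \eta_p$, expanding both sides and cancelling the common $\sum_p \eta_p^2$ term yields the basic inequality
\begin{equation*}
S_t := \sum\nolimits_{p=1}^t \Delta_p^2 \le 2\sum\nolimits_{p=1}^t \eta_p \Delta_p~.
\end{equation*}
This reduces the goal to bounding the right-hand side, a correlation between the noise and the prediction error, by $\tfrac12 S_t$ plus additive terms of the order appearing in $\beta_t(\delta,\alpha)$.

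The difficulty is that $\widehat f_t$, hence $\Delta_p$, is data-dependent, so one cannot directly apply concentration for a fixed function. I would resolve this with a minimal $\alpha$-cover $\cG$ of $\cF$ in $\norm{\cdot}_\infty$ with $\abs{\cG}=N_\alpha$: pick $g\in\cG$ with $\norm{g-\widehat f_t}_\infty\le\alpha$ and split $\Delta_p = (g(X_p)-f^*(X_p)) + (\widehat f_t(X_p)-g(X_p))$, the last term bounded by $\alpha$ pointwise. For each \emph{fixed} $g$, the increments $\eta_p(g(X_p)-f^*(X_p))$ are martingale differences, since $X_p$ is $\FF_{p-1}$-measurable and $\eta_p$ is conditionally subgaussian. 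Hence, writing $w_p^g = g(X_p)-f^*(X_p)$, the exponential process $\exp\big(\lambda\sum_p \eta_p w_p^g - \tfrac{\lambda^2\sigma^2}{2}\sum_p (w_p^g)^2\big)$ is a supermartingale, and Ville's inequality with a union bound over $\cG$ (failure budget $\delta/2$, split as $\delta/(2N_\alpha)$ per $g$) gives, uniformly in $t$,
\begin{equation*}
\sum\nolimits_{p=1}^t \eta_p w_p^g \le \tfrac{\lambda\sigma^2}{2}\sum\nolimits_{p=1}^t (w_p^g)^2 + \tfrac1\lambda\log(2N_\alpha/\delta)~.
\end{equation*}

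For the discretization residual $\sum_p \eta_p(\widehat f_t-g)(X_p)$, bounded by $\alpha\sum_{p\le t}\abs{\eta_p}$, I would apply the subgaussian tail bound to each $\abs{\eta_p}$ with a union bound over the pairs $(p,t)$---allocating failure probability $\delta/(4t(t+1))$ and using the telescoping sum $\sum_t 1/(t(t{+}1))=1$---to get $\abs{\eta_p}\le\sqrt{2\sigma^2\log(4t(t+1)/\delta)}$ simultaneously with probability $\ge 1-\delta/2$; together with the crude bound $\abs{(\widehat f_t-g)(X_p)}\le 2C$ (functions bounded by $C$) this produces the additive term $4t\alpha\big(C+\sqrt{\sigma^2\log(4t(t+1)/\delta)}\big)$. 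Finally I substitute both bounds into the basic inequality, convert $\sum_p (w_p^g)^2$ back to $S_t$ via $(w_p^g)^2 \le 2\Delta_p^2 + 2\alpha^2$, and choose $\lambda = 1/(4\sigma^2)$ so that the coefficient of $S_t$ on the right is $\le\tfrac12$; rearranging isolates $S_t\le\beta_t(\delta,\alpha)$ (absorbing $\alpha^2 t$ into the $\alpha C t$ term and collecting constants). Since both high-probability events are uniform over all $t\ge1$, the containment holds for every $t$ on an event of probability $\ge 1-\delta$.

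The main obstacle I expect is the coupled bookkeeping in this last step: making the martingale bound \emph{uniform in $t$} (so the containment holds for all $t$ at once, not merely a fixed horizon), while choosing $\lambda$ and the $(a+b)^2$ split so that the resulting self-bounding inequality $S_t \le c\,S_t + (\text{const})$ closes with $c<1$---this is precisely what converts ``noise correlated with error'' into ``error is small.'' Tracking the numerical constants so as to land exactly on $8\sigma^2\log(2N_\alpha/\delta)$ and $4t\alpha\big(C+\sqrt{\sigma^2\log(4t(t+1)/\delta)}\big)$ is routine but delicate, and is where the specific constants in $\beta_t(\delta,\alpha)$ originate.
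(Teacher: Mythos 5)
The paper never proves this lemma in-house --- it is imported verbatim as Theorem 5 of Ayoub et al., and your blind argument is essentially that source's standard proof: the least-squares basic inequality $S_t \le 2\sum_p \eta_p\Delta_p$, a Ville/supermartingale bound uniformly in $t$ union-bounded over an $\alpha$-cover, and a per-step subgaussian bound (with the $\sum_p 1/(p(p+1))=1$ allocation, over $p$ rather than over pairs $(p,t)$ as you phrase it) for the discretization residual. One caveat on the constants you defer to bookkeeping: with your exact choices ($(w_p^g)^2 \le 2\Delta_p^2 + 2\alpha^2$ and $\lambda = 1/(4\sigma^2)$) the self-bounding inequality closes at $S_t \le 16\sigma^2\log(2N_\alpha/\delta) + t\alpha^2 + 4t\alpha\sqrt{2\sigma^2\log(4t(t+1)/\delta)}$ --- the right form but a factor $\approx 2$ off the stated $\beta_t$, and the exact constants in the lemma instead come from covering the empirical-loss difference between $\widehat f_t$ and its cover point directly, which is where the bound-by-$C$ genuinely enters (via $\lvert g(X_p)+\widehat f_t(X_p)-2Y_p\rvert \le 2C+2\lvert \eta_p\rvert + O(\alpha)$), not in the noise-residual term $\alpha\sum_p\lvert\eta_p\rvert$ where you invoke $2C$ redundantly; since the paper uses the lemma only through the order of $\beta_t$, this discrepancy is cosmetic rather than a gap.
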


\begin{lemma}[Lemma 2 of~\cite{russo2013eluder}]
\label{lemma:function-approximation-bound-on-sum-of-diameters-similar-to-elliptic-potential-lemma}
Let $\cF \in B_\infty(\cX, C)$ be a set of functions bounded by $C > 0$, $(\cF_t)_{t\geq 1}$ and $(x_t)_{t\geq 1}$ be sequences such that $\cF_t \subset \cF$ and $x_t \in \cX$ hold for $t \geq 1$. Let $\cF\mid_{x_{1:t}} =
\{(f (x_1), \dots, f (x_t)) : f \in \cF \} (\subset \RR^t)$ and for $S \subset R^t$, let $diam(S) = \sup_{u,v\in S} \lVert u - v \rVert_2$ be the diameter of $S$. Then, for any $T \geq 1$ and $\alpha > 0$ it holds that
$$\sum^T_{t=1} diam(\cF_t\mid_{x_t}) \leq \alpha + C(d \wedge T ) + 2 \delta T \sqrt{dT}$$
where $\delta_T = \max_{1\leq t \leq T} diam(\cF_t\mid_{x_t})$ and $d = dim_\cE (\cF, \alpha) $ is the Eluder Dimension of $\cF$.
\end{lemma}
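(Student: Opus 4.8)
The plan is to prove this purely combinatorially, via the eluder-dimension machinery of~\cite{russo2013eluder}, by treating the confidence budget encoded in $\delta_T$ as the resource that limits how often a fresh query can have large width. Write $w_t := \mathrm{diam}(\cF_t\mid_{x_t}) = \sup_{f,g\in\cF_t}\lvert f(x_t)-g(x_t)\rvert$ and set $d := \dim_{\cE}(\cF,\alpha)$. The only structural input beyond $\cF_t\subseteq\cF$ is the confidence-set consistency that defines these sets in the application: for any $f,g\in\cF_t$ one has $\sum_{s=1}^{t-1}(f(x_s)-g(x_s))^2\le\beta$, where $\beta$ is the squared confidence radius (this is precisely what $\delta_T$ stands for, and in Theorem~\ref{thm:gen-restated} it is $\beta_T(\cF,\delta/T)$). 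Recall from Definition~\ref{def:eluder} that a point $z$ is $\varepsilon$-\emph{dependent} on a subsequence $B$ if every $f,g\in\cF$ with $\sqrt{\sum_{s\in B}(f(x_s)-g(x_s))^2}\le\varepsilon$ also satisfy $\lvert f(z)-g(z)\rvert\le\varepsilon$, and $\varepsilon$-\emph{independent} otherwise. The first step is a \textbf{dependence count}: if $w_t>\varepsilon$, then $x_t$ is $\varepsilon$-dependent on strictly fewer than $\beta/\varepsilon^2$ disjoint subsequences of $(x_1,\dots,x_{t-1})$. Indeed, picking witnesses $f,g\in\cF_t$ with $\lvert f(x_t)-g(x_t)\rvert>\varepsilon$, the contrapositive of dependence forces $\sum_{s\in B}(f(x_s)-g(x_s))^2>\varepsilon^2$ for each such subsequence $B$, so summing over $K$ disjoint ones gives $K\varepsilon^2<\sum_{s<t}(f(x_s)-g(x_s))^2\le\beta$, i.e.\ $K<\beta/\varepsilon^2$.

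Next comes the \textbf{bucketing} step, which bounds $N_\varepsilon := \lvert\{t\le T: w_t>\varepsilon\}\rvert$. I process the large-width rounds in time order while maintaining disjoint ordered buckets $B_1,B_2,\dots$, placing the current point $x_t$ into a bucket on whose current contents $x_t$ is $\varepsilon$-independent, and opening a \emph{new} bucket only when $x_t$ is $\varepsilon$-dependent on every existing bucket. By construction each bucket is an $\varepsilon$-independent sequence, so by Definition~\ref{def:eluder} its length is at most $\dim_{\cE}(\cF,\varepsilon)\le d$ for $\varepsilon\ge\alpha$; and by the dependence count a new bucket can be opened only while the number of buckets is below $\beta/\varepsilon^2$. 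Hence there are $O(\beta/\varepsilon^2)$ buckets and $N_\varepsilon\le(\beta/\varepsilon^2+1)\,d$.

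Finally I would \textbf{integrate over the threshold}. Sorting the widths decreasingly as $w_{(1)}\ge\cdots\ge w_{(T)}$, applying $w_{(t)}>\varepsilon\Rightarrow t\le(\beta/\varepsilon^2+1)d$ at $\varepsilon=w_{(t)}$ yields $w_{(t)}\le\delta_T\sqrt{d/(t-d)}$ for $t>d$, with $\delta_T=\sqrt\beta$, each width being also trivially at most the uniform bound $C$. Splitting $\sum_t w_t=\sum_{t:\,w_t\le\alpha}w_t+\sum_{t:\,w_t>\alpha}w_t$, the first sum is of $\alpha$-scale (at most $T\alpha$, which is the additive term after the choice $\alpha=1/T$), the top $d$ terms contribute at most $C(d\wedge T)$, and the tail sums to $\delta_T\sqrt d\sum_{t>d}(t-d)^{-1/2}\le 2\delta_T\sqrt d\,\sqrt T=2\delta_T\sqrt{dT}$, giving the claimed $\alpha+C(d\wedge T)+2\delta_T\sqrt{dT}$.

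The main obstacle is the bucketing step: one must simultaneously guarantee that every bucket remains an $\varepsilon$-independent sequence (so its size is governed by the eluder dimension) \emph{and} that the number of buckets is governed by the confidence budget through the disjoint-dependence count. Getting the disjointness bookkeeping right, and correctly identifying the budget $\beta\,(\leftrightarrow\delta_T^2)$ that enters both the dependence count and the final $\sqrt{dT}$ factor, is the delicate part; by contrast the threshold integration is a routine sorting-and-summation calculation.
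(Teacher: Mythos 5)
The paper itself offers no proof of this lemma---it is imported verbatim (with citation) from~\cite{russo2013eluder}, in the restated form used by~\cite{ayoub2020model}---so the comparison is with the source's argument, and yours is essentially that argument: the dependence count, the bucketing into $\varepsilon$-independent subsequences, and the sort-and-sum over thresholds are exactly the three steps of the standard proof, and you execute each of them correctly. The one place you had to guess, however, deserves a flag, because your guess silently repairs a typo in the statement as transcribed here. As literally written, with $\delta_T = \max_{1\le t\le T}\mathrm{diam}(\cF_t|_{x_t})$ a maximum of \emph{single-point} widths, the lemma is false: take $\cX=\{x\}$, $\cF=\cF_t=\{f_1,f_2\}$ with $|f_1(x)-f_2(x)|=C$, and $x_t=x$ for all $t$; then $\dim_\cE(\cF,\alpha)=1$ for $\alpha<C$ and $\delta_T=C$, yet $\sum_{t=1}^T \mathrm{diam}(\cF_t|_{x_t})=CT$, which exceeds $\alpha+C+2C\sqrt{T}$ for large $T$. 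The correct statement (as in~\cite{russo2013eluder,ayoub2020model}) defines $\delta_T=\max_{1\le t\le T}\mathrm{diam}(\cF_t|_{x_{1:t}})$, the $\ell_2$-diameter over the whole prefix in $\RR^t$. With that definition, the ``confidence-set consistency'' you introduce as an external hypothesis is automatic rather than assumed: $f,g\in\cF_t$ immediately gives $\sum_{s<t}(f(x_s)-g(x_s))^2\le \mathrm{diam}(\cF_t|_{x_{1:t}})^2\le\delta_T^2$, so your budget is $\beta=\delta_T^2$ and the rest of your argument goes through verbatim, purely combinatorially, with no reference to how the sets $\cF_t$ arise. As written, though, you have proved a conditional variant of the lemma rather than the lemma itself; you should derive the budget from the (corrected) definition of $\delta_T$ instead of positing it.

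Two smaller points. First, your small-width bucket contributes $\alpha T$, not $\alpha$, so what your argument actually yields is $\alpha T + C(d\wedge T)+2\delta_T\sqrt{dT}$; this matches the original formulation in~\cite{russo2013eluder} (which instantiates $\alpha=1/T$ and writes the additive term as $1$), and it is harmless where this paper applies the lemma---in the proof of Theorem~\ref{thm:gen-restated} the first term $\frac{1}{T}$ would become $1$, which changes nothing since the $\beta_T\sqrt{d_\cE T}$ term dominates---but it does not literally match the ``$+\,\alpha$'' in the statement, and you should say so rather than fold it into the choice $\alpha=1/T$. Second, in the sorting step the implication $w_{(t)}>\varepsilon \Rightarrow t\le(\delta_T^2/\varepsilon^2+1)d$ holds for every $\varepsilon\ge\alpha$ strictly below $w_{(t)}$, so take $\varepsilon\uparrow w_{(t)}$ (or apply it at $\varepsilon=w_{(t)}$ with the non-strict count) to keep the inequality $w_{(t)}^2(t-d)\le\delta_T^2 d$ honest; likewise it is worth stating explicitly that each bucket remains an $\varepsilon$-independent \emph{sequence} because new points are always appended after the elements they were tested against---both details you have implicitly right. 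With the budget derivation replaced as above, your proof is correct and coincides with the cited source's.
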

\section{Hyperparameter Details}
\label{appendix:experiment-details}
Any hyperparameters not mentioned use the default values in the TRL library.\footnote{\href{https://huggingface.co/docs/trl/index}{huggingface.co/docs/trl/index}} 

\begin{table}[!htbp]
\centering
\caption{Hyperparameters used in the experiment}
\label{tab:gpt2-exp}
\begin{tabular}{c c}
    \toprule
    Parameter & Value \\
    \toprule
    regularizer in \texttt{APO} & 1e-5 \\
    beta & 0.1 \\
    learning rate & 1.41e-5 \\
    batch size & 16 \\
    max length & 512 \\
    max prompt length & 128 \\
    \toprule
\end{tabular}
\end{table}
\textbf{Experiments on Controlled Sentiment Generation.}
All experiments here were run on a Tesla T4 (16GB) GPU. The hyperparameters for the experiments are outlined in Table \ref{tab:gpt2-exp}. 
\begin{table}[!htbp]
\centering
\caption{Hyperparameters used in PPO Training}
\label{tab:ppo-anthropic-details}
\begin{tabular}{c c || c c}
    \toprule
    Parameter & Value & Parameter & Value\\
    \toprule
    learning rate & 1e-4 & max new tokens & 256 \\
    lora-rank & 8 & top\_k & 80 \\
    lora\_alpha & 32 & top\_p & 1 \\
    lora\_dropout & 0.05 & temperature & 1.1 \\
    batch size & 16 & do\_sample & True \\
    mini batch size & 8 & &\\
    \toprule
\end{tabular}
\end{table}

\textbf{Experiment with Anthropic Dataset.}
All experiments here were run on an A100 (80 GB) GPU. For reward learning, we use regularizer $\lambda = 1\times 10^{-5}$. The learning rate for both \texttt{APO} and random is set to $1\times 10^{-2}$ with weight decay of $1\times 10^{-5}$. After every epoch of data collection, the training step on the logistic loss is run for $10$ epochs.
The details for the PPO configuration are the same for both \texttt{APO} and random. The details are given in Table~\ref{tab:ppo-anthropic-details}.



\end{document}